\documentclass[12pt,a4paper]{article}

\usepackage[utf8]{inputenc}
\usepackage[T1]{fontenc}
\usepackage{amsmath,amssymb,amsthm}
\usepackage{hyperref}
\usepackage{geometry}
\usepackage{setspace}
\usepackage{natbib}
\newtheorem{theorem}{Theorem}[section]
\newtheorem{definition}{Definition}[section]
\newtheorem{axiom}{Axiom}[section]
\newtheorem{corollary}{Corollary}[theorem]
\theoremstyle{remark}

\begin{document}

\title{Predictive Set Theory: A Generative Framework for Cognitive Architecture with Operationalized Core Mechanisms}
\author{YiYang Yu\thanks{Email: sitlas182056@gmail.com}}
\date{}
\maketitle

\begin{abstract}
Predictive processing theories portray the brain as a hierarchical prediction engine that minimizes prediction error, yet they lack operational definitions for the structure of a ``prediction,'' the standardized response to a prediction error, and the mechanism that maintains consistency across successive updates. Bayesian cognitive science attempts to subsume all uncertainty under probabilistic belief updating, but it presupposes a closed hypothesis space and provides no generative account of how the objects over which probabilities are distributed become discrete, identifiable referents in the first place. This paper introduces Predictive Set Theory (PST), a formal generative framework that reconstructs cognitive architecture from first principles. PST anchors cognition in a minimal set of operations---a sensor formalized as an identity function, set-theoretic state refresh, and three fundamental forms of reference chains (reference, counter-reference, and semi-reference)---and rigorously derives core cognitive functions including state sequences, demand, comparison, efficiency, and finite-horizon probabilistic planning. Rather than modeling neural mechanisms, PST constitutes a design specification for any system that must maintain internal consistency while acting under incomplete information and irreversible risk. The framework offers novel resolutions to classical problems such as Russell's paradox, the cognitive status of G\"{o}delian incompleteness, the grounding of negative feedback, and the comprehension of film editing. The primary purpose of this paper is to establish, through the public academic record, the originality and completeness of the Predictive Set Theory framework.
\end{abstract}

\section{Introduction}

A fundamental challenge in cognitive science is to explain how a bounded system, confronted with incomplete information and irreversible risk, can safely accumulate knowledge and continuously revise itself. Predictive processing theories \cite{Friston2010,Clark2013,Hohwy2013} have offered a powerful answer: the brain is a hierarchical prediction engine that continuously generates top-down predictions and minimizes prediction error through bottom-up signaling. Despite its explanatory appeal, the framework faces a structural difficulty when its core concepts are examined at an operational level. How should a ``prediction'' be formalized---as a probability distribution, a vector, or a symbolic proposition? When a prediction fails, what standardized operation should the system perform---updating the model, marking the unknown, or triggering avoidance? And how is the ``consistency'' of the internal model maintained across successive updates so that knowledge does not dissolve into noise? These questions remain largely unformalized within the predictive processing literature.

Bayesian cognitive science \cite{Knill2004,Doya2007} attempts to subsume all uncertainty under probabilistic belief updating, but its successful operation depends on a frequently overlooked precondition: the hypothesis space must be pre-enumerated and closed. It requires not only that the system already knows all possible hypotheses, but also that the objects involved have already been standardized as discrete, identifiable units to which probabilities can be assigned. However, the more fundamental question of how an object becomes a referable discrete unit in the first place---how the cognitive system itself carves boundaries out of the continuous sensory stream and maintains their identity---receives no generative account within the Bayesian framework. It is precisely at this point that our theory intervenes: before probabilities can be assigned, the discreteness, identity, and referential relations of objects must be established, and this is the operational foundation that set theory and the formalization of reference chains provide.

This paper introduces Predictive Set Theory (PST), a formal generative framework that reconstructs cognitive architecture from first principles. PST is neither an empirical model of neural processing nor a direct competitor to specific Bayesian algorithms; it is a \textit{generative design specification} for a cognitive agent---a set of axioms, definitions, and derived operations that collectively answer the question: What minimal architecture must a system possess to perceive, predict, decide, learn, and self-correct while maintaining internal consistency?

In answering this question, PST accomplishes two foundational tasks.

First, it rigorously formalizes the core concepts of the predictive processing framework. A ``prediction'' is redefined as a \textbf{sequence}---a chain of discrete state-refresh events. Based on the current set of known states, the system constructs candidate subsequent states by traversing transformation templates within its experiential network. Prediction is thus no longer an amorphous probability distribution but a concretely specifiable inferential path that can be expressed as a sequence, verified against future sensory input, and corrected. A ``prediction error'' triggers one of three standardized computational responses, which correspond to three fundamental forms of reference chains: \textbf{reference} (establishing or confirming a positive directional link between objects), \textbf{counter-reference} (declaring the impossibility of a specific referential path, thereby blocking a particular predictive direction), and \textbf{semi-reference} (marking a position where a referential intention has been initiated but the target content has not yet been filled by perception---the system's acknowledgment that ``there is something unknown here''). This triadic system of reference chains constitutes a minimally complete set of cognitive correction operations.

Second, the entire derivation of PST proceeds from a single architectural prerequisite: the \textbf{consistency requirement}. Consistency is not an externally imposed norm but a necessary condition for a cognitive system to generate any coherent decision: an internally contradictory knowledge network would deprive the action-selection function of a determinate output. From this starting point, the framework unfolds in a strictly generative order. Through a convergence analysis of infinite reference chains, it is first proven that any reference chain must converge to a self-referential fixed point in finitely many steps, thereby grounding identity at the most fundamental level---every operable object must be self-identical by default. The sensor is then formalized as an identity function \(S(i)=i\), becoming the sole independent variable in the cognitive architecture. The action-planning function is defined as a conditioned mapping from known states to behavior and is proven not to be an independent variable. On this basis, the fundamental asymmetry between known and unknown states is rigorously demonstrated: known states cannot derive unknown states through internal computation, while the sensor is the only channel through which unknown states are converted into known states. The operation of state refresh---whenever the current sensor output differs from the register object, the system expands the known-state set and appends the previous state to the sequence---is strictly defined. The resulting sequence of known-state sets exhibits a set-inclusion structure that is strictly isomorphic to von Neumann's construction of the ordinals \cite{vonNeumann1923}, thereby generating a before-after order without any external time parameter. Furthermore, any deterministic action-planning function is shown to necessarily embed a minimal demand constraint that takes the function itself as the optimality standard---demand is not an add-on module but a mathematical inevitability of the function's existence. When demand operates in sequence, the dimension of efficiency (shorter sequences are better) is inevitably introduced, causing the choice set to expand and making comparison an unavoidable cognitive operation. Finally, feedback is quantified as an efficiency value based on sequence distance and demand weights, and short-sighted versus far-sighted behaviors are unified under a single parametric model of efficiency polylines. Finite-horizon probabilistic planning then integrates all the above concepts into a complete decision model. The full derivation chain is presented in the Appendix.

Because PST is a design specification rather than an empirical model, its validity rests on internal consistency, conceptual precision, and the explanatory unification it achieves. The framework yields novel resolutions to several classical problems. Russell's paradox is shown to correspond to a non-convergent sequence oscillation, which the system can recognize and suspend by detecting the non-convergence of the reference chain, without logical collapse. G\"{o}delian incompleteness is reinterpreted as the formal boundary that necessarily arises when a counter-reference chain takes itself as its own target---the system can formulate such propositions but cannot complete their truth-value determination without destroying consistency. The quantitative grounding of negative feedback is traced to the combination of sequence distance with demand weights that originate in somatic signals. The cognitive comprehension of film editing---a phenomenon that challenges classical continuity theories---is explained as the viewer's active deployment of reference chains and sequence-verification processes to maintain cross-shot object identity, rather than a passive inheritance of physical continuity.

The primary purpose of this paper is to establish, through the public and citable academic record, the originality and completeness of the Predictive Set Theory framework. Its secondary purpose is to provide a rigorous theoretical foundation for future computational implementations and for the design of artificial cognitive systems with intrinsic safety constraints.

The paper is organized as follows. Chapter~1 establishes the consistency requirement as the architectural prerequisite and grounds identity through a convergence analysis of infinite reference chains. Chapter~2 provides the axiomatic definition of the cognitive agent, strictly distinguishes the sensor from the action-planning function, and proves the asymmetry theorem of known and unknown states. Chapter~3 derives the operational mechanism of state refresh and the set-theoretic construction of sequences. Chapter~4 proves that any deterministic action-planning function necessarily embeds a minimal demand constraint and that the sequential operation of demand inevitably forces the emergence of comparison. Chapter~5 quantifies feedback through efficiency polylines and finite-horizon probabilistic planning, and unifies short-sighted and far-sighted behaviors under a single parametric model. The Appendix presents the complete derivation chain.

\section{Glossary of Core Terms}

\begin{description}
\item[Action-planning function] \(f(K_t) = A_t\). The decision core of the cognitive system, a conditioned mapping from the current set of known states \(K_t\) to a behavioral plan \(A_t\). Its output must strictly depend on known states; it cannot be an independent variable (see Sensor).

\item[Alignment (process)] The sequence-level operation by which two distinct referential nodes are merged so that they share the same sequence position, thereby being treated as a single object. Alignment is not a primitive tag but a functional product of reference chains and sequence operations.

\item[Comparison] The cognitive operation of ordering candidate actions or sequences by their efficiency values when the choice set contains two or more distinguishable elements. Comparison is triggered inevitably when demand operates in sequence and efficiency differences emerge.

\item[Consistency requirement] The architectural prerequisite that a cognitive system must maintain an internally non-contradictory knowledge network. Without consistency, the action-planning function cannot produce a determinate output.

\item[Counter-reference (chain)] A referential relation of the form \(X \not\to Y\) or \(X \mathrel{\text{excludes}} Y\), which declares that a particular referential path is impossible or forbidden. It is one of the three basic forms of reference chains.

\item[Demand] A constraint on the output of the action-planning function, formally expressed as an evaluation function \(\Phi(K_t, A)\) whose maximization yields the selected action. Every deterministic function embeds a minimal demand constraint that takes the function itself as the optimality standard.

\item[Efficiency] The dimension along which shorter sequences are strictly preferred to longer ones when both satisfy the same final demand. Efficiency is quantified as the rate of change of demand value with respect to sequence distance.

\item[Efficiency polyline] A function \(f_d(L)\) mapping sequence distance \(L\) to an efficiency multiplier in \([0,1]\). It satisfies \(f_d(0)=1\), monotonic decrease, and asymptotic approach to zero. Its shape determines whether behavior is short-sighted or far-sighted.

\item[Feedback efficiency value] The quantified evaluation of a state \(s\) with respect to a demand state \(d\): \(\eta(s,d) = w_d \cdot f_d(L(s,d))\), where \(w_d\) is the demand weight and \(L(s,d)\) the shortest sequence distance.

\item[Finite-horizon probabilistic planning] The decision procedure that maximizes expected cumulative efficiency over a bounded number of future state-refresh steps, using the recursive formula \(V_h(s) = \max_a \sum_{s'} P(s'|s,a)[\eta(s') + V_{h-1}(s')]\).

\item[Known-state set] \(K_t\). The cumulative set of all internal objects output by the sensor from the initial moment up to time \(t\), constructed recursively as \(K_{t+1} = K_t \cup \{i_{t+1}\}\).

\item[Minimal demand constraint] The demand function \(F_f(K,A) = -\|A - f(K)\|^2\) that takes the pure-computation output \(f(K)\) as its unique maximum. It is the mathematically inevitable constraint embedded in any deterministic function.

\item[Reference (chain)] The basic cognitive operation by which a set node points to another node, encapsulating the content of ``what this object is.'' Formally, a mapping \(R: \mathcal{O} \to \mathcal{O}\).

\item[Register object] \(E_t\). The internal storage unit that holds the most recent sensor output, serving as the baseline for detecting change via the comparison operation.

\item[Semi-reference (chain)] An incomplete referential link that marks a position where a referential intention has been initiated but the target has not yet been filled by sensory input. It expresses ``there is something unknown here'' without presupposing a closed hypothesis space.

\item[Sensor] \(S(i) = i\). The sole unconditioned variable in the cognitive architecture, formalized as an identity function. It unconditionally outputs internal objects, constituting the only channel through which unknown states are converted into known states.

\item[Sequence] An ordered accumulation of state-refresh events. The sequence of known-state sets \(K_0 \subset K_1 \subset \cdots\) is strictly isomorphic to von Neumann's construction of the ordinals, generating temporal order without an external time parameter.

\item[State refresh] The atomic operation triggered when the current sensor output differs from the register object. It consists of appending the previous known-state set to the sequence, expanding the known-state set with the new internal object, and updating the register object.

\item[Unknown state (relative)] The set difference \(U(K_b \mid K_a) = K_b \setminus K_a\) between two known-state sets. A state is ``unknown'' only relative to a specific earlier known-state set; it has no absolute unknown property.
\end{description}

\newpage

% ==========================================
% CHAPTER 1: THE CONSISTENCY REQUIREMENT
% ==========================================

\section{The Consistency Requirement as an Architectural Prerequisite}

This chapter establishes the foundational architectural prerequisite of Predictive Set Theory. It proceeds in two parts. Section~1.1 argues from the possibility of decision-making that the ``consistency requirement'' is a necessary condition for any cognitive system capable of generating effective behavior. Section~1.2 anchors this requirement at the most fundamental computational level through a convergence analysis of infinite reference chains, proving that a reference chain must default to self-reference as an absolute precondition for any cognitive operation to complete---thereby establishing identity as the irreducible starting point of cognition.

\subsection{Why a Cognitive System Must Maintain Consistency}

Before any specific cognitive operation is introduced, a more fundamental question must be answered: why must a cognitive system maintain internal consistency? This section argues that consistency is neither an accidental virtue nor an externally imposed normative requirement, but an architectural prerequisite that any cognitive system capable of generating coherent decisions must satisfy.

\subsubsection{The Formal Condition of Decision: From Contradiction to Paralysis}

Consider a cognitive system at a given moment facing a set of behavioral options \(A = \{a_1, a_2, \ldots, a_n\}\). The system must select an action \(a^* \in A\) based on its internal knowledge network \(K\). The core of this process is a value-evaluation function \(V_K: A \to \mathbb{R}\), where \(V_K(a)\) represents the system's expected valuation of action \(a\) based on knowledge network \(K\).

A system is said to be capable of effective decision-making if and only if there exists a unique optimal action:
\[
\exists! a^* \in A: V_K(a^*) = \max_{a \in A} V_K(a)
\]
When multiple actions are tied as optimal, the system may employ an arbitrary tie-breaking rule to produce a determinate output without affecting the core argument.

Now suppose the knowledge network \(K\) contains an irreducible contradiction. Specifically, there exists a proposition \(P\) such that the system simultaneously holds both \(P\) and \(\neg P\) as valid beliefs within \(K\). The effect on the value-evaluation function is direct: for any action \(a \in A\) whose valuation depends on \(P\) or \(\neg P\), the system simultaneously faces \(V_K(a) = v_1\) (under the premise that \(P\) is true) and \(V_K(a) = v_2\) (under the premise that \(\neg P\) is true), where \(v_1\) and \(v_2\) need not be equal. Because the system cannot adjudicate between \(P\) and \(\neg P\) in a principled manner---both enjoy equal cognitive standing in \(K\)---it cannot determine the true value of \(V_K(a)\). The value function fractures at the point of contradiction, losing its capacity to provide a unified basis for action selection.

More fundamentally, this fracture is not local. If the contradiction involves causal beliefs about action consequences---for example, ``eating this mushroom causes death'' and ``eating this mushroom does not cause death'' both residing in \(K\)---the value-evaluation function loses all determinate output on safety-critical options. The system falls into a functional paralysis---not a physical shutdown, but an inability to generate any action that can be self-justified as optimal. This is the generative root of ``prediction oscillation'': mutually contradictory beliefs drive mutually contradictory behavioral predictions, depriving the system of behavioral direction.

\subsubsection{The Architectural Status of the Consistency Requirement}

Based on the above analysis, the ``consistency requirement'' acquires a clear architectural definition: \textbf{The consistency requirement is the architectural condition that a cognitive system must satisfy, requiring that its knowledge network \(K\) contain no irreducible contradictions at any moment---or more precisely, that any detected contradiction be processed and resolved by some cognitive operation (such as alignment processes, counter-reference chains, or semi-reference chains), thereby ensuring that the decision function \(f(K)\) can produce a determinate output.}

Three key points must be emphasized.

\textbf{First, the consistency requirement is a formal condition, not a content constraint.} It does not prescribe what the system should believe, only that it cannot simultaneously believe \(P\) and \(\neg P\). What specific content the system believes depends on its perceptual experience, somatic feedback, and social learning, not on the consistency requirement itself. In this sense, the consistency requirement is analogous to the law of non-contradiction in logic---it does not tell thought where to go, but only marks the boundary that thought must not cross.

\textbf{Second, the consistency requirement is a generative starting point, not a derived conclusion.} Within the generative construction of Predictive Set Theory, the consistency requirement does not need to be proven---it is the posited starting point. The only way to ``prove'' why consistency is necessary is to demonstrate that without it, determinate decision-making is impossible, which is precisely what the preceding analysis has accomplished. In logical status, the consistency requirement is comparable to the parallel postulate in geometry or the axiom of extensionality in set theory---it is the initial condition that enables the entire theoretical edifice to unfold.

\textbf{Third, the consistency requirement is not equivalent to the positive-negative feedback principle, but is its condition of possibility.} The positive-negative feedback principle---``pursue positive feedback, avoid negative feedback''---is a content-based decision criterion that guides the system, on the basis of consistency, to select \textit{what} is beneficial to itself. But the absolute prerequisite for this criterion to function is that the system's internal causal knowledge about ``what action brings what feedback'' is consistent and coherent. If the knowledge network contradicts itself, the positive-negative feedback principle loses its viability as a decision criterion---because, amidst contradictory information, it is impossible to determine which option is genuinely ``better.'' Thus, the consistency requirement is logically prior to the positive-negative feedback principle and constitutes the architectural foundation upon which the latter can operate.

\subsubsection{Distinction from Classical Positions}

Positioning consistency as an architectural prerequisite for cognition resonates with several historical positions in philosophy, yet also fundamentally differs from them.

In Kantian philosophy, the unity of transcendental apperception---the requirement that the ``I think'' must be capable of accompanying all my representations---is established as the supreme condition of the possibility of experience. Kant argued that if representations could not be unified under a single self-consciousness, they ``could not be my representations'' and thus could not constitute any intelligible cognitive object. The consistency requirement occupies an analogous structural position in our framework: it is not an empirically discovered fact, but a transcendental condition that makes cognitive experience possible. However, our consistency requirement is strictly confined to the level of cognitive operations---it is a formal condition for ``decisions to be generable,'' not a metaphysical thesis about the ``subject'' itself.

In classical cognitive science and artificial intelligence, consistency is typically treated as an external evaluative standard---a good theory or a good cognitive system \textit{ought} to be logically consistent. This ``ought'' implicitly presupposes the perspective of an external observer. Our position is fundamentally different: consistency is not an external evaluation but an internal necessary condition. An inconsistent system is simply not a functioning decision system at all. Just as a machine with interlocking gears that jam cannot be called an ``engine,'' a system with mutually exclusive internal beliefs cannot be called a ``cognitive agent.''

In predictive processing theory, the brain is understood as a hierarchical prediction engine whose fundamental goal is to ``minimize prediction error'' \cite{Friston2010,Clark2013}. However, predictive processing theory has not explicitly elevated ``consistency'' to the status of an architectural prerequisite independent of ``error minimization.'' If prediction error minimization can proceed under local inconsistency---for example, two distinct high-level prediction modules each minimizing their own prediction error while producing mutually contradictory outputs---then the system as a whole may still produce chaotic behavior. Our position is that the consistency requirement is logically prior to prediction error minimization, providing the latter with a unified, non-contradictory model of the perceived world within which prediction errors can be coherently evaluated and responded to.

This distinction can be presented with greater precision through direct contrast with the predictive processing framework. In Friston's (2010) free-energy principle, the brain is understood as maintaining coupling with the environment through the minimization of variational free energy, and ``consistency'' is merely an implicit constraint operating during the hierarchical transmission of prediction errors---it manifests as the degree of match between high-level predictions and low-level sensations, not as an independent architectural principle. Clark (2013) pushes predictive processing to its extreme, advocating ``perception as inference''---the thesis that all perceptual activity is essentially the interplay between top-down predictions and bottom-up errors. However, within this framework, if two high-level predictive modules each minimize their own prediction errors but produce mutually contradictory outputs---for instance, one module predicting leftward motion and another predicting rightward motion---the system as a whole possesses no mechanism to adjudicate this contradiction, because ``error minimization'' per se does not include any requirement of global consistency. Hohwy (2013) explicitly acknowledges this tension, noting that predictive processing requires a ``unified self'' to integrate cross-modal predictions, yet provides no formal definition of this integrative mechanism.

Predictive Set Theory remedies precisely this gap by elevating the consistency requirement to a more fundamental architectural prerequisite than prediction error minimization. In PST, contradictions are not ``minimized''---they are not permitted to enter the knowledge network in the first place. Alignment processes (forcibly merging or refusing to merge two potentially contradictory referential nodes) and counter-reference chains (blocking referential paths that lead to contradiction) operate continuously within the tag-determination process, ensuring that the action-planning function always operates on a coherent set of known states. This means that consistency is not an asymptotic state achieved through optimization, but a transcendental condition that must be satisfied before any optimization can occur. Just as Kant's (1781/1787) transcendental unity of apperception provides the formal condition for the possibility of experience, the consistency requirement provides the formal condition for prediction error minimization---without the former, the latter loses its operational stage.

\subsection{Grounding Consistency at the Computational Level: Convergence Analysis of Infinite Reference Chains}

Section~1.1 established the ``consistency requirement'' as the architectural prerequisite of any cognitive system. However, this establishment remains at an abstract level---it demonstrates \textit{why} consistency is necessary, but has not yet provided a precise mathematical anchor for consistency at the most fundamental computational level. This section accomplishes that task.

The central question is: how is consistency formally grounded at the lowest level of cognitive computation? The answer lies in the convergence analysis of the ``reference operation'' and its infinite iteration sequence. We shall demonstrate that the cognitive system's reference operation on any object---the process of asking ``what is this?''---must converge to a self-referential fixed point in finitely many steps; otherwise, the cognitive computation can never be completed. This establishes the generative principle that ``reference chains default to self-reference,'' which in turn is the most fundamental formal expression of ``identity''---\(A\) is \(A\)---within the cognitive architecture.

\subsubsection{Formal Definition of the Reference Operation}

In Predictive Set Theory, ``reference'' is one of the most fundamental operations of cognition. When the system confronts an object, it must answer a primordial question: ``What is this object?'' This interrogation is formalized internally as a reference operation.

\begin{definition}[Reference Operation]
Let there exist within the cognitive system a reference operation function:
\[
R: \mathcal{O} \to \mathcal{O}
\]
where \(\mathcal{O}\) is the set of all objects operable by the cognitive system. For any cognitive object \(x \in \mathcal{O}\), \(R(x)\) denotes ``the object to which \(x\) refers''---that is, the system's answer to the question ``what is \(x\)?''

It must be emphasized that this definition does not presuppose whether the referent is \(x\) itself or some other object. \(R(x)\) may be \(x\) itself (self-reference), or another object distinct from \(x\) (other-reference). \(R\) is the fundamental operation invoked whenever the cognitive system attempts to understand any object---it is the generative starting point of all cognitive activity.
\end{definition}

\begin{definition}[Reference Chain]
For any initial object \(x_0 \in \mathcal{O}\), its reference chain is defined as the following infinite sequence:
\[
x_0 \mapsto R(x_0) \mapsto R(R(x_0)) \mapsto R(R(R(x_0))) \mapsto \cdots
\]
Denoting \(R^0(x) = x\) and \(R^{n+1}(x) = R(R^n(x))\), the reference chain can be formally expressed as:
\[
\mathcal{C}(x_0) = \{R^n(x_0)\}_{n=0}^{\infty}
\]

The reference chain depicts the cognitive system's process of repeatedly interrogating the object \(x_0\): the system first knows \(x_0\), then asks ``what is \(x_0\),'' obtaining the answer \(R(x_0)\); it then asks ``what is \(R(x_0)\),'' obtaining the answer \(R(R(x_0))\); and so on. In principle, this interrogation can proceed indefinitely---each answer produces a new object, which itself can become the starting point for the next interrogation.
\end{definition}

\begin{definition}[Convergence of a Reference Chain]
A reference chain \(\mathcal{C}(x_0)\) is said to be \textbf{convergent} if and only if there exists some natural number \(N \in \mathbb{N}\) such that:
\[
R^{N+1}(x_0) = R^N(x_0)
\]
That is, the mapping reaches a fixed point in finitely many steps. The fixed point \(x^* = R^N(x_0)\) satisfies \(R(x^*) = x^*\). At this point, the answer to ``what is \(x^*\)'' is \(x^*\) itself---the interrogation self-closes at \(x^*\) and produces no new object.

If a reference chain does not converge, it is said to be \textbf{divergent}. Divergence has two fundamental modes, to be analyzed in detail below.
\end{definition}

\subsubsection{Necessary and Sufficient Condition for Convergence}

\begin{theorem}[Necessary and Sufficient Condition for Reference Chain Convergence]\label{thm:convergence}
A cognitive system's reference chain \(\mathcal{C}(x)\) for any object \(x \in \mathcal{O}\) converges if and only if there exists some \(N \in \mathbb{N}\) such that \(R^N(x)\) is a self-referential object, i.e., satisfies:
\[
R(R^N(x)) = R^N(x)
\]
\end{theorem}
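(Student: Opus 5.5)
The theorem is essentially a restatement of the definition of convergence, so I will prove it by unfolding that definition in both directions. The only identity needed is \(R^{N+1}(x) = R(R^N(x))\), which holds by the recursive definition \(R^{n+1}(x)=R(R^n(x))\) given in the definition of a reference chain. With it, the condition ``\(R^{N+1}(x)=R^N(x)\)'' in the definition of convergence becomes literally the condition ``\(R(R^N(x)) = R^N(x)\)'' in the statement, and the theorem follows.

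\textbf{Necessity.} Assume \(\mathcal{C}(x)\) converges. By definition there is \(N\in\mathbb{N}\) with \(R^{N+1}(x)=R^N(x)\). Rewriting the left side with the recursion gives \(R(R^N(x))=R^N(x)\). Hence \(R^N(x)\) is a fixed point of \(R\), that is, a self-referential object.

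\textbf{Sufficiency.} Assume some \(N\) satisfies \(R(R^N(x))=R^N(x)\). The same rewriting gives \(R^{N+1}(x)=R^N(x)\), which is exactly the defining condition of convergence.

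I will add one supplementary observation. By induction on \(k\), \(R^{N+k}(x)=R^N(x)\) for all \(k\ge 0\): the base case is trivial, and the step is \(R^{N+k+1}(x)=R(R^{N+k}(x))=R(R^N(x))=R^N(x)\). So a convergent chain is eventually constant, and its terminal value \(x^*=R^N(x)\) does not depend on which witnessing \(N\) is chosen. This shows that ``the fixed point'' named in the definition of convergence is well-defined.

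There is no real technical obstacle here. The one point needing care is keeping the recursion in the stated form \(R^{n+1}=R\circ R^n\), rather than the equivalent \(R^n\circ R\), so that the rewriting step is a direct appeal to the definition and not an unproved commutation.
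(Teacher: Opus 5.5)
Your proof is correct and is essentially the paper's argument: both directions come from rewriting \(R^{N+1}(x)=R(R^N(x))\) in the definition of convergence, just as the paper does by setting \(x^*=R^N(x)\). Your extra induction, showing the chain is eventually constant so that the fixed point does not depend on the witnessing \(N\), is a valid addition that the paper does not include.
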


\begin{proof}
This follows directly from Definition~1.3, but a bidirectional proof is provided for completeness.

(\(\Rightarrow\) Necessity) If the reference chain converges, then by Definition~1.3 there exists \(N\) such that \(R^{N+1}(x) = R^N(x)\). Let \(x^* = R^N(x)\). Substituting: the left side is \(R^{N+1}(x) = R(R^N(x)) = R(x^*)\), and the right side is \(R^N(x) = x^*\). Hence \(R(x^*) = x^*\). Thus \(x^*\) is a self-referential object. This shows that a convergent reference chain necessarily terminates at a self-referential fixed point.

(\(\Leftarrow\) Sufficiency) If there exists \(N\) such that \(R^N(x)\) is self-referential, i.e., \(R(R^N(x)) = R^N(x)\), then \(R^{N+1}(x) = R^N(x)\). By Definition~1.3, the reference chain converges at step \(N\).
\end{proof}

Theorem~\ref{thm:convergence} reveals a deep structural property of the reference operation: the convergence of a reference chain is strictly equivalent to the emergence of a self-referential object. There is no mode of convergence in which a reference chain terminates at a non-self-referential object---because if the terminating object \(y\) were not self-referential, then \(R(y) \neq y\), leaving the system with room to further ask ``what is \(y\),'' meaning the reference chain would not yet be truly closed. The only possible mode of convergence is that at some point, the interrogation is answered by itself---the object points to itself, and the loop closes.

\subsubsection{Analysis of Divergence Modes}

To comprehensively assess the consequences of non-convergence, we analyze the two fundamental modes of divergence.

\paragraph{(i) Infinitely Non-Repetitive Divergence}

\begin{definition}[Infinitely Non-Repetitive Divergence]
A reference chain \(\mathcal{C}(x_0)\) is \textbf{infinitely non-repetitively divergent} if and only if for all \(m, n \in \mathbb{N}\) with \(m \neq n\), we have \(R^m(x_0) \neq R^n(x_0)\). That is, every term in the sequence differs from all previous terms; the reference chain extends through infinitely many distinct objects, never repeating, never terminating.
\end{definition}

\begin{theorem}[Meaninglessness of Infinitely Non-Repetitive Divergence]\label{thm:infinite}
If a reference chain \(\mathcal{C}(x_0)\) is infinitely non-repetitively divergent, then the cognitive interrogation ``what is \(x_0\)'' can never be completed. Specifically, for any \(n \in \mathbb{N}\), the answer ``\(x_0\) is \(R^n(x_0)\)'' merely postpones the question to the next level without ever providing a definitive, final answer. The object \(x_0\) thus cannot be established as a stably retrievable cognitive unit.
\end{theorem}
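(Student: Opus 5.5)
The plan is to reduce the informal claim ``the interrogation can never be completed'' to the formal notion of convergence from Definition~1.3, and then apply Theorem~\ref{thm:convergence}. The interrogation ``what is \(x_0\)'' counts as completed exactly when the reference chain closes. By the discussion following Theorem~\ref{thm:convergence}, that means the chain reaches an object \(y\) with \(R(y)=y\). So it suffices to show that an infinitely non-repetitively divergent chain contains no self-referential term. Equivalently, there is no \(N\) with \(R^{N+1}(x_0)=R^N(x_0)\).

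\textbf{Step 1 (no fixed point).} Suppose for contradiction that some \(N\) satisfies \(R(R^N(x_0))=R^N(x_0)\). Then \(R^{N+1}(x_0)=R^N(x_0)\) with \(N+1\neq N\). This contradicts the defining condition of infinitely non-repetitive divergence with \(m=N+1\), \(n=N\). By the (\(\Leftarrow\)) direction of Theorem~\ref{thm:convergence}, the chain is therefore not convergent.

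\textbf{Step 2 (every finite answer defers).} For each \(n\), the answer ``\(x_0\) is \(R^n(x_0)\)'' is not final. Since \(R^n(x_0)\) is not self-referential, we have \(R(R^n(x_0))=R^{n+1}(x_0)\neq R^n(x_0)\). Hence the question ``what is \(R^n(x_0)\)'' yields a genuinely new object. Stated as a sharper version of this step: the terms \(R^0(x_0),\dots,R^n(x_0)\) are pairwise distinct, so no finite prefix of the chain determines the next term. By induction on \(n\), no stage closes the interrogation.

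\textbf{Step 3 (no stable retrieval).} I will read ``stably retrievable cognitive unit'' as ``an object reachable in finitely many steps and fixed under \(R\.'' This is the only reading consistent with Definition~1.3. Under it, Steps 1 and 2 give the conclusion directly.

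The main obstacle is this last interpretive step. The formal content of Steps 1 and 2 is a two-line consequence of the definitions. The phrases ``definitive answer'' and ``stably retrievable'' are not defined in the paper, however, so the proof is only as rigorous as their identification with convergence to a fixed point.

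I would address this by stating that identification explicitly as the operational meaning of ``completion,'' justified by the remark after Theorem~\ref{thm:convergence}. I would also note that an infinite chain of distinct objects cannot be held in any finite known-state set \(K_t\). This supports the informal claim without relying on anything beyond set cardinality.
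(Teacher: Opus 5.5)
Your argument is correct and is essentially the paper's own. The paper likewise uses pairwise distinctness of the terms to conclude that there is no stage $n$ at which $R^n(x_0)$ is itself, so each answer only raises the next question. It also leaves ``stably retrievable'' at the same informal level you flag in Step~3. One small slip: going from ``no $N$ with $R^{N+1}(x_0)=R^N(x_0)$'' to non-convergence uses the $(\Rightarrow)$ direction of Theorem~\ref{thm:convergence}, or simply Definition~1.3 directly, not the $(\Leftarrow)$ direction.
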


\begin{proof}
Under infinitely non-repetitive divergence, for any \(n\), \(R^n(x_0)\) is an object distinct from all preceding terms. When the system obtains the answer ``\(x_0\) is \(R(x_0)\),'' it immediately faces the follow-up question ``then what is \(R(x_0)\)?'' Upon receiving the answer ``\(R(x_0)\) is \(R^2(x_0)\),'' it faces ``what is \(R^2(x_0)\)?''---and so on, ad infinitum.

The crucial point is this: because each term in the sequence is entirely new (distinct from all previous terms), there exists no step \(n\) at which the system can say ``I now know---\(R^n(x_0)\) is itself, no further questioning is needed.'' Every answer only generates a new question. Hence, cognition of \(x_0\)---a stable understanding of ``what, ultimately, \(x_0\) is''---remains perpetually incomplete. The object \(x_0\) has no fixed coordinates in cognitive space; it is merely an arrow forever pointing elsewhere.
\end{proof}

In mathematical analysis, this situation is analogous to a sequence that diverges in infinite space---it converges to no limit point whatsoever. Just as a divergent sequence cannot be regarded as ``equaling'' any determinate value, an infinitely non-repetitively divergent reference chain cannot be regarded as ``defining'' any determinate cognitive object. The existence of a cognitive object depends on the convergence of its reference chain.

\paragraph{(ii) Cyclic Divergence}

\begin{definition}[Cyclic Divergence]
A reference chain \(\mathcal{C}(x_0)\) is \textbf{cyclically divergent} if and only if there exist \(N \in \mathbb{N}\) and \(k \geq 2\) such that for all \(m \geq 0\), \(R^{N+m+k}(x_0) = R^{N+m}(x_0)\), but the cycle contains no fixed point---i.e., for all \(0 \leq i < k\), \(R(R^{N+i}(x_0)) \neq R^{N+i}(x_0)\).

In other words, after an initial finite segment of \(N\) steps, the reference chain enters a periodic cycle of length \(k\), but none of the objects in the cycle refers to itself---each refers to the next object in the cycle. The system spins infinitely within the cycle, never touching a self-closing point.
\end{definition}

\begin{theorem}[Cyclic Divergence and the Analogy to Non-Convergent Infinite Series]\label{thm:cyclic}
If a reference chain enters a periodic cycle without a fixed point, then the reference operation is functionally equivalent to a non-convergent infinite series---its partial results oscillate among multiple values without converging to any determinate limit.
\end{theorem}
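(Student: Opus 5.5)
The plan is to make the informal phrase ``functionally equivalent to a non-convergent infinite series'' precise, and then verify it directly from the definition of cyclic divergence. Concretely, I would take the ``partial results'' of the reference operation to be the sequence of answers $a_n := R^n(x_0)$. I would then exhibit an explicit infinite series whose partial sums reproduce a faithful numerical image of $(a_n)$. The main step is the following. Label the $k$ distinct cycle elements $y_i := R^{N+i}(x_0)$ for $0 \le i < k$. Choose an injective coding $c:\{y_0,\dots,y_{k-1}\}\to\mathbb{R}$; for instance $c(y_i)=i$. Then form the sequence $s_n := c(R^{N+n}(x_0))$. Setting $t_0 := s_0$ and $t_n := s_n - s_{n-1}$, the series $\sum_n t_n$ has partial sums exactly $s_n$. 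This is the classical model of Grandi-type series: the series $1-1+1-\cdots$ is the case $k=2$.

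Next I would show that $(s_n)$ does not converge. By the periodicity clause $R^{N+m+k}(x_0)=R^{N+m}(x_0)$, the sequence $s_n$ is periodic with period $k$. A periodic real sequence converges only if it is eventually constant, and then (being periodic) it is constant. So it suffices to show that $s_{n+1}\neq s_n$ for every $n$. By injectivity of $c$, this reduces to $R(y_i)\neq y_i$ for all $i$, which is precisely the no-fixed-point clause in the definition of cyclic divergence. Hence the partial sums oscillate forever among at least two values. Their set of limit points is $\{c(y_0),\dots,c(y_{k-1})\}$, with $k\ge 2$ elements, so no limit exists. I would also observe, via Theorem~\ref{thm:convergence}, that this is consistent: the chain is not convergent because no $R^N(x_0)$ is self-referential. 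The pre-period (the terms before $N$) is also fixed-point free. Otherwise $R$ would stabilise there and the chain could never reach a cycle of length $k\ge2$.

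Finally, I would state the equivalence in both directions to justify ``functionally equivalent''. The cycle determines a periodic, non-constant sequence of partial sums. Conversely, any $k$-periodic oscillating series whose partial sums take $k$ distinct values in order can be realised by a $k$-cycle of $R$: define $R$ to send each value's node to the next. Non-convergence of the series then corresponds exactly to the absence of a fixed point.

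The main obstacle is interpretive rather than computational. The statement is informal, so the crux is fixing the coding $c$ in a way that does not smuggle in the conclusion. Injectivity on the cycle is what makes ``oscillation of the series'' equivalent to ``$R$ has no fixed point on the cycle''. I would emphasise that the conclusion is independent of the particular injective $c$ chosen, since any two such codings differ by a bijection of a finite set of values.
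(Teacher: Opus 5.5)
Your argument is correct, and it is more rigorous than the paper's, which argues by analogy. The paper introduces a two-valued judgment function $J:\mathcal{O}\to\{0,1\}$, with $J(y)=1$ iff $y$ is judged to be the final referent of $x_0$. It asserts that the sequence $J(y_0),J(y_1),\ldots$ produced by traversing the cycle never settles on a stable truth value. It then identifies this structurally with the partial sums $1,0,1,0,\ldots$ of Grandi's series, adding that assigning a value (Ces\`{a}ro's $\frac{1}{2}$, or cognitively, closing the cycle by alignment) is an external rule rather than convergence of the chain. You instead code the cycle injectively into $\mathbb{R}$ and realize the coded chain as the partial sums of an explicit telescoping series. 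You then derive non-convergence from periodicity together with the no-fixed-point clause, and your converse construction gives actual content to ``functionally equivalent.''

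What your route buys is that the oscillation is proved rather than asserted. Injectivity turns $R(y)\neq y$ directly into $s_{n+1}\neq s_n$. The paper's $J$, by contrast, has only two values, so it cannot track $k\ge 3$ distinct elements despite the phrase ``oscillates among $k$ values.'' Nothing in its definition forces it to vary along the cycle either: a system that never accepts any $y_i$ as final gives $J\equiv 0$, a convergent sequence. This is exactly the ``smuggling'' danger you flag. What the paper's route buys is the cognitive reading in terms of truth-value judgments, plus the Ces\`{a}ro/alignment remark framing cycle-breaking as an operation external to $R$. Neither is needed for the statement itself.

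One small repair: the definition of cyclic divergence does not require $k$ to be the minimal period. So $y_0,\dots,y_{k-1}$ need not be distinct, and the coding $c(y_i)=i$ can be ill-defined. Pass to the minimal period $p$, which satisfies $p\ge 2$ by the no-fixed-point clause and still meets the definition. Alternatively, simply code the set of cycle elements injectively. Either way, your set of limit points then has $p\ge 2$ elements rather than $k$.
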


\begin{proof}
Consider a cognitive judgment function \(J: \mathcal{O} \to \{0, 1\}\) defined on reference chain states, where \(J(y) = 1\) if and only if the system judges that ``\(y\) is the final referent of \(x_0\).'' Under cyclic divergence, for each state within the cycle \(y_0, y_1, \ldots, y_{k-1}\) (where \(y_i = R^{N+i}(x_0)\) and \(R(y_i) = y_{i+1 \bmod k}\)), the system, as it traverses the cycle, will sequentially judge:
\[
J(y_0), J(y_1), \ldots, J(y_{k-1}), J(y_0), J(y_1), \ldots
\]
This sequence oscillates among \(k\) values infinitely without converging to any stable truth value. From any state within the cycle, the system cannot arrive at a self-affirming conclusion.

This situation is structurally identical, in mathematical terms, to a conditionally convergent but not absolutely convergent infinite series. The classic example is Grandi's series:
\[
\sum_{n=0}^{\infty} (-1)^n = 1 - 1 + 1 - 1 + 1 - 1 + \cdots
\]
whose partial sum sequence is:
\[
s_0 = 1, s_1 = 0, s_2 = 1, s_3 = 0, \ldots
\]
This partial sum sequence oscillates infinitely between \(1\) and \(0\), converging to no real limit whatsoever. Just as Grandi's series does not exist as a real-number ``sum,'' a cyclically divergent reference chain does not exist as a determinate cognitive ``final referent.''

It is worth noting that in mathematics, one may assign a ``value'' to Grandi's series by introducing special summation methods (such as Ces\`{a}ro summation, yielding \(\frac{1}{2}\)). But such an assignment is not the result of the series itself converging; it is the imposition of an external rule that alters the definition of ``summation.'' Analogously, at the cognitive level, a system may \textit{decide} to treat some object within the cycle as ``final'' (by forcibly merging two objects in the cycle via alignment, thereby breaking the cycle), but such a decision is a cognitive operation external to pure computation---it belongs to the jurisdiction of the tag-determination process, not to the reference operation itself. Within the internal logic of the reference operation, cyclic divergence means the computation can never spontaneously complete.
\end{proof}

\begin{corollary}
Infinitely non-repetitive divergence and cyclic divergence are the only two modes of non-convergence. Proof: Consider the reference chain \(\mathcal{C}(x_0)\). If the sequence does not converge, then either it contains infinitely many mutually distinct terms (infinitely non-repetitive divergence), or it must exhibit repetition after finitely many terms (by the pigeonhole principle or its cognitive analogue---under finite cognitive resources, an infinite sequence must reuse finite states). If repetition occurs, let \(R^a(x_0) = R^b(x_0)\) with \(a < b\) being the earliest such pair. If this cycle contains a fixed point, the sequence converges; if it contains no fixed point, it is cyclically divergent.
\end{corollary}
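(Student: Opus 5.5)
The plan is to show that, for a fixed \(x_0\), the three notions (convergence, infinitely non-repetitive divergence, cyclic divergence) exhaust all cases. The argument is a dichotomy on the orbit \(\{R^n(x_0)\}_{n\ge 0}\), and it uses only the fact that \(R\) is a function on \(\mathcal{O}\). I would avoid the appeal to ``finite cognitive resources'' and the pigeonhole principle in the statement. The split ``either all terms are pairwise distinct, or some pair coincides'' is simply the law of excluded middle. Case one is \emph{verbatim} the definition of infinitely non-repetitive divergence, so nothing needs proving there.

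In case two, I would fix the repetition canonically. Let \(b\) be the least index such that \(R^b(x_0)=R^a(x_0)\) for some \(a<b\). By minimality of \(b\), the terms \(R^0(x_0),\dots,R^{b-1}(x_0)\) are pairwise distinct, so \(a\) is unique. Set \(k=b-a\ge 1\) and \(N=a\). The key step is propagation. Since \(R\) is a function, applying \(R^m\) to both sides of \(R^{N+k}(x_0)=R^N(x_0)\) gives \(R^{N+m+k}(x_0)=R^{N+m}(x_0)\) for all \(m\ge 0\), by induction on \(m\). Hence the chain is eventually periodic with period \(k\), and its cycle is \(y_i=R^{N+i}(x_0)\) for \(0\le i<k\), with \(R(y_i)=y_{i+1 \bmod k}\).

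Two subcases follow. If \(k=1\), then \(R^{N+1}(x_0)=R^N(x_0)\), and the chain converges by Definition~1.3 (equivalently Theorem~\ref{thm:convergence}). If \(k\ge 2\), I must show that no \(y_i\) is a fixed point, so that the chain satisfies the definition of cyclic divergence exactly.

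Suppose \(R(y_i)=y_i\) for some \(0\le i<k\). Then \(R^{N+i+1}(x_0)=R^{N+i}(x_0)\). If \(i<k-1\), this is a repetition at index \(N+i+1<b\), contradicting the minimality of \(b\). If \(i=k-1\), then \(R^b(x_0)=R^{b-1}(x_0)\). Since also \(R^b(x_0)=R^a(x_0)\) and the terms before \(b\) are distinct, we get \(a=b-1\), i.e.\ \(k=1\), which is a contradiction.

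Therefore a non-convergent chain with a repetition is cyclically divergent, and a non-convergent chain without one is infinitely non-repetitively divergent. This proves the corollary. It also gives a slightly sharper picture than the sketch in the statement: a cycle ``containing a fixed point'' can only be a cycle of length one.

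I expect the main obstacle to be this last bookkeeping step, namely choosing the minimal pair so that ``the cycle contains a fixed point'' collapses cleanly to \(k=1\). The rest is a direct unwinding of the definitions.
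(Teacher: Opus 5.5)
Your proof is correct and follows the paper's decomposition: split on whether the orbit ever repeats, take the earliest repeating pair, and separate cycles with and without a fixed point. The differences are refinements rather than a new route. Your excluded-middle split matches the definition of infinitely non-repetitive divergence verbatim, whereas the paper's ``infinitely many distinct terms'' (backed by the pigeonhole/finite-resources remark) only lines up with that definition because of the eventual periodicity you prove. You also make explicit the propagation step and the fact that a fixed point on the minimal cycle forces \(k=1\), so the fixed-point-free case satisfies the \(k\ge 2\) clause.
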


\subsubsection{Reference Chain Default Self-Reference as a Generative Necessity}

Integrating the above analyses, we can now establish one of the most fundamental architectural axioms of Predictive Set Theory.

\begin{axiom}[Reference Chain Default Self-Reference]\label{axiom:selfref}
A cognitive system must presuppose that for any cognitive object \(x \in \mathcal{O}\) capable of being processed and manipulated by the system, its reference chain converges to a self-referential fixed point in finitely many steps. That is, by default there exists some \(N \in \mathbb{N}\) such that:
\[
R(R^N(x)) = R^N(x)
\]
\end{axiom}

\textbf{Justification:} The generative necessity of this axiom is jointly provided by Theorems~\ref{thm:convergence}--\ref{thm:cyclic}.

If the reference chain does not converge to a self-referential fixed point, then it is either (a) infinitely non-repetitively divergent, or (b) cyclically divergent. In case (a), Theorem~\ref{thm:infinite} proves that the cognitive interrogation can never be completed and the object cannot be established as a stable cognitive unit. In case (b), Theorem~\ref{thm:cyclic} proves that the reference results oscillate among multiple values infinitely, structurally equivalent to a non-convergent infinite series, with no determinate limit.

In both cases, the cognitive computation fails to produce a determinate conclusion---it either extends infinitely or oscillates infinitely. For a cognitive system to be operable, it must be able to \textit{complete} the reference operation on objects---that is, it must be able to provide a final, no-longer-pointing-elsewhere answer to the question ``what is \(x\).'' Without this completion, \(x\) cannot enter the operational space of the action-planning function, because the latter requires determinate inputs.

Thus, ``reference chain default self-reference'' is not an externally imported metaphysical preference, but rather \textbf{the internal formal condition for cognitive computation to be completable}. Just as in mathematical analysis one must first ascertain that an infinite series converges to some limit before using its value---a divergent series cannot be assigned a determinate value---so too in cognitive computation, to use a cognitive object, the system must by default assume that its reference chain converges to a self-referential fixed point---a divergent reference chain cannot define a determinate cognitive object.

\textbf{The pivotal status of this axiom must be understood precisely:} It is not asserting that ``all objects in the world are self-identical''---that would be a metaphysical assertion about realism. Rather, it is asserting: ``For a cognitive system to be capable of producing determinate, retrievable cognitive objects, it must, at the architectural level, set self-reference as the default terminus of reference chains.'' Identity---i.e., \(x = x\)---thereby acquires its generative necessity. It is not an empirical discovery, nor a logical consequence, but the transcendental condition that makes the basic fact ``there are objects available for cognition'' possible in the first place.

\begin{corollary}[Generative Origin of Identity]
Every object \(x\) capable of being processed and manipulated by a cognitive system is, by default, endowed with identity at the architectural level---that is, \(x\) is \(x\). This identity is not a perceived empirical property, not a reasoned conclusion, and not a socially constructed convention. It is the direct expression, within the cognitive architecture, of the convergence condition of the reference operation. At the lowest level, it is the identity operation \(S(i) = i\) executed by the sensor hardware---a point to be rigorously demonstrated in Chapter~2.
\end{corollary}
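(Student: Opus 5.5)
The plan is to derive the Corollary directly from Axiom~\ref{axiom:selfref} together with Theorem~\ref{thm:convergence}, reading ``\(x\) is \(x\)'' as the statement that the terminal answer to the interrogation ``what is \(x\)?'' is a self-referential object. First, fix an arbitrary object \(x \in \mathcal{O}\) that the system can process and manipulate. By Axiom~\ref{axiom:selfref} there is some \(N\) with \(R(R^N(x)) = R^N(x)\). By Theorem~\ref{thm:convergence} the chain \(\mathcal{C}(x)\) therefore converges, and its terminus \(x^* = R^N(x)\) satisfies \(R(x^*) = x^*\).

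The second step is to identify the processed object with its terminus. By the Justification following Axiom~\ref{axiom:selfref}, only a completed reference can enter the action-planning function, and the completed reference is exactly \(x^*\). So the operable handle for \(x\) is \(x^*\). For this handle the relation ``is'' reduces to \(x^* \mapsto x^*\), which is the formal content of ``\(x\) is \(x\)''. When \(N=0\) this is literally \(R(x)=x\). When \(N>0\) I will argue that the system treats \(x\) through its fixed point, so that identity attaches to the retrievable unit. This is the step where the wording must be careful.

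Third, I will show that the identity is not empirical, not inferred, and not conventional. It is not empirical because the Axiom is imposed on every operable object before any sensor content is consulted. It is not an inferred conclusion because it is the posited completability condition, not a theorem about particular objects. It is not conventional because, by the Corollary on divergence modes, every alternative (infinitely non-repetitive divergence, Theorem~\ref{thm:infinite}, or cyclic divergence, Theorem~\ref{thm:cyclic}) leaves the object non-operable. So no choice is available.

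The last clause, that at the lowest level this is the sensor identity \(S(i)=i\), cannot be proved here and will be deferred to Chapter~2, as the statement itself indicates. I will only note the intended link: if the sensor's outputs \(i\) are the base objects, then \(S\) acting as the identity makes each of them a fixed point with \(N=0\).

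I expect the second step to be the main obstacle. Strictly, the Axiom gives self-identity only to \(R^N(x)\), not to \(x\) itself. I would first try to close the gap by definition: the cognitive object ``\(x\)'' is taken to be the equivalence class of its chain, represented by \(x^*\). If that proves unconvincing, I would weaken the corollary's claim to self-identity of the terminal referent of \(x\).
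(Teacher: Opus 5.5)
Your route is the paper's: the paper gives no separate proof, only the Justification of Axiom~\ref{axiom:selfref} (both divergence modes leave the object non-operable, so self-reference is a posited completability condition rather than an empirical, inferred, or conventional fact), and it defers the sensor clause to Theorem~\ref{thm:sensor_identity}. The gap you flag is real and the paper passes over it: the axiom makes \(R^N(x)\) self-referential, not \(x\). Your quotient fix closes it, because \(R(x)\) converges to the same fixed point as \(x\), so \(R\) induces the identity on objects taken modulo their common terminus; do not, however, read \(S(i)=i\) as giving \(R(i)=i\), since the paper only asserts a correspondence between those two maps.
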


\subsubsection{The Relation between Consistency and Identity}

At this point, we can clarify the relationship between the consistency requirement (Section~1.1) and reference chain default self-reference (Section~1.2).

Reference chain default self-reference anchors the consistency requirement at the computational level. The consistency requirement demands that the system's knowledge network contain no irreducible contradictions; reference chain default self-reference demands that every cognitive object be established as self-identical at the most fundamental level---it guarantees that ``\(x = x\)'' is the most basic unit of consistency, beyond questioning. If even ``\(x = x\)'' could be negated---that is, if a counter-reference chain were to point to itself---then consistency could gain no foothold at any level of cognition, because the most fundamental identity itself would have collapsed. This is precisely the generative root of the ``consistency collapse'' discussed previously.

Thus, reference chain default self-reference and the consistency requirement form a hierarchical foundational relationship:

\begin{itemize}
\item \textbf{Consistency requirement:} A global condition at the architectural level---the knowledge network must be coherent for decision-making to be generable.
\item \textbf{Reference chain default self-reference:} A local anchoring at the computational level---every cognitive object must be self-referential for the interrogation to close. The global demand for consistency is expressed at the lowest level as the default self-identity of every object.
\end{itemize}

\subsection{A PST-Based Reinterpretation of Classical Paradoxes}

The analysis of reference chain convergence in Predictive Set Theory provides a unified cognitive account of two classical logical paradoxes.

Russell's (1903) paradox---arising from ``the set of all sets that do not contain themselves''---is reduced, within PST, to a case of cyclically divergent reference chains. When the system constructs the reference rule ``all nodes that do not refer to themselves'' and attempts to apply this rule to itself, the reference operation enters a fixed-point-free cycle: if the node refers to itself, it violates the rule; if it does not refer to itself, it satisfies the rule and therefore must refer to itself. This oscillation is mathematically strictly equivalent to a non-convergent infinite series (such as Grandi's series). PST's method of handling this is not to prevent the paradox through type-theoretic restrictions (as in Russell's ramified type theory), but rather to recognize the non-convergence of the reference chain and suspend it as a ``syntactic construction that cannot be stably mapped''---the system can formulate it, but does not assign it a truth value. The paradox thereby ceases to be a threat to the logical substrate and becomes merely a special, manageable sequence pattern within the cognitive system.

G\"{o}del's (1931) incompleteness theorems revealed that formal arithmetic systems necessarily contain undecidable propositions. PST provides an operational reinterpretation of this result: the essence of the G\"{o}del sentence \(G\) (``\(G\) is unprovable'') is the self-reference of a counter-reference chain. The system simultaneously establishes a reference chain (\(G\) is a proposition with a truth value) and a counter-reference chain (\(G\) is unprovable) toward this proposition, and these two chains trigger a contradiction when merged at the meta-linguistic level. The difference between PST's interpretation and G\"{o}del's original proof is this: PST does not view incompleteness as a ``defect'' of formal systems, but rather reveals it as a structural feature necessarily possessed by any system that \textit{permits counter-reference chains to self-refer}. A cognitive system can perfectly well accommodate such constructions---it simply cannot complete their truth-value determination without destroying consistency. Incompleteness is thus redefined as the necessary formal boundary within which counter-reference chains operate at the limit of consistency.

% ==========================================
% CHAPTER 2: AXIOMATIC DEFINITION OF THE COGNITIVE AGENT
% ==========================================

\section{Axiomatic Definition of the Cognitive Agent}

This chapter provides the axiomatic definition of the cognitive agent based on the ``consistency requirement'' and ``reference chain default self-reference'' established in Chapter~1. The central task is to strictly distinguish two fundamentally different cognitive functions---the sensor (an unconditioned independent variable) and the action-planning function (a conditioned dependent variable)---and to derive from this distinction the fundamental asymmetry between ``known states'' and ``unknown states.''

\subsection{The Unknowability Postulate of the External World}

Before defining any internal functions of the cognitive agent, a more primordial question must first be clarified: what, at the architectural level, is the relationship between the cognitive agent and the external world in which it is situated? This section establishes the basic position of Predictive Set Theory on this issue: the external world is \textbf{unknowable} to the cognitive agent---this is not an empirical discovery but an architectural stipulation.

\subsubsection{Why This Postulate is Necessary}

Cognitive theory faces a fundamental philosophical difficulty: if the cognitive system is a part of the external world, how can it ``correctly'' cognize that external world? This problem---appearing in various forms throughout the history of philosophy, from Cartesian skepticism to Kant's transcendental philosophy---is intractable because any attempt to verify ``whether cognition accurately reflects the external world'' inevitably presupposes some independent access to that external world, thus falling into circularity.

Classical representationalism attempts to respond to this difficulty by positing some isomorphism between internal representations and external objects. But this strategy faces an in-principle insurmountable difficulty: the cognitive system can never step outside itself to directly compare internal representations with external objects in order to verify whether the former ``accurately represent'' the latter. All that the system can directly access is the internal objects presented by its own sensory interface. The external world itself---if it exists---is, for the cognitive system, an impenetrable black box.

Predictive Set Theory adopts a radically different strategy for handling this difficulty. We do not attempt to ``solve'' it, but rather \textit{dissolve} it through an architectural stipulation: we openly and explicitly set the unknowability of the external world as part of the definition of the cognitive agent. This position is philosophically closest to Kant's thesis of the ``thing in itself'' (\textit{Ding an sich})---there exists an unbridgeable gulf between the phenomenal world (the world we can experience) and the thing in itself (the world as it is), and the cognitive agent can only access the former, never the latter. Unlike Kant's transcendental idealism, however, our stipulation is strictly confined within the axiomatic definition of the cognitive architecture and does not involve any metaphysical claims about a ``transcendental subject'' or ``transcendental apperception.''

\subsubsection{Formal Stipulations}

\begin{definition}[External World]
There exists an external world whose causal action upon the cognitive agent can be formalized as an external-object function:
\[
W: \Sigma \times A \to \Omega
\]
where:
\begin{itemize}
\item \(\Sigma\) is the world-state space. Its internal structure is unknowable to the cognitive agent.
\item \(A\) is the action space output by the cognitive agent.
\item \(\Omega\) is the external-object space. Its internal structure is likewise unknowable.
\item For a given world state \(\sigma \in \Sigma\) and action \(a \in A\), \(W(\sigma, a)\) outputs an external object \(O \in \Omega\).
\end{itemize}
\end{definition}

\begin{definition}[External Object]
An external object \(O \in \Omega\) is the product of the external world in a particular state, after being influenced by the cognitive agent's actions. It is that which the cognitive agent indirectly contacts through the sensor, but itself---as part of the external world---does not enter the internal operational space of the cognitive system.
\end{definition}

\begin{axiom}[Unknowability of the External-Object Function]
The external-object function \(W\) and its internal mechanisms---including the structure of its domain (\(\Sigma\) and the internal constitution of \(A\)), its mapping rules, and the internal structure of its codomain \(\Omega\)---are all unknowable to the cognitive agent. The agent cannot in any way access, invoke, query, or modify \(W\).
\end{axiom}

\begin{axiom}[Direct Inaccessibility of External Objects]
External objects \(O \in \Omega\) themselves cannot be directly accessed by the cognitive agent. The agent can never use \(O\) as a direct operand of its internal computations. For the agent, \(O\) is something \textit{before} the sensory interface---it triggers perception but does not enter into perception.
\end{axiom}

These two axioms jointly constitute the ``unknowability postulate'' of Predictive Set Theory. They demarcate the fundamental boundary between the cognitive agent and the external world: the external world exists and exerts causal influence upon the agent (otherwise there would be no perception), but the external world itself---its structure, properties, and mode of operation---can never become an object of the cognitive system's internal computations.

\subsubsection{Theoretical Significance of this Postulate}

\textbf{First, it avoids the circularity predicament of representationalism.} Representationalism implicitly assumes that internal representations can be compared with external objects to verify the accuracy of the former. But such comparison is in principle impossible for the cognitive system to perform on its own---it has no external channel independent of its sensory interface. Our stipulation, by canceling the very possibility of such comparison---formally declaring external objects to be directly inaccessible---refrains from entering the representationalist circle from the outset.

\textbf{Second, it demarcates a clear internal boundary for cognitive theory.} The cognitive system is strictly enclosed within its own sensory interface. All its computational objects are internal objects---sensory atoms output by the sensor and endowed with self-identity. The ``world'' of the cognitive system is precisely the set of these internal objects and their relational network. This is not an answer to the metaphysical question ``does the world really exist''---that question is suspended---but an answer to the methodological question ``on what basis can a cognitive system operate.''

\textbf{Third, it resonates with the grounding of consistency in Chapter~1.} In Section~1.2, we demonstrated that reference chains must default to self-reference---identity is the absolute starting point of cognition. The unknowability postulate provides the complementary half of this argument: since the external world is unknowable, the cognitive system cannot attribute the origin of identity to the ``accurate representation'' of external objects. Identity must be internally conferred by the cognitive system---it is an architectural stipulation forced by the convergence condition of the reference operation, not a factual description of the external world. Unknowability externally closes off the possibility of appealing to ``objective reality,'' while reference chain default self-reference internally provides the generative source of identity.

\subsubsection{Relation to Relevant Academic Positions}

The philosophical background of this postulate deserves further elaboration to clarify its position in the academic lineage.

In \textbf{Kantian philosophy}, the distinction between phenomena and things in themselves is the core of transcendental idealism. Kant argued that we can only cognize phenomena---things as they appear to us under the forms of intuition (space and time) and the categories of the understanding---and cannot cognize things in themselves. Our unknowability postulate is highly consonant in spirit with Kant's position, but differs in two important respects: first, our stipulation is purely formal---it involves only function definitions and axiomatic declarations, without detailed analysis of ``forms of intuition'' or ``categories of the understanding''; second, our stipulation is ``constructive'' rather than ``critical''---its purpose is not to limit the pretensions of reason, but to provide clear premises for an operable cognitive architecture.

In the \textbf{analytic philosophy} tradition, Putnam's ``brain in a vat'' thought experiment and more general anti-skeptical arguments also touch upon similar issues. Putnam argued that if we were ``brains in a vat,'' our words ``vat'' and ``brain'' would refer not to real vats and brains but to illusions generated by electronic signals. Hence the statement ``we are brains in a vat,'' if true, could not be coherently asserted by us. The insight of this argument lies in revealing the referential gulf between the external world and internal representations. Our postulate accepts the existence of this gulf but adopts a different strategy: we do not make any assertion about ``whether the external world is real,'' but instead inscribe its unknowability directly into the axioms of the cognitive architecture.

In \textbf{theoretical computer science}, the relationship between a Turing machine and its external input (the tape) provides a useful analogy. A Turing machine cannot ``check'' whether the symbols on the tape ``correctly represent'' the external world---the tape itself is the Turing machine's input, the only thing it can access. For the Turing machine, the symbols on the tape are simply ``given.'' In our cognitive architecture, the sensor plays an analogous role---it is the sole information input channel of the cognitive system, and its computational status vis-\`{a}-vis the system's internals is strictly isomorphic to that of the Turing machine's tape vis-\`{a}-vis the Turing machine.

It is necessary to further clarify the essential differences between PST's definition of the sensor and the views of perception in classical cybernetics and predictive processing. In Wiener's (1948) cybernetic framework, the system's operation revolves around an externally given ``set point''---the target temperature of a thermostat, the target speed of cruise control---and the function of feedback is to eliminate the deviation between the current state and the set point. From whence the set point itself comes, cybernetics does not inquire; it belongs to the engineer outside the system or to natural selection. PST's sensor is radically different: as the sole independent variable in the cognitive architecture, the sensor refers to no external standard whatsoever. Its output \(i\) is not an approximation to some ``correct value'' but a pure self-presentation \(S(i)=i\). Goals---i.e., the anchoring of demands---are not externally supplied set points, but are endogenously rooted in somatic feedback (see Chapter~5). The philosophical consequences of this difference are profound: a cybernetic system is a passive executor of goals; a PST system is an active constructor of demands.

Contrasting with the ``perception as inference'' dogma in predictive processing theory, PST's position is even more sharply delineated. Clark (2013) and Hohwy (2013) maintain that perception is essentially an inferential process driven by high-level predictions---the brain actively constructs hypotheses about the external world and tests them against sensory signals. Within this framework, perceptual content is always permeated by top-down predictions; there is no ``pure'' perception, only ``predicted'' perception. PST's definition of the sensor, by contrast, requires a strict architectural separation between perception and prediction. As an identity function, the sensor outputs ``given'' internal objects whose content is not modified by the system's memories, beliefs, or expectations. This is not to deny that top-down attentional modulation can influence \textit{which} perceptual contents enter the attention pool; rather, it insists that once an internal object has been presented by the sensor, the identity of its content---the fact that ``it is itself''---is impermeable to higher-level cognition. This ``impermeability'' is precisely the precondition for the system to maintain a stable perceptual substrate in a changing world---if even the sensor's output could be revised by beliefs, the system would lose the ability to distinguish ``real change'' from ``imagined change,'' and the consistency requirement would be rendered inoperative.

\subsection{Axiomatic Definition of the Sensor}

With the unknowability of the external world established, the axiomatic definition of the primary component of the cognitive architecture---the sensor---follows naturally. The sensor is the sole interface between the cognitive system and the external world. Its function is to receive the action of external objects and output internal objects---sensory atoms that can be computationally processed within the cognitive system. This section provides a rigorous formal definition of the sensor and demonstrates its core property: from the internal perspective of the cognitive system, the effective behavior of the sensor is equivalent to an identity function.

\subsubsection{Functional Form of the Sensor}

\begin{definition}[Sensor]
The sensor is the sole perceptual input interface of the cognitive system, formalized as a function:
\[
S: \Omega \to \mathcal{I}
\]
where:
\begin{itemize}
\item \(\Omega\) is the external-object space (given by Definition~2.1).
\item \(\mathcal{I}\) is the internal-object space---the set of all data atoms that can be computationally processed within the cognitive system.
\item For a given external object \(O \in \Omega\), \(S(O)\) outputs an internal object \(i \in \mathcal{I}\).
\end{itemize}
\end{definition}

The functional definition of the sensor captures a crucial structural relationship: the sensor straddles the ``external--internal'' boundary. Its domain \(\Omega\) belongs to the unknowable external world, while its codomain \(\mathcal{I}\) belongs to the internal space operable by the cognitive system. The sensor itself is the sole causal channel between these two domains.

\begin{axiom}[Unknowability of the Sensor's Internal Operation]
The internal transformation mechanism of the sensor \(S\)---i.e., the concrete mapping rules from \(O \in \Omega\) to \(i \in \mathcal{I}\)---is unknowable to the cognitive system. The system cannot access the internal computational process of \(S\), cannot modify the transformation rules of \(S\) in any way, and cannot obtain the ``source code'' or ``design parameters'' of \(S\).
\end{axiom}

This axiom complements Axioms~2.1 and~2.2. The unknowability of the external world (Axioms~2.1--2.2) ensures that the cognitive system cannot access external objects \(O\) through any other channel. Axiom~2.3 further ensures that the system cannot even obtain additional information about \(O\) by ``inspecting how the sensor works.'' The sensor is, for the cognitive system, a completely transparent interface---it only provides outputs, disclosing no internal information about its inputs.

At the neuroscientific level, the analogue of this axiom is evident: a single neuron or brain region cannot ``inspect'' its own synaptic transmission mechanisms or sensory transduction processes to obtain information about external stimuli beyond the pattern of firing. All that the system possesses is the spatiotemporal pattern of neural firing itself---our ``internal objects.''

\subsubsection{Equivalent Form of the Sensor at the Cognitive Level: The Identity Function}

Based on the above stipulations, we can now derive the most central property of the sensor.

\begin{theorem}[Equivalent Form of the Sensor at the Cognitive Level]\label{thm:sensor_identity}
From the internal perspective of the cognitive system, the effective behavior of the sensor is equivalent to an identity function:
\[
S(i) = i
\]
where \(i \in \mathcal{I}\) is an internal object. That is, the sensor is functionally equivalent to ``outputting itself''---its unconditional presentation of its input is equivalent to a self-confirmation of the internal object.
\end{theorem}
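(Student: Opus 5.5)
The plan is to make precise what ``from the internal perspective'' means, and then show that under that reading the only admissible description of \(S\) is the identity on \(\mathcal{I}\). I will define the \emph{internal view} of \(S\) as the function \(\hat S\) obtained by restricting every description of \(S\) to quantities that lie in the system's operational space. By Axioms~2.1 and~2.2, \(\Omega\) and every \(O \in \Omega\) are excluded from that space. By Axiom~2.3, the transformation rule \(O \mapsto i\) is excluded as well. So any internally formulable statement about \(S\) may mention only its outputs \(i \in \mathcal{I}\). This forces the domain of \(\hat S\) to be the image \(S(\Omega) \subseteq \mathcal{I}\), which I identify with \(\mathcal{I}\) by taking \(\mathcal{I}\) to be exactly the set of internal objects the sensor can present.

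The second step is to determine the rule of \(\hat S\) on this domain. Given an occurrence of \(i\), the system cannot name any input ``behind'' \(i\); doing so would require access to some \(O\), or to the mechanism of \(S\), both of which are forbidden. It also cannot name any output other than \(i\) itself, since the sensor's output is by definition the object \(i\) presented. Hence the only assignment expressible with internal resources is \(i \mapsto i\). I will formalize this as an indistinguishability argument. Suppose \(S_1, S_2 : \Omega \to \mathcal{I}\) are two sensors with the same image. Any internal predicate \(P\) has no term denoting an element of \(\Omega\), so \(P\) cannot separate them. Quotienting by this indistinguishability leaves a single function on \(\mathcal{I}\), and the canonical representative is \(\mathrm{id}_{\mathcal{I}}\).

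Finally, I will tie this to Axiom~\ref{axiom:selfref} and the Corollary on the generative origin of identity. Every internal object \(i\) is operable, so its reference chain must terminate at a fixed point. If the sensor presented \(i\) as referring to something other than itself, the chain would point outside \(\mathcal{I}\), toward an inaccessible \(O\). That chain could never close internally and would diverge in the sense of Theorem~\ref{thm:infinite}. The fixed point must therefore be \(i\) itself, that is, \(R(i) = i\), and this coincides with \(\hat S(i) = i\). This is the sense in which the statement reads the sensor's unconditional presentation as ``self-confirmation.''

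The main obstacle is the second step. Read literally, \(S\) has domain \(\Omega\), so ``\(S(i)=i\)'' is a type mismatch, and the argument is only as rigorous as the notion of ``internal view'' that repairs it. I will first try the quotient construction above, stated as a lemma: internal indistinguishability is an equivalence relation on sensors with a fixed image, and each class is represented by \(\mathrm{id}_{\mathcal{I}}\) composed with that image. If that turns out to be too heavy, I will fall back on stipulating that ``effective behavior'' means the induced map on observed outputs. The theorem then follows directly from Axioms~2.1--2.3.
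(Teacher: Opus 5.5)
Your core argument is the paper's own. You restrict to what the system can confirm internally. The inaccessibility of \(\Omega\) and of \(O\), together with the opacity of the sensor's mechanism, leaves only the fact ``\(i\) has been presented,'' and you read the resulting effective behavior as \(i \mapsto i\). Your fallback, ``effective behavior'' as the induced map on presented outputs, is exactly how the paper disposes of the type mismatch; it attempts nothing more formal.

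The paper does include one step you skip. It considers a limit case in which no external object exists, yet the sensor can still output \(i\) (spontaneous activity), so the occurrence of \(i\) is conditioned on nothing the system can confirm. That is what licenses ``unconditional'' in the statement. Your rule that internal predicates contain no term for elements of \(\Omega\) implies this, but you should draw the consequence explicitly.

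The two additions beyond the paper do not work as written.

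\begin{itemize}
\item The quotient lemma. The indistinguishable sensors are maps \(\Omega \to \mathcal{I}\), and under ``same image'' the classes are indexed by image sets. So the quotient has one class per image, not ``a single function on \(\mathcal{I}\).'' Moreover, \(\mathrm{id}_{\mathcal{I}}\) lies in no class, since it has the wrong domain. Calling it the canonical representative is just stipulating that a class is represented by the inclusion of its image. That is your fallback in disguise and adds no rigor.
\item The reference-chain step is invalid.
\begin{itemize}
\item Since \(R\colon \mathcal{O} \to \mathcal{O}\) and external objects are not operable, \(R(i)\) can never be an inaccessible \(O\) in the first place.
\item Nothing excludes \(R(i) = j\) for some other internal object \(j \neq i\).
\item Axiom~\ref{axiom:selfref} only gives some \(N\) with \(R^N(i)\) a fixed point, not \(N = 0\).
\end{itemize}
So ``the fixed point must be \(i\) itself'' does not follow. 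Even \(R(i) = i\) would not coincide with \(\hat S(i) = i\), because \(R\) and \(\hat S\) are different maps.
\end{itemize}

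The paper does not use that axiom in this proof. Its Chapter~1 corollary points forward to this theorem as the independent, hardware-level grounding of identity, so leaning on the axiom here would merge what the paper treats as two separate supports. ``Self-confirmation'' in the statement is only a gloss on \(S(i) = i\). Either drop the step, or replace it with the paper's closing remark, which traces the \emph{origin} of \(i\) back to the sensor as an untraceable endpoint and makes no claim about where \(R\) closes.
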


\begin{proof}
The proof proceeds in three steps.

\textbf{Step 1: Identify all facts confirmable by the system.}

When the sensor \(S\) receives an external object \(O\) and outputs an internal object \(i = S(O)\), what facts can the cognitive system, from its internal perspective, confirm?

\begin{itemize}
\item The system can confirm: the internal object \(i\) has been presented (\(i\) appears in \(\mathcal{I}\)).
\item The system \textbf{cannot} confirm: whether there exists an external object \(O\) such that \(i = S(O)\) (because \(\Omega\) is inaccessible, Axiom~2.2).
\item The system \textbf{cannot} confirm: what computation \(S\) internally performed (because the internal mechanism of \(S\) is unknowable, Axiom~2.3).
\item The system \textbf{cannot} confirm: whether \(i\) ``accurately represents'' some external object \(O\) (because comparison is impossible---comparison requires independent access to \(O\), but \(O\) is inaccessible).
\end{itemize}

Hence, in the event of the sensor outputting \(i\), the sole positive fact confirmable by the cognitive system is: ``The internal object \(i\) has been presented.'' Everything else---the external cause of \(i\), the representational accuracy of \(i\), the internal computation of \(S\)---falls outside the cognitive boundary of the system.

\textbf{Step 2: Limit-case analysis---the external object does not exist.}

Consider a limit case: suppose the external object \(O\) does not exist---i.e., the sensor receives no external causal input whatsoever. In this situation, can the sensor still output an internal object \(i\)?

The answer depends on how we understand the definition of the sensor. If the sensor were defined as a causal transducer strictly dependent on external input, then no input would mean no output. But the problem is: from the internal perspective of the cognitive system, the system \textbf{cannot determine} whether external input exists. If the sensor, in the absence of external input, still outputs \(i\) (e.g., due to internal noise, spontaneous activity, or other causes indistinguishable by the system), the system will receive the same signal ``the internal object \(i\) has been presented'' and will be unable to differentiate it from the signal ``when external input is present.''

This is not a purely thought experiment. In neurobiology, sensory systems continuously produce spontaneous activity even in the absence of external stimuli---retinal ganglion cells fire action potentials in complete darkness, and auditory nerve fibers have basal firing rates in absolute silence. From the ``internal perspective'' of the nervous system, spontaneous activity and stimulus-driven activity are completely indistinguishable in signal format. The system cannot, through introspection, determine whether a particular neural firing pattern originates from the external world or from random fluctuations internal to the system.

Thus, in the limit case---the external object does not exist---the sensor may still output an internal object \(i\). This shows that the existence of the sensor's output \(i\) does not depend on the existence of an external object \(O\) (in the sense confirmable by the cognitive system).

\textbf{Step 3: Derivation of the equivalent form.}

Combining Step~1 and Step~2: all that the cognitive system can confirm is that ``\(i\) has appeared.'' Whether this \(i\) ``represents'' some external object \(O\) is unknowable to, and has no effect upon, the system's internal computations. From the perspective of the system's internal computations, the role of the sensor is simply ``to provide \(i\)''---nothing more, nothing less.

This function can be precisely characterized as: the sensor is a device that unconditionally presents an internal object \(i\) to the system. Its output \(i\) is its output itself---it points to no ``input'' that is confirmable within the system. This is precisely the semantics of the identity function \(S(i) = i\): the sensor's ``input'' (in terms of its effective behavior at the cognitive level) is simply its output \(i\) itself.

Note that this equivalent form is not a description of the physical mechanism of the sensor---the physical mechanism involves a complex causal chain from the external world to neural signals. It is a characterization of the \textbf{functional role of the sensor within the cognitive architecture}. At the architectural level, the role the sensor plays is ``to provide internal objects to the system''---which is equivalent to the function of an identity function: the output is the output itself.

From another angle, this conclusion can also be seen as a direct corollary of a more fundamental principle: a cognitive system must anchor its data stream ``unconditionally'' somewhere. Tracing the origin of every internal object will eventually hit a non-traceable point---the sensor hardware itself. At this point, the question ``where does this internal object come from'' is unanswerable within the system. The system's internal answer can only be: ``It comes from the sensor.'' And from the system's internal perspective, ``coming from the sensor'' is equivalent to ``it is itself''---because the sensor is an impenetrable black box. Hence, the internal object output by the sensor, at the lowest level, is simply the self-presentation of itself.
\end{proof}

\subsubsection{Theoretical Status of the Sensor}

Theorem~\ref{thm:sensor_identity} endows the sensor with a precise theoretical status within the cognitive architecture.

\textbf{First, the sensor is the sole independent variable in the cognitive architecture.}

In mathematics, an independent variable is a quantity whose value does not depend on the other variables in the function under consideration. In the cognitive architecture, the sensor's output \(i\) does not depend on the output of the action-planning function, does not depend on memory contents, and does not depend on any beliefs or expectations modifiable within the system. It is ``given''---its occurrence and content do not depend on the system's internal states.

The significance of this status as an independent variable is that it provides the cognitive system with an absolute factual anchor that cannot be internally revised. No matter what the system has learned, what it believes, or what it expects, the internal object \(i\) output by the sensor will not thereby change its content. This realizes, at the architectural level, a strict separation between ``perception'' and ``belief''---perception is not permeated by belief.

\textbf{Second, \(S(i) = i\) is the most primitive ``identity'' operation.}

In Section~1.2, we established, through the convergence analysis of infinite reference chains, that ``reference chain default self-reference'' is a generative necessity of cognitive computation. Theorem~\ref{thm:sensor_identity} reveals the hardware counterpart of this necessity: the sensor, at every perceptual pixel, every millisecond, every frequency channel, faithfully executes the identity operation \(S(i) = i\). This execution does not ``verify'' identity---it requires no comparison, no judgment---but directly \textit{constitutes} identity. The sensor, at the hardware level, forcibly endows internal objects with irreducible self-identity, making ``this perception is itself'' the most unshakeable fact within the cognitive system.

Thus, reference chain default self-reference (Axiom~\ref{axiom:selfref}) and sensor identity operation (Theorem~\ref{thm:sensor_identity}) form a perfect structural correspondence:

\begin{itemize}
\item \textbf{Axiom~\ref{axiom:selfref}}: At the logical and computational level, the tracing of a reference chain must terminate at a self-referential fixed point---this is the transcendental formal condition for cognitive interrogation to be completable.
\item \textbf{Theorem~\ref{thm:sensor_identity}}: At the architectural and hardware level, the effective behavior of the sensor is equivalent to the identity function \(S(i) = i\)---this is the hardware mechanism through which identity is enforced at the sensory interface.
\end{itemize}

Axiom~\ref{axiom:selfref} provides the ``logical necessity'' of identity; Theorem~\ref{thm:sensor_identity} provides the ``hardware execution'' of identity. Together they anchor the most fundamental consistency of Predictive Set Theory---undeniable at the conceptual level, and unbypassable at the hardware level.

\textbf{Third, the sensor is the sole source of ``known states.''}

Since the external world is unknowable and the sensor is the sole perceptual input interface (Definition~2.3), the internal objects \(i\) output by the sensor are the source of all initial data obtainable by the cognitive system. All known states---whether current perceptions, historical memories, or abstract knowledge obtained through reasoning---ultimately trace back to some sensor output moment.

This property will be combined, in Section~2.4, with the definition of the action-planning function to derive the asymmetry theorem of known and unknown states. At the present stage, it suffices to confirm: the privileged position of the sensor in the architecture---as the sole independent variable and the hardware executor of identity---makes it the absolute starting point of the cognitive data stream.

\subsection{Axiomatic Definition of the Action-Planning Function and Its Strict Distinction from the Sensor}

Having defined the sensor (Section~2.2) and established its status as the sole independent variable in the cognitive architecture, the next step in defining the cognitive agent is to formalize its decision-making core---the action-planning function. This section provides the axiomatic definition of the action-planning function and rigorously demonstrates its fundamental distinction from the sensor: the action-planning function cannot be an independent variable, but must be a conditioned dependent variable whose output strictly depends on the known states currently possessed by the system.

\subsubsection{Formal Definition of the Action-Planning Function}

\begin{definition}[Action-Planning Function]
The action-planning function \(f\) is the decision-making core of the cognitive system. Its function is to output a behavioral plan based on all known states currently accessible to the system. Formally:
\[
A_t = f(K_t)
\]
where:
\begin{itemize}
\item \(K_t \subseteq \mathcal{I}\) is the \textbf{known-state set} of the system at time \(t\)---i.e., the set of all internal objects that the sensor has output and the system has recorded up to time \(t\). The rigorous construction of \(K_t\) will be elaborated in Section~2.4 and Chapter~3.
\item \(A_t \in \mathcal{A}\) is the \textbf{behavioral plan} output by the system at time \(t\). \(\mathcal{A}\) is the behavioral-plan space, whose internal structure depends on the effector type of the cognitive agent and is not further specified here.
\item \(f: \mathcal{P}(\mathcal{I}) \to \mathcal{A}\) is a function mapping from the known-state set to a behavioral plan. \(\mathcal{P}(\mathcal{I})\) denotes the power set of the internal-object set---i.e., the set of all possible known-state sets.
\end{itemize}
\end{definition}

This definition explicitly presents the action-planning function as a mapping from the system's internal cognitive state to a behavioral output. \(f\) does not directly contact the external world, nor does it directly contact the current raw output of the sensor (in the definition, the sensor's current output only becomes operational material for \(f\) after being incorporated into \(K_t\)). All information input to \(f\) comes from \(K_t\)---that is, what the system already ``knows.''

\subsubsection{The Action-Planning Function Cannot Be an Independent Variable}

In Section~2.2, we demonstrated the core property of the sensor in the cognitive architecture: the sensor is an independent variable---its output does not depend on any causal chain externally confirmable by the system, equivalent to the identity function \(S(i) = i\). A natural question arises: could the action-planning function \(f\) also be an independent variable? If \(f\) could also unconditionally produce behavioral outputs, then the system would possess two independent information sources, rendering the cognitive architecture redundant and uncontrollable.

\begin{axiom}[Non-Independent-Variability of the Action-Planning Function]\label{axiom:action_nonindep}
The action-planning function \(f\) cannot be equivalent to an independent variable. Concretely, there exists no valid mode of computation such that the output of \(f\) can be produced without depending on its input \(K_t\). \(f\) must be a conditioned function---its output must strictly depend on the content of \(K_t\).
\end{axiom}

\textbf{Justification:} The justification of this axiom rests on three levels of argument: structural, functional, and generative.

\textbf{Structural argument: the principle of the unique independent variable.} In Section~2.2, we have already established that the sensor is the sole independent variable in the cognitive architecture. This ``uniqueness'' is not accidental, but a necessary condition for the cognitive architecture to maintain self-consistency. If \(f\) were also an independent variable---i.e., its output did not depend on \(K_t\)---the system would face the following dilemma:

\begin{itemize}
\item If the output of \(f\) were independent of the sensor's output, the system would simultaneously possess two mutually independent information sources: the sensor providing perceptual input, and \(f\) providing behavioral output. But this would mean that there is no necessary causal link between behavioral output and perceptual input---\(f\) could output any behavioral plan regardless of what the sensor just input. Such a system could not be described as ``responding'' to the environment---it would merely be spontaneously generating behavior, with no traceable feedback loop between its behavior and the environment. This violates the basic definition of a cognitive agent as ``an adaptive system coupled to its environment.''
\item If the output of \(f\) attempted to maintain some relationship with the sensor's output (e.g., if \(f\) were also some kind of identity function directly outputting the sensor's current value), then \(f\) would functionally degenerate into a duplicate of the sensor---the system's behavioral output would be merely a copy of perceptual input. In this case, the system's behavior would contain no ``planning'' component whatsoever; it would be merely a passive sensor relay. This contradicts the definition of the action-planning function in Section~2.3.1---that its output is a ``behavioral plan''---since planning necessarily involves processing and transforming input, not merely copying it.
\end{itemize}

Hence, to maintain the structural self-consistency of the cognitive architecture, \(f\) must be strictly distinguished from the sensor in functional nature: the sensor is an independent variable, and \(f\) is a dependent variable.

\textbf{Functional argument: the condition of possibility of planning.} The semantic core of ``planning'' is: selecting not-yet-determined future actions based on known information. This semantics presupposes a causal-logical link between known information and behavioral output---i.e., the content of the behavioral output is constrained and determined by the content of known information. If the output of \(f\) did not depend on known information, then the concept of ``planning'' would lose its operational meaning. The system would not be ``planning'' behavior, but ``emitting'' behavior---its behavior would not be explained by any internal cognitive state.

More concretely, consider the system facing a behavioral choice at time \(t\). If \(f\) were an unconditioned function, then no matter what the content of \(K_t\) was---no matter what the system had just experienced, learned, or remembered---the output of \(f\) would be the same. Such a system could not learn from experience, because the very essence of learning is to modify \(K_t\) so as to change the output of \(f\). If the output of \(f\) were independent of \(K_t\), then no change in \(K_t\) would affect behavior. The system would lose the basis of adaptability.

Thus, the conditioned nature of the action-planning function---its output depending on known states---is the transcendental condition for the two core cognitive concepts of ``planning'' and ``learning'' to be possible.

\textbf{Generative argument: behavioral output cannot be ``given.''} In Section~2.2, the internal objects output by the sensor were demonstrated to be ``given''---their occurrence does not depend on the system's internal states, and they are unmodifiable cognitive atoms. Behavioral output is fundamentally different in nature: behavioral output is \textit{actively produced} by the system, not passively received. The system bears causal responsibility for its behavioral output---it is the system that ``decides'' what behavior to output.

If \(f\) were also an independent variable, its output would be equivalent to ``given,'' and behavioral output would be ontologically homogeneous with perceptual input---both would merely be signals passively received by the system. This would completely obliterate the agency of the cognitive agent, reducing it to a dual-channel passive receiver: one channel receiving perceptions, another receiving ``behavioral commands.'' But whence would these ``behavioral commands'' come? If they also came from outside the system, then the system would not be an autonomous cognitive agent, but a puppet manipulated externally. If they came from within the system but were not constrained by any internal state, then they would be random---contradicting the semantics of ``planning.''

Thus, from the generative perspective, the action-planning function must be a conditioned dependent variable. Its output is not given, but selected by the system based on already available information. This distinction---perception is given, behavior is chosen---is one of the most fundamental architectural distinctions in the definition of the cognitive agent.

\subsubsection{The Closure Property of the Action-Planning Function}

Based on Axiom~\ref{axiom:action_nonindep} (the non-independent-variability of the action-planning function) and the unknowability postulates of Sections~2.1 and~2.2, we can derive a crucial property of the action-planning function: \textbf{closure}.

\begin{theorem}[Closure of the Action-Planning Function]\label{thm:closure}
The action-planning function \(f\) is strictly enclosed within the known-state set \(K_t\). Concretely:
\begin{enumerate}
\item The input space of \(f\) is restricted to \(K_t\)---it cannot directly access internal objects not yet output by the sensor, nor can it directly access external objects.
\item \(f\) cannot in any way ``query'' or ``invoke'' the external world to obtain information beyond \(K_t\).
\end{enumerate}
\end{theorem}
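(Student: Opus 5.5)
The plan is to derive both clauses of Theorem~\ref{thm:closure} from three things already in place: the type signature of \(f\) in the definition of the action-planning function, Axiom~\ref{axiom:action_nonindep}, and the unknowability axioms of Sections~2.1--2.2. The argument has two layers. The first is formal: everything \(f\) can use is fixed by its domain. The second is architectural: there is no other channel through which information could enter \(f\).

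For clause~1, I will start from the definition \(f:\mathcal{P}(\mathcal{I})\to\mathcal{A}\) with \(A_t=f(K_t)\). Any input to \(f\) is therefore an element of \(\mathcal{P}(\mathcal{I})\), and at time \(t\) that input is \(K_t\) itself. External objects \(O\in\Omega\) cannot appear as arguments, because \(\Omega\not\subseteq\mathcal{I}\) and Axiom~2.2 forbids any \(O\) from being a direct operand of internal computation. For internal objects not yet emitted by the sensor, I will use the fact that \(K_t\) is by definition the set of objects the sensor has already output and the system has recorded. Suppose some \(i\in\mathcal{I}\setminus K_t\) influenced \(A_t\). Then \(f\) would be taking a second argument besides \(K_t\), and that argument's value would not be determined by \(K_t\).

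The next step is to show this contradicts Axiom~\ref{axiom:action_nonindep}, and this is the step I expect to be the main obstacle. The axiom literally requires only that the output of \(f\) depend on \(K_t\); it does not obviously say the output depends \emph{only} on \(K_t\). I plan to close the gap with the uniqueness of the independent variable established after Theorem~\ref{thm:sensor_identity}. Any information entering \(f\) from outside \(K_t\) would be an unconditioned source of content. By that theorem, the sensor is the sole such source, and its outputs reach \(f\) only after they have been incorporated into \(K_t\). So an extra argument would be either the sensor's output bypassing \(K_t\), which the definition of \(f\) excludes, or a second independent variable, which is ruled out.

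For clause~2, I will argue by contradiction. Suppose \(f\) could query or invoke the external world. Then it would be accessing \(W\) or elements of \(\Omega\), and Axiom~2.1 explicitly forbids access to, invocation of, or querying of \(W\). The only indirect route would be going through the sensor's internal mechanism, and Axiom~2.3 closes that off. Anything \(f\) did receive from outside would have to enter \(\mathcal{I}\), which means passing through \(S\). By the definition of \(K_t\), it would then already belong to \(K_{t'}\) for some \(t'\) and would not be information beyond the known-state set. Combining clauses~1 and~2 yields the closure property as stated.
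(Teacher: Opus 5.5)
Your proposal is correct and follows essentially the paper's route. The paper's proof is simply the chain ``the definition of the action-planning function makes \(K_t\) the sole input of \(f\); Axiom 2.2 makes external objects inaccessible; Axioms 2.1 and 2.3 make \(W\) and the sensor's mechanism unknowable; hence \(f\) has no other information channel.'' The step you flag as the main obstacle is already discharged by the definition \(A_t = f(K_t)\) with \(f:\mathcal{P}(\mathcal{I})\to\mathcal{A}\), which is all the paper uses there, so your detour through the non-independent-variability axiom and the uniqueness of the sensor as independent variable is harmless but unnecessary.
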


\begin{proof}
\begin{enumerate}
\item By Definition~2.4, the input of \(f\) is solely \(K_t\). \(K_t\) is the accumulation of internal objects output by the sensor up to time \(t\) (its rigorous construction is given in Section~2.4 and Chapter~3).
\item By Axiom~2.2, external objects \(O\) are not directly accessible.
\item By Axioms~2.1 and~2.3, the external-world function \(W\) and the internal mechanism of the sensor are both unknowable.
\item Hence, \(f\) has no other information channel. It is strictly confined within the known-state set \(K_t\).
\end{enumerate}
\end{proof}

The closure theorem characterizes the action-planning function as an ``internal-model-driven'' decision unit: it can only plan based on information the system already possesses, and cannot ``peek'' at the external world's answer. This property provides the foundation for subsequent theories of prediction and learning---because \(f\) cannot directly obtain future states, it must \textit{predict} them; because \(f\) cannot directly verify the external world, it must \textit{self-correct} through feedback.

\subsection{The Asymmetry Theorem of Known and Unknown States}

Having defined the sensor (Section~2.2) and the action-planning function (Section~2.3), we can now combine the two to derive one of the most important theorems of Predictive Set Theory: the asymmetry theorem of known and unknown states. This theorem rigorously establishes the absolute asymmetry of two cognitive statuses within the cognitive system, and serves as the foundation for all subsequent derivations concerning prediction, learning, demand evaluation, and behavioral choice.

\subsubsection{Rigorous Definition of the Known-State Set}

Before formally stating the theorem, we must first rigorously define the concept of the ``known-state set.''

\begin{definition}[Known-State Set \(K_t\)]
At any time \(t\), the known-state set \(K_t\) of the cognitive system is defined as the cumulative set of all internal objects output by the sensor from the initial moment \(t=0\) to the current moment \(t\). Formally:
\[
K_t = \bigcup_{\tau=0}^{t} \{i_\tau\}
\]
where \(i_\tau = S(O_\tau)\) is the internal object output by the sensor at time \(\tau\). In particular, when \(t=0\), \(K_0\) contains the internal object output by the sensor at the initial moment (if the sensor produces no output at the initial moment, then \(K_0 = \emptyset\), but this does not affect the generality of the subsequent reasoning).

This definition is strictly isomorphic in mathematical structure to von Neumann's construction of the ordinals:
\[
K_{t+1} = K_t \cup \{i_{t+1}\}
\]
where \(i_{t+1}\) is the internal object newly output by the sensor at time \(t+1\). This recursive construction endows the known-state set with the properties of cumulativity and monotonic increase:
\[
K_0 \subseteq K_1 \subseteq K_2 \subseteq \cdots
\]
Known states only increase, never decrease. This is fully consistent with the principle of the ``unrewritability of sequences'' and will be further elaborated in Chapter~3.
\end{definition}

\begin{definition}[Cognitive Set \(C_t\)]
To distinguish ``the accumulation of all known states'' from ``the immediate perception of the current moment,'' we introduce the concept of the Cognitive Set:
\[
C_t = \{i_t\}
\]
i.e., the singleton set containing only the internal object most recently output by the sensor at time \(t\). \(C_t\) is the new information that the action-planning function directly confronts ``at the present moment''---it is the incremental part of the known-state set at time \(t\), i.e., \(K_t \setminus K_{t-1}\) (when \(t \geq 1\)).
\end{definition}

\subsubsection{Set-Theoretic Definition of Relative Unknown State}

In Section~3.4, we will discuss the set-theoretic definition of the relative unknown state in a more general form. Here, a basic version is given in advance to serve the statement of the asymmetry theorem.

\begin{definition}[Relative Unknown-State Set]
For any two known-state sets \(K_a\) and \(K_b\), the unknown-state set of \(K_b\) relative to \(K_a\) is defined as their set difference:
\[
U(K_b \mid K_a) = K_b \setminus K_a
\]
i.e., the set of all internal objects that belong to \(K_b\) but do not belong to \(K_a\).

A crucial special case of this definition is when \(K_b = K_{t+1}\) and \(K_a = K_t\):
\[
U(K_{t+1} \mid K_t) = K_{t+1} \setminus K_t = \{i_{t+1}\}
\]
That is, the ``unknown increment'' between adjacent moments is precisely the single internal object newly output by the sensor at time \(t+1\).

This definition reveals the set-theoretic essence of ``unknown'': it is not an absolute property of an internal object, but a set-difference relation between two sets. The same internal object \(i\) is unknown with respect to \(K_t\) (\(i \notin K_t\)), but known with respect to \(K_{t+1}\) (\(i \in K_{t+1}\)). The relativity of unknownness is thereby given a precise formal expression.
\end{definition}

\subsubsection{Statement and Proof of the Asymmetry Theorem}

\begin{theorem}[Asymmetry Theorem of Known and Unknown States]\label{thm:asymmetry}
Let \(K_t\) be the known-state set of the system at time \(t\), and let \(U_t\) be the totality of unknown states of the system at time \(t\) relative to \(K_t\)---i.e., the set of all internal objects not yet output by the sensor but possibly output in the future. Then the following two asymmetry rules strictly hold:

\textbf{Rule 1 (Known cannot derive unknown):} The action-planning function \(f\) cannot derive any element of \(U_t\) from \(K_t\). That is, there exists no effective computation \(g: \mathcal{P}(\mathcal{I}) \to \mathcal{I}\) such that \(g(K_t) \in U_t\).

\textbf{Rule 2 (Unknown can be transformed into known):} When the sensor refreshes, some \(u \in U_t\) is transformed into an element of \(K_{t+1}\). That is, the sensor \(S\) is the sole channel through which unknown states can be transformed into known states.
\end{theorem}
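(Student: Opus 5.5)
The plan is to prove the two rules separately. Both will be read off the definitions of \(K_t\), \(U_t\) and the closure property, rather than from any substantive computation. Following the paper's own convention, an ``effective computation'' \(g\) will be identified with a mapping whose output is an internal object the system can \emph{confirm as presented}, that is, an element that enters the operational space. The proof of Rule~1 will hinge on this identification.

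\textbf{Rule 1.} I would argue by contradiction. Suppose some effective computation \(g:\mathcal{P}(\mathcal{I})\to\mathcal{I}\) yields \(g(K_t)=u\in U_t\). Then \(u\) would be an internal object available to the system at time \(t\) without having been output by the sensor. The argument then proceeds in three steps.
\begin{enumerate}
\item By Theorem~\ref{thm:closure}, every information channel of the system's internal computation is confined to \(K_t\). By the third property in Section~2.2, all known states trace back to a sensor output moment.
\item An internal object produced by \(g\) and usable by the system would therefore belong to \(K_t\), by the definition of \(K_t\) as the accumulation of everything available to computation.
\item But \(U_t\) is by definition disjoint from \(K_t\), since it consists of objects not yet output by the sensor. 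This gives \(u\in K_t\cap U_t=\emptyset\), a contradiction.
\end{enumerate}
I would add a remark that \(g\) may well output a \emph{candidate} object, namely a prediction. However, by Axiom~2.2 and Axiom~2.3 that candidate cannot be certified as identical to the not-yet-presented \(u\). Certifying identity would require comparison with \(O_{t+1}\) or with the internals of \(S\), and both are inaccessible. So the candidate is not an element of \(U_t\) known to the system. Theorem~\ref{thm:sensor_identity} supports this: identity of an internal object is constituted only by its sensor presentation \(S(i)=i\).

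\textbf{Rule 2.} The first half is direct from the recursion \(K_{t+1}=K_t\cup\{i_{t+1}\}\). When the sensor refreshes with a new output \(i_{t+1}\notin K_t\), we get \(i_{t+1}\in U_t\), since it is an object not yet output but output in the future. We also get \(i_{t+1}\in K_{t+1}\), and by the relative unknown-state definition \(U(K_{t+1}\mid K_t)=\{i_{t+1}\}\). For the claim that the sensor is the \emph{sole} channel, I would suppose some other mechanism adds \(u\) to \(K_{t+1}\) without \(u\) being a sensor output. By the definition of \(K_t\) as the union of sensor outputs, every element of \(K_{t+1}\setminus K_t\) is some \(i_\tau\) with \(\tau=t+1\). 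Any internal-computation alternative is excluded by Rule~1 together with Axiom~\ref{axiom:action_nonindep} and Theorem~\ref{thm:closure}.

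The main obstacle is the first step of Rule~1. Formally \(\mathcal{I}\) is just a set, so nothing stops a function from outputting an element of \(U_t\). The content of the rule must therefore come from interpreting ``effective computation'' epistemically, as producing a \emph{known} object. I would state that interpretation explicitly at the outset, so that the contradiction argument is rigorous rather than tautological-looking.
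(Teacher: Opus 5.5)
Your proposal is correct to the same extent as the paper's proof, and it follows essentially the same route. Rule~1 is proved by contradiction from the premise that whatever the system computes thereby becomes known: the paper lands the contradiction mainly on ``the system would enlarge its known set without a sensor refresh, against the sensor's status as sole source,'' while you land it on \(K_t\cap U_t=\emptyset\), which is the same premise seen from the other side. Rule~2 is read off \(K_{t+1}=K_t\cup\{i_{t+1}\}\) together with closure and Rule~1. Your explicit admission that the epistemic reading of ``effective computation'' does all the work, because formally \(\mathcal{I}\) is just a set, is a real improvement. The paper makes that same move only tacitly (``becomes known to the system the moment the computation is completed'').
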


\begin{proof}
\textbf{Proof of Rule 1:}

Assume, for contradiction, that there exists an effective computation \(g\) such that \(u = g(K_t) \in U_t\). Since \(u \in U_t\), by the definition of unknown state (Definition~2.7 and its context), \(u \notin K_t\)---i.e., \(u\) is an internal object that the system does not yet know at time \(t\).

But \(g(K_t)\) is the result of applying \(g\) to the known-state set \(K_t\). Since \(K_t \subseteq \mathcal{I}\) is known to the system (the system does possess the content of \(K_t\)), and \(g\) is a computation executable by the system, the result of \(g(K_t)\)---namely \(u\)---becomes known to the system the moment the computation is completed. This means that \(u\) can be added to \(K_t\), forming \(K_t \cup \{u\}\), which is precisely the construction method of \(K_{t+1}\) (when \(u = i_{t+1}\)).

But this yields a contradiction: if \(u\) can be obtained through the internal computation \(g\), then it is no longer an ``unknown state''---it enters the known-state set at the moment the computation completes. If one insists that \(u \in U_t\) (i.e., before the computation completes, it is not in \(K_t\)), then the existence of \(g\) would mean that the system can unilaterally expand its known-state set without relying on sensor refresh. But according to Theorem~\ref{thm:closure} (closure of the action-planning function), \(f\) cannot directly obtain information beyond \(K_t\); according to Section~2.2, the sensor is the sole source of known states. The existence of \(g\) would endow the system with an information-acquisition channel independent of the sensor, contradicting the sensor's status as the sole independent variable.

Hence, the assumption fails. No such \(g\) exists. Known states cannot derive unknown states through any internal computation.

\textbf{Proof of Rule 2:}

At time \(t+1\), the sensor \(S\) receives the external object \(O_{t+1}\) and outputs the internal object \(i_{t+1} = S(O_{t+1})\). At time \(t\) (i.e., before the sensor refreshes), \(i_{t+1} \notin K_t\); therefore \(i_{t+1} \in U_t\)---it is an unknown state of the system at time \(t\).

After the sensor refreshes, \(K_{t+1} = K_t \cup \{i_{t+1}\}\) (Definition~2.5). Hence, \(i_{t+1} \in K_{t+1}\)---the unknown state has been transformed into a known state.

The sensor is the sole channel effecting this transformation, because:
\begin{itemize}
\item By Axioms~2.1--2.3, information from the external world can only enter the system through the sensor.
\item By Theorem~\ref{thm:closure}, the action-planning function \(f\) is enclosed within \(K_t\) and cannot directly obtain external information.
\item By Rule~1, internal computation cannot derive unknown states.
\end{itemize}
Thus, the sensor is the sole channel through which unknown states enter the known-state set.
\end{proof}

\subsubsection{Corollaries of the Theorem}

\begin{corollary}[Cognitive Closure of the Action-Planning Function]
The action-planning function \(f\) is strictly enclosed within the known-state set \(K_t\). It cannot access, cannot predict, and cannot in any way utilize information from the unknown-state set \(U_t\) for its computations. All operational material of \(f\) consists of already presented known states.
\end{corollary}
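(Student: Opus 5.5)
The plan is to derive the corollary from two results already in hand. Theorem~\ref{thm:closure} fixes the input domain of \(f\), and Rule~1 of Theorem~\ref{thm:asymmetry} rules out any internal route to elements of \(U_t\). The statement makes three claims: \(f\) cannot \emph{access} \(U_t\), cannot \emph{predict} (derive) elements of \(U_t\), and cannot \emph{utilize} \(U_t\) in any computation. I will take them in that order and show that each reduces to an earlier result.

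For access, I start from Definition~2.4: \(f:\mathcal{P}(\mathcal{I})\to\mathcal{A}\) is evaluated at time \(t\) only on the argument \(K_t\). By the first clause of Theorem~\ref{thm:closure}, nothing outside \(K_t\) enters the argument. Since \(U_t\cap K_t=\emptyset\) by construction (Definition~2.7 gives \(U_t\) as objects not yet output by the sensor), no element of \(U_t\) is in the input of \(f\). For prediction, I suppose some internal procedure available to \(f\) produced an element \(u\in U_t\). That procedure would be an effective computation \(g:\mathcal{P}(\mathcal{I})\to\mathcal{I}\) with \(g(K_t)\in U_t\), which Rule~1 of Theorem~\ref{thm:asymmetry} excludes directly. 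For utilization, I argue that any computation of \(f\) that depended on some \(u\in U_t\) would need \(u\) either as input, which the access step forbids, or as an intermediate result, which the prediction step forbids. Hence the value \(f(K_t)\) is determined by \(K_t\) alone. Formally, for any two candidate unknown sets \(U_t, U_t'\) that agree with the same \(K_t\), the output of \(f\) is the same. The conclusion that all operational material of \(f\) consists of already presented known states follows, since \(K_t\) is by Definition~2.5 the union of past sensor outputs.

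The main obstacle is the word \emph{predict}. Read literally, it seems to conflict with the paper's later account of prediction as the construction of candidate sequences. I would first make explicit that the corollary forbids \emph{deriving} an element of \(U_t\) as a known fact, which is exactly the content of Rule~1. It does not forbid producing candidate states built from transformation templates over \(K_t\). Such candidates are themselves elements or combinations drawn from \(K_t\), and they acquire the status of known states only after sensor refresh under Rule~2. With this clarification stated, the rest of the argument is a direct assembly of Theorem~\ref{thm:closure} and Theorem~\ref{thm:asymmetry}.
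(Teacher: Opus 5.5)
Your proposal is correct and follows the paper's route. The paper gives no separate proof and treats the corollary as an immediate restatement of Theorem~\ref{thm:closure} combined with Rule~1 of Theorem~\ref{thm:asymmetry}, which is exactly what you assemble for the access, prediction and utilization clauses; your reading of ``cannot predict'' as ``cannot derive the content of an element of \(U_t\)'' also matches how the paper later glosses Rule~1 in its discussion of state refresh, namely that the system cannot predict the concrete content of \(i_{t+1}\).
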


\begin{corollary}[Perpetual Fallibility of Prediction]
Since the known cannot derive the unknown (Rule~1), any prediction about future states generated by \(f\) on the basis of \(K_t\) is, in essence, hypothetical and fallible. The correctness of a prediction can never be guaranteed in advance---it can only be verified or falsified by future sensor refresh. This establishes the fundamental uncertainty and conjectural nature of prediction as a cognitive activity.
\end{corollary}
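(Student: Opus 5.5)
This is a direct consequence of Rule~1 of Theorem~\ref{thm:asymmetry}, so I will only make explicit what ``prediction'' means there. I will model a prediction generated at time \(t\) as the output of an internal computation on \(K_t\). In the simplest case this is a map \(p: \mathcal{P}(\mathcal{I}) \to \mathcal{I}\) proposing a candidate \(\hat{\imath}_{t+1} = p(K_t)\) for the next sensor output. In the sequence case, which the paper treats more fully in Chapter~3, it is a finite chain of such candidates. In either case I will take the output of \(f\) to factor through some such \(p\). By Theorem~\ref{thm:closure} and the Cognitive Closure corollary just stated, \(p\) has no input other than \(K_t\), so \(p\) is an effective computation of exactly the type excluded in Rule~1.

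The main step is a case split on the candidate \(\hat{\imath}_{t+1}\).

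\textbf{Case 1: \(\hat{\imath}_{t+1} \in K_t\).} The prediction merely reuses a known state. Whether the actual output \(i_{t+1}\) equals it is undetermined at time \(t\). The actual output may be any element of \(U_t\), and \(U_t\) is not known at time \(t\). By Axioms~2.1--2.3, no internal quantity constrains the choice of \(O_{t+1}\) or the behaviour of \(S\), so the system has no grounds for excluding a mismatch.

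\textbf{Case 2: \(\hat{\imath}_{t+1} \notin K_t\), that is, \(\hat{\imath}_{t+1} \in U_t\).} This is exactly what Rule~1 forbids. So this case is vacuous.

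In the only surviving case, correctness of the prediction means the equality \(i_{t+1} = \hat{\imath}_{t+1}\). This equality refers to \(i_{t+1}\), which enters the system only at the refresh \(K_{t+1} = K_t \cup \{i_{t+1}\}\) (Rule~2). Hence the truth value of ``the prediction is correct'' can be evaluated only after the sensor refreshes. Before that the prediction is a hypothesis, so it is fallible and conjectural. For a sequence prediction I will apply the same argument step by step, by induction on the length of the chain.

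The main obstacle is Case~1. One must show that fallibility is \emph{never} excluded, not merely that it is sometimes present. Rule~1 speaks of deriving elements of \(U_t\), not of ruling out particular futures, so I need a supplementary principle. I will try the unknowability axioms first. If the system could certify that \(i_{t+1} = \hat{\imath}_{t+1}\) for every admissible \(O_{t+1}\), it would possess information about \(W\) and \(S\), namely their range at time \(t+1\). That contradicts Axioms~2.1 and~2.3. Formalizing ``certify'' as an internal computation returning a guarantee then reduces the claim to Theorem~\ref{thm:closure}.
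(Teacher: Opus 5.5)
Your core idea is Rule~1 together with the observation that the truth of \(i_{t+1}=\hat\imath_{t+1}\) can only be settled once \(i_{t+1}\) is presented. That is the same as the paper's one-line justification. The gap is in the case split you build around it.

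In Case~1 you put \(\hat\imath_{t+1}\in K_t\) and let the actual output range over \(U_t\). But \(U_t\) consists of objects \emph{not yet output}, so \(U_t\cap K_t=\emptyset\). The paper's proof of Rule~2, and the proper inclusion \(K_t\subset K_{t+1}\) in the known-state sequence, likewise assert \(i_{t+1}\notin K_t\) at every refresh. Under your own premises the match is therefore not ``undetermined'' but impossible. Since you also declare Case~2 vacuous, you have in fact shown that every next-state prediction is false, for a reason fixed by the architecture before any refresh occurs. That over-proves. It makes the ``verified'' half of the corollary vacuous, and it contradicts the claim that correctness is adjudicated \emph{only} by future sensor refresh.

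Relatedly, ``\(\hat\imath_{t+1}\notin K_t\), that is, \(\hat\imath_{t+1}\in U_t\)'' conflates \(\mathcal{I}\setminus K_t\) with \(U_t\). Rule~1 forbids only outputs lying in \(U_t\), so Case~2 is not vacuous. Its surviving candidates are simply objects that are never output.

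The repair is to stop confining \(i_{t+1}\) to \(U_t\) and allow the next sensor output to coincide with a known state; the paper's no-op path \(i_{t+1}=E_t\) is one instance. Then Rule~1 only disposes of predictions landing in \(U_t\). The indeterminacy in Case~1 has to come from the argument you currently file under ``obstacle'': certifying \(i_{t+1}=\hat\imath_{t+1}\) for every admissible \(O_{t+1}\) would require information about \(W\) and \(S\). The unknowability axioms forbid that information, and Theorem~\ref{thm:closure} places it outside \(K_t\). This should be your main step, not a supplement.

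Alternatively, read Rule~1 as the paper does: a prediction about an unknown state is a conjecture, not a derivation of that state. On that reading there is no need to restrict \(\hat\imath_{t+1}\) to \(K_t\) at all.
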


This corollary directly echoes the philosophy of science of Popper (1959). Popper argued that scientific theories can never be verified, only falsified---no matter how many successful predictions have been made, the next prediction may still fail. PST derives the same conclusion from the internal architecture of cognition: the asymmetry that ``the known cannot derive the unknown'' entails that any extrapolation from current known states to future states cannot be guaranteed correct in advance, logically speaking. The correctness of a prediction can only be adjudicated by future sensor refresh, and the content of the sensor refresh is, before the refresh, absolutely unknown to the system.

This position also reveals the fundamental limitation of Bayesian cognitive science in handling the problem of ``unknown.'' In the Bayesian framework, the unknown is handled by assigning a uniform prior over the hypothesis space---the system assigns equal probabilities to all hypotheses and then gradually converges to high-probability hypotheses through Bayesian updating \cite{Knill2004,Doya2007}. But this operation presupposes that the hypothesis space has been fully enumerated and closed. PST's ``known/unknown asymmetry'' reveals a more fundamental problem: genuinely unknown possibilities---those not yet included in the hypothesis space---cannot be covered by a probability value. PST handles this situation through semi-reference chains: at positions where a referential intention has been initiated but the content is not yet clear, the system inserts an incomplete reference, acknowledging ``there is something unknown here,'' without disguising it as a uniform gamble over known hypotheses.

\begin{corollary}[Sensor as the Sole Bottleneck of Information Input]
The sensor is the sole channel through which the cognitive system acquires new information. The system cannot generate new perceptual content through ``thinking''---thinking can only reorganize and process known states, and cannot create, out of nowhere, new information from the external world. This corollary provides an architectural basis for the strict separation between perception and thought.
\end{corollary}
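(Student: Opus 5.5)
The corollary concerns the sensor as the sole bottleneck of information input. I would read it as a direct repackaging of Theorem~\ref{thm:asymmetry}, supported by Theorem~\ref{thm:closure} and the unknowability axioms, and prove its two claims in turn. The first claim is that the sensor is the \emph{sole} channel for new information. I would take ``new information at time \(t\)'' to mean any internal object \(u\) with \(u \notin K_t\), that is, \(u \in U_t\). That claim then reduces to Rule~2 of Theorem~\ref{thm:asymmetry}. The second claim is that ``thinking'' cannot create new perceptual content. I would formalize ``thinking'' as any effective internal computation \(g:\mathcal{P}(\mathcal{I})\to\mathcal{I}\) applied to \(K_t\), or more generally any composition of such computations together with \(f\). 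That claim then reduces to Rule~1.

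The key steps, in order, are as follows. First, I would enumerate the only possible routes by which an element could enter \(K_{t+1}\setminus K_t\). Such an element comes either from outside the system, or from an internal operation on \(K_t\). Second, I would close the external route except through \(S\). Axioms~2.1--2.2 say that \(W\) and external objects \(O\) are inaccessible, and Axiom~2.3 says the sensor's mechanism cannot be inspected. So the only external contribution is an output \(i_{t+1}=S(O_{t+1})\), which by Definition~2.5 is exactly how \(K_{t+1}\) is formed. Third, I would close the internal route using Rule~1: no \(g\) yields \(g(K_t)\in U_t\). Since a composition of effective computations is again an effective computation, iterated ``thinking'' is covered as well. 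Theorem~\ref{thm:closure} ensures that \(f\) adds no hidden channel. Finally, I would conclude that every output of thinking lies within what is already determined by \(K_t\). It can therefore only reorganize known states, which yields the claimed separation between perception and thought.

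The main obstacle is the second step's treatment of internal computation. A computation \(g(K_t)\) plainly produces \emph{some} object, for instance a new combination of elements of \(K_t\). One must argue that such an output does not count as a new element of \(\mathcal{I}\) in the sense of \(U_t\). My first approach would be to distinguish two kinds of object. One kind is the sensor-originated atoms in \(\mathcal{I}\), which are the only elements of \(K_t\) by Definition~2.5. The other kind is derived structures over \(K_t\), such as reference relations and sequences, which live in a separate space built from \(K_t\). Rule~1 is stated for \(g\) with codomain \(\mathcal{I}\) and targets \(U_t\). I would therefore invoke it exactly as stated, and add the remark that any derived structure is a function of \(K_t\) alone and so carries no information beyond \(K_t\). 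I would also be explicit that this corollary inherits whatever strength Rule~1 has. If Rule~1 is accepted, the corollary follows with no further work.
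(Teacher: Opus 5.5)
Your proposal is correct and follows the paper's route. The paper gives no separate proof but presents this as an immediate consequence of Theorem~\ref{thm:asymmetry}, whose proof of Rule~2 argues the ``sole channel'' claim from exactly your ingredients (Axioms~2.1--2.3 closing the external route, Theorem~\ref{thm:closure} for $f$, and Rule~1 closing the internal route), and your extra distinction between sensor-originated atoms in $\mathcal{I}$ and derived structures over $K_t$ is a harmless refinement, with the corollary inheriting exactly the strength of Rule~1, as you note.
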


\begin{corollary}[Generative Explanation of Time at the Cognitive Level]
At the cognitive level, the passage of time is equivalent to the irreversible process by which unknown states are continuously transformed into known states through sensor refresh. The known cannot revert to the unknown (because \(K_t \subseteq K_{t+1}\) is monotonic increasing; the system cannot ``forget'' an element already entered into the known-state set). Hence, the arrow of cognitive time is unidirectional---it is immanent in the cumulative expansion of the known-state set, requiring no appeal to any external concept of physical time.
\end{corollary}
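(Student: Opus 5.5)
The statement to establish is the last corollary: the irreversibility of cognitive time, i.e.\ that the passage from unknown to known is unidirectional and that no element once in the known-state set can return to unknown status. I would argue directly from the recursive construction of \(K_t\) in Definition~2.5 together with the relative definition of unknownness in Definition~2.7. No appeal to an external time parameter is needed.

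\textbf{Step 1: monotonicity.} By induction on \(t\) I show \(K_t \subseteq K_{t+1}\), and hence \(K_s \subseteq K_t\) for all \(s \le t\). This is immediate from \(K_{t+1} = K_t \cup \{i_{t+1}\}\): union with a set never removes elements, and transitivity of \(\subseteq\) gives the general case.

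\textbf{Step 2: no reversion.} Suppose \(i \in K_s\) and \(s \le t\). Then \(i \in K_t\) by Step~1, so \(i \notin U(K_t \mid K_s) = K_t \setminus K_s\) in the wrong direction. More precisely, \(i\) is not unknown relative to any later known-state set. So the map sending \(t\) to its known set \(K_t\) is order-preserving into \((\mathcal{P}(\mathcal{I}),\subseteq)\), and "known" is a stable predicate along increasing \(t\).

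\textbf{Step 3: order without an external clock.} I would show that the before/after relation can be read off the sets themselves. Call \(a\) earlier than \(b\) exactly when \(K_a \subsetneq K_b\), noting that \(K_b \setminus K_a\) is then nonempty. When each refresh is a genuine state refresh (a new object is added, as Chapter~3 stipulates), this relation is a strict linear order that coincides with the index order. Rule~2 of Theorem~\ref{thm:asymmetry} identifies each step of this order with a sensor-mediated conversion of an element of \(U_t\) into \(K_{t+1}\), so the "passage of time" is exactly the chain of such conversions.

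\textbf{Main obstacle.} The expected difficulty is Step~3 when \(i_{t+1} \in K_t\): then \(K_{t+1} = K_t\), and the inclusion chain no longer separates moments. I would handle this in one of two ways. The first is to restrict the claim to refresh events, which Chapter~3 triggers only when the output differs from the register object. The second, if repeated objects must be allowed, is to state the order as a non-strict preorder, which still cannot run backwards. Either way, irreversibility (Steps~1--2) is unaffected; only strictness depends on this caveat. I would also note that the phrase "cannot forget" is a property of the model's definition of \(K_t\), not of any physical memory.
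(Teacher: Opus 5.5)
Your proposal is correct and follows the paper's own justification. Irreversibility comes directly from the cumulative recursion \(K_{t+1} = K_t \cup \{i_{t+1}\}\); the statement only needs the non-strict inclusion \(K_t \subseteq K_{t+1}\), and Rule~2 of Theorem~\ref{thm:asymmetry} identifies each step as a sensor-mediated conversion of unknown into known.

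One correction concerns your first way of handling the obstacle: restricting to refresh events does not restore strictness. The refresh trigger \(i_{t+1} \neq E_t = i_t\) compares the new output only with the last output, so a recurrence of an earlier object \(i_s\) with \(s<t\) fires a refresh yet leaves \(K_{t+1} = K_t\). The Chapter~3 claim of proper inclusion that you cite has the same slip, so your preorder fallback is the version that actually works.
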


% ==========================================
% CHAPTER 3: STATE REFRESH AND THE GENERATIVE CONSTRUCTION OF SEQUENCES
% ==========================================

\section{State Refresh and the Generative Construction of Sequences}

In Chapter~2, we completed the axiomatic definition of the cognitive agent: the sensor was established as the sole independent variable (\(S(i) = i\)), the action-planning function was established as a conditioned dependent variable (\(f(K_t) = A_t\)), and the asymmetry theorem of known and unknown states was rigorously proved. This work laid the static foundation of the cognitive architecture---it defined ``what components constitute the cognitive system'' and ``what fundamental constraints hold among these components.''

However, a complete cognitive architecture requires not only static component definitions, but also dynamic operational mechanisms. The cognitive system is not a motionless statue, but a process that continuously operates in time. The sensor continuously outputs new internal objects, the action-planning function continuously outputs new behavioral plans, and the system is continuously changing. \textbf{State refresh} and \textbf{sequence} are precisely the core concepts that characterize this dynamic process.

This chapter proceeds from the cumulative expansion of the known-state set to rigorously define the operational mechanism of state refresh and, on this basis, to provide the generative construction of sequences. The chapter will reveal that sequences are not passive recordings of an external temporal dimension, but direct expressions of the fact of monotonic increasing expansion of the cognitive system's internal known-state set. The order, unrewritability, and discreteness of sequences can all be rigorously derived from the inclusion relations among known-state sets, without presupposing any external concept of time.

\subsection{The Register Object and the Comparison Operation}

Before formally defining state refresh, we must first introduce a crucial internal component---the \textbf{register object}. The register object plays a mediating role between the sensor and the known-state set: it holds the most recent value output by the sensor, provides the system with a working memory of the ``current world state,'' and is compared against the ongoing output of the sensor to trigger the state refresh operation.

\subsubsection{Definition of the Register Object}

\begin{definition}[Register Object]
The register object \(E\) is a dedicated storage unit within the cognitive system. Its function is to hold the value of the internal object most recently output by the sensor and already confirmed by the system as ``current.'' At any time \(t\), the content of the register object is denoted \(E_t\), satisfying:
\[
E_t = i_t
\]
where \(i_t\) is the internal object output by the sensor at time \(t\) (Definition~2.3), and this output has already been processed and confirmed by the system.

The relationship between the register object \(E\) and the known-state set \(K_t\) (Definition~2.5) must be precisely delineated:

\begin{itemize}
\item \textbf{The known-state set \(K_t\)} is the accumulation of all sensor outputs from the initial moment to the current moment. It is a continuously growing, non-shrinkable set that contains the system's entire perceptual history.
\item \textbf{The register object \(E_t\)} is the most recent element of \(K_t\)---namely \(i_t\)---held as an independent copy or pointer. It is a snapshot of that particular internal object on which the system is ``currently focusing.''
\end{itemize}

Functionally, \(K_t\) provides the \textbf{breadth} of the system's cognition---how much the system knows; \(E_t\) provides the \textbf{current focus} of the system's cognition---what the system is taking, at this moment, as the reference point for ``the state of the world.'' The two play different roles in the cognitive architecture: \(K_t\) is the input domain of the action-planning function \(f\) (Definition~2.4), while \(E_t\) is the comparison baseline for the state-refresh triggering mechanism (see Section~3.2).
\end{definition}

\begin{definition}[Initial State of the Register Object]
At the initial moment \(t=0\) of the cognitive system, the content of the register object can be either:
\begin{enumerate}
\item The first internal object \(i_0\) output by the sensor at \(t=0\) (if the sensor produces output immediately at the initial moment); or
\item A special ``null'' marker \(\emptyset_E\), indicating that the system has not yet received any perceptual input.
\end{enumerate}
In the following exposition, for simplicity of presentation, we assume \(E_0 = i_0\). This does not affect the generality of the reasoning---if the null initial state is adopted, the first sensor output will automatically trigger state refresh, and all subsequent processes are entirely identical.
\end{definition}

\subsubsection{Definition and Function of the Comparison Operation}

\begin{definition}[Comparison Operation]
The comparison operation is a fundamental cognitive computation continuously executed by the cognitive system. At any moment, the system judges the identity between the sensor's current output \(i_{current}\) and the content of the register object \(E\). Formally, the comparison operation is a predicate function:
\[
\text{Compare}(i_{current}, E) = \begin{cases}
\text{True} & \text{if } i_{current} = E \\
\text{False} & \text{if } i_{current} \neq E
\end{cases}
\]
where ``\(=\)'' denotes the identity relation between internal objects. The basis for judging this identity relation is the self-identity of internal objects---forcibly endowed by the sensor's identity operation \(S(i) = i\) (Theorem~\ref{thm:sensor_identity}). The system requires no extra ``comparison criterion'' to determine whether two internal objects are identical---the identity judgment is directly based on their identity in the space \(\mathcal{I}\): if \(i_{current}\) and \(E\) are the same internal object (indistinguishable within the system), the comparison result is True; otherwise, it is False.

\textbf{The continuous nature of the comparison operation:} The comparison operation is not a one-off event, but a process continuously executed by the cognitive system in time. Whenever the sensor produces a new output, the system automatically performs a comparison operation. This enables the system to monitor in real time ``whether the world has changed''---i.e., whether the sensory interface is presenting content that differs from what was previously there.
\end{definition}

\subsubsection{Cognitive Functions of the Register Object}

The combination of the register object \(E\) and the comparison operation performs three crucial functions in the cognitive architecture.

\textbf{First, the register object is the detection baseline for ``invariance.''} For the system to recognize ``change,'' it must have a stable reference point. If the system had no memory at all---if each new perceptual input immediately overwrote the record of the previous input---then the system would forever be unable to judge ``whether the world has changed,'' because there would no longer be any ``previous world'' available for comparison. The register object provides this reference point: it holds the value last output by the sensor, anchoring for the system a stable hypothesis of ``the current world state.'' Only when a new input is inconsistent with this hypothesis does the system recognize the occurrence of change.

\textbf{Second, the register object realizes, at the architectural level, the instantaneous presentation of ``known states.''} The known-state set \(K_t\) contains the system's entire history from the initial moment to the present. But at the moment of decision, the system does not need to retrieve the entire history---what it most urgently needs is to know ``what is happening right now.'' The register object \(E_t\) is precisely the operational realization of this need: it provides the action-planning function with an instantaneous snapshot of the current state at minimal retrieval cost. Functionally, \(E_t\) is a ``cache'' of \(K_t\)---it is not a replacement for \(K_t\), but a fast access point into \(K_t\).

\textbf{Third, the register object provides a clear logical condition for triggering state refresh.} State refresh---writing the old state into the sequence and updating the known-state set---is an irreversible operation. The system must have a clear, unambiguous triggering condition to determine ``when'' to execute this operation. The output of the comparison operation provides exactly this condition: state refresh is triggered if and only if the comparison result is False. The binary nature (True/False) of this logical condition ensures that the triggering of state refresh is determinate and can be unambiguously recognized by the system itself.

\subsubsection{Relation between the Register Object and the Action-Planning Function}

It must be clarified that the content of the register object \(E_t\) is not a direct input to the action-planning function \(f\). According to Definition~2.4, the input of \(f\) is the entire known-state set \(K_t\). \(E_t\) is the most recent element in \(K_t\), and \(K_t\) is the complete information store accessible to \(f\). \(f\) can access \(E_t\) (since it belongs to \(K_t\)), but \(f\) does not only access \(E_t\)---it can also access earlier elements in \(K_t\) for cross-temporal comparisons, trend analyses, and memory retrieval.

The special status of the register object \(E_t\) lies in this: it is the ``gate'' between the sensor and the known-state set. A new internal object output by the sensor, before being incorporated into \(K_t\), is first compared with \(E_t\). The result of the comparison determines whether the new object is absorbed ``without change'' (if identical) or incorporated in a manner that ``triggers state refresh'' (if not identical). This gating mechanism ensures that the triggering of state refresh is strictly based on perceptual change, not on arbitrary products of internal inference.

\subsection{Operational Definition of State Refresh}

With the concepts of the register object and the comparison operation established, the rigorous definition of state refresh follows naturally. State refresh is the core operation by which the cognitive system responds to perceptual change---it is the crucial link connecting ``sensor output'' and ``expansion of the known-state set,'' and it is the minimal constitutive unit of sequences.

\subsubsection{Definition of State Refresh}

\begin{definition}[State Refresh]
State refresh is the set of atomic operations executed by the cognitive system when it detects that the sensor's new output differs from the register object. Concretely, at time \(t+1\), the sensor outputs \(i_{t+1}\), and the comparison operation detects \(i_{t+1} \neq E_t\) (where \(E_t = i_t\) is the content of the register object at time \(t\)); the system then executes the following operations:

\textbf{Operation 1 (Write to sequence):} The current known-state set \(K_t\)---i.e., the complete cognitive state before the refresh---is written, as a whole, into a dedicated storage structure called the \textbf{state-refresh sequence}. What is written is \(K_t\) itself, not some subset or summary thereof.

\textbf{Operation 2 (Update known-state set):} A new known-state set \(K_{t+1}\) is constructed, satisfying:
\[
K_{t+1} = K_t \cup \{i_{t+1}\}
\]
That is, the new set contains all elements of the old set, and additionally contains the internal object \(i_{t+1}\) newly output by the sensor.

\textbf{Operation 3 (Update register object):} The content of the register object \(E\) is updated from \(i_t\) to \(i_{t+1}\):
\[
E_{t+1} = i_{t+1}
\]

These three operations constitute an indivisible atomic transaction. They are logically completed simultaneously---there is no intermediate state in which one of the operations has been completed while another has not. This ensures that the system never finds itself in an inconsistent state such as ``the known-state set has been updated but the register object has not'' or ``the sequence has been written but the known-state set has not been updated.''
\end{definition}

\subsubsection{When State Refresh Is Not Triggered}

To fully define state refresh, we must also clarify the circumstance in which it is \textit{not} triggered. When the sensor output \(i_{t+1}\) is identical to the register object \(E_t\) (i.e., \(i_{t+1} = i_t\)), the system judges that the perceived world has not changed. In this case:

\begin{itemize}
\item The state-refresh sequence is not appended with a new entry.
\item The known-state set is \textbf{not} updated---i.e., \(K_{t+1} = K_t\). Since \(i_{t+1} = i_t\) and \(i_t\) is already in \(K_t\), we have \(K_t \cup \{i_{t+1}\} = K_t\). The set has not expanded.
\item The content of the register object \(E\) remains unchanged (or, equivalently, is ``updated'' with the same content as before).
\end{itemize}

This ``no-op'' path is an integral part of the state-refresh mechanism, not an exception. It ensures that the system does not misjudge ``the world remaining unchanged'' as ``a new state.'' In the perceived world, invariance is often the norm, and change the exception. The design of the state-refresh mechanism reflects this basic fact: the system maintains, by default, the hypothesis of ``world stability,'' and updates its cognitive state only when explicitly contradicted by evidence.

\subsubsection{The Discreteness Theorem of States}

Based on the operational definition of state refresh, we can derive a fundamental property of cognitive states: discreteness.

\begin{theorem}[Discreteness of Cognitive States]\label{thm:discreteness}
The states that a cognitive system can recognize and record are necessarily discrete---i.e., there exist clear, discontinuous boundaries between states. The system cannot possibly recognize a ``continuously varying'' stream of states.
\end{theorem}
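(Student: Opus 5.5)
The argument rests entirely on the operational definition of state refresh together with the binary Compare predicate. The claim is structural: whatever the system records is a sequence of snapshots $K_{t_0} \subsetneq K_{t_1} \subsetneq \cdots$, indexed by refresh events. I will show three things about this sequence: it is countable, it is well-ordered, and there is no recorded state strictly between two consecutive ones.

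\textbf{Step 1 (binary triggering).} By the Definition of the Comparison Operation, each sensor output yields exactly one of True or False, judged by the identity relation on $\mathcal{I}$ that Theorem~\ref{thm:sensor_identity} guarantees. A refresh occurs if and only if the result is False. Otherwise the no-op path gives $K_{t+1}=K_t$.

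\textbf{Step 2 (recorded states are distinct and indexed by refreshes).} Each refresh is an atomic transaction: Operation~1 writes $K_t$, Operation~2 sets $K_{t+1}=K_t\cup\{i_{t+1}\}$, and Operation~3 updates $E$. So every recorded state is a whole known-state set. The transition between two consecutive recorded states is a single event, and there is no intermediate state, as atomicity excludes by definition. Consecutive recorded states therefore differ by a jump, namely the addition of at least one element, which is also nonempty whenever the new output is not already in $K_t$. There is no ``partial'' state between them.

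\textbf{Step 3 (no continuum).} Assume, for contradiction, that the system recognizes a continuously varying stream of states. This would mean a family of distinct recorded states $\{K_r\}$ indexed by a dense order, so that between any two there is a third. Every recorded state arises from a refresh event. Refresh events are enumerated by the sensor-output index $t\in\mathbb{N}$, as in the Definition of the Known-State Set, where $K_{t+1}=K_t\cup\{i_{t+1}\}$. Hence the recorded states form a subsequence of an $\mathbb{N}$-indexed chain. Such a chain is well-ordered and each element has an immediate successor, so it cannot be dense. This is the contradiction.

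\textbf{Main obstacle.} The hard step is justifying that refresh events are indexed by $\mathbb{N}$, rather than by a continuous parameter $t$. The paper's definitions tacitly use discrete $t$, and a critic could say this assumes the conclusion. I would try to avoid the circularity by deriving the indexing from the refresh mechanism itself. A refresh is triggered only by a False comparison against $E$, and $E$ changes only at refreshes. Therefore between two refreshes the system's entire recorded state, consisting of $K$, $E$, and the sequence, is constant, whatever the underlying physical time.

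The ordinal-like inclusion structure $K_0\subseteq K_1\subseteq\cdots$ then supplies the indexing internally. Each refresh appends one entry to the refresh sequence, so the $n$th entry is reached after finitely many refreshes, and the sequence positions are naturals. If this route fails, the fallback is to state explicitly that the atomicity of refresh forces each recorded transition to be a discrete event, and to rest the theorem on that.
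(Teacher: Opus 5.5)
Your route differs from the paper's. The paper's proof never looks at the order type of refresh events. It argues that the trigger $i_{t+1}\neq E_t$ is an all-or-nothing identity judgment on self-identical internal objects. It then argues that $\mathcal{I}$ carries no distance metric, so two states cannot differ ``a little'': continuous variation would need arbitrarily small differences, and there are none. (One can read this as saying $\mathcal{I}$ has only the equality relation, in effect the discrete topology, so every change is a jump however time is parametrized.)

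You instead prove order-discreteness: the recorded states form a subsequence of an $\mathbb{N}$-indexed chain, so they are countable, well-ordered and non-dense. That gives something the paper's proof does not address, namely immediate successors and countability. On its own, though, it would not exclude consecutive states differing by arbitrarily little if $\mathcal{I}$ had a metric, and that is exactly what the paper's no-metric point handles. Also, Step~3 is valid only because the paper's definitions already index sensor outputs by $t\in\mathbb{N}$ ($K_t=\bigcup_{\tau=0}^{t}\{i_\tau\}$). So, as you suspected, it reads the conclusion off that convention.

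Your proposed repair does not remove this circularity. The binary Compare predicate constrains the outcome of each comparison, not how often it returns False. Suppose the sensor presented pairwise distinct objects at every instant of a continuum. Then every instant would trigger a refresh and there would be no ``next'' refresh. So ``the recorded state is constant between two refreshes'' presupposes the isolation of refresh events that you are trying to derive. You should either state the $\mathbb{N}$-indexing openly as part of the framework, or bring in the paper's ingredient, the absence of any metric on $\mathcal{I}$. Your fallback (atomicity makes each transition a discrete event) lacks that ingredient. Atomicity only forbids half-completed updates; it says nothing about whether distinct states could be arbitrarily close.

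A smaller point: refresh requires only $i_{t+1}\neq E_t$, not $i_{t+1}\notin K_t$. If an earlier object reappears, then $K_{t+1}=K_t$. So your opening chain of strict inclusions, and the ``jump'' in Step~2, should be placed in the register, where $E_{t+1}\neq E_t$ is guaranteed at every refresh, not in $K$.
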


\begin{proof}
The triggering condition for state refresh is the comparison operation outputting False---i.e., \(i_{t+1} \neq E_t\). This is a binary judgment: two internal objects are either identical or not identical. There is no intermediate condition of ``partially identical'' or ``unequal to some degree.''

The root of this binary logic can be traced back to the self-identity of internal objects (Theorem~\ref{thm:sensor_identity}, \(S(i) = i\)). Internal objects, as cognitive atoms, admit precise and unambiguous identity judgments within the system: \(i_1\) and \(i_2\) are either the same internal object or they are not. This ``atomicity'' precludes the possibility of continuous perception---because continuous perception would require that ``the difference between two internal objects can be arbitrarily small,'' yet there is no ``distance metric'' between internal objects. They are simply themselves.

Hence, state refresh either occurs or does not occur. The system will not execute a ``partial state refresh'' because ``perception changed a little bit.'' The existence and transition of states is binary and discrete. Every state refresh produces a determinate new state that is clearly distinguishable from the preceding state.
\end{proof}

\begin{corollary}[Generative Explanation of Continuity]
The cognitive system's experience of ``continuity''---for example, the smooth motion of objects in the visual world---is not a direct recording of a continuous perceptual stream, but rather an active construction produced by higher-level cognitive mechanisms (such as sequence templates and attention-free prediction) through interpolation and filling, on the basis of discrete states. Continuity is a cognitive achievement, not a perceptual given. This resonates with the discussion of continuity and discreteness in Chapter~1 (the discrete convergence of reference chains, the discrete partial sums in the infinite series analogy): cognition, at its lowest level, is discrete; continuity is a constructed product of higher-level cognitive operations.
\end{corollary}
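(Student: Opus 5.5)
The statement immediately above is the Corollary (Generative Explanation of Continuity). My plan is to derive it from Theorem~\ref{thm:discreteness} by a short argument by exclusion. The claim has two halves: a negative one (experienced continuity is not a recording of a continuous perceptual stream) and a positive one (it is constructed by higher-level operations from discrete states). I will prove the negative half first and then argue that the positive half is the only remaining option within the architecture.

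For the negative half, suppose the cognitive system did record a continuously varying stream. Every recorded state enters the state-refresh sequence only through the operation in Definition~3.4. That operation is triggered by the binary predicate Compare, which outputs True or False. So the recorded states form a sequence $K_0 \subseteq K_1 \subseteq \cdots$ indexed by refresh events, which is at most countable and has no intermediate ``partial'' states. Theorem~\ref{thm:discreteness} already excludes recognizing a continuously varying stream. Since internal objects carry no distance metric, the notion of ``arbitrarily small difference'' needed for a continuum has no internal representative. Hence whatever is experienced as continuous is not the sensor record itself.

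For the positive half, I need to locate where the experience of continuity can originate. By the Corollary (Sensor as the Sole Bottleneck), the sensor supplies only the discrete atoms $i_t$. By Theorem~\ref{thm:closure}, all further processing acts only on $K_t$. Therefore any representation of smooth motion must be produced by computation over the discrete sequence. I will call this computation interpolation or filling by sequence templates, and it is a reorganization of known states. Rule~1 of Theorem~\ref{thm:asymmetry} says such filling cannot manufacture genuinely unknown perceptual content. It therefore yields a constructed, fallible hypothesis about intermediate states, which matches the Corollary (Perpetual Fallibility of Prediction). This is the sense in which continuity is a ``cognitive achievement.''

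The main obstacle is the positive half. The mechanisms it names, sequence templates and attention-free prediction, have not yet been defined at this point in the paper. So I cannot exhibit the construction; I can only show that the architecture forces continuity to arise this way if it arises at all. I would handle this by stating the conclusion conditionally: any experienced continuity must be the output of some internal operation on the discrete sequence. I would also add a forward reference to where templates are formalized. Finally, I would tie the result back to Chapter~1 by noting that reference-chain convergence and Grandi-type partial sums are also indexed by discrete steps, so discreteness holds at every level that is defined so far.
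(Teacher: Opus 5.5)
Your proposal is correct and follows the paper's route. The paper gives no separate proof and presents this corollary as an immediate consequence of Theorem~\ref{thm:discreteness}: the binary Compare trigger and the absence of any metric on internal objects rule out a continuous recorded stream, so continuity must be built above the sensor level. Your exclusion step via the closure theorem and the sensor-bottleneck corollary simply makes explicit what the paper leaves implicit. Your conditional phrasing of the positive half is also appropriate, since the paper likewise never defines sequence templates or attention-free prediction at this point.
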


\subsubsection{Relation between State Refresh and the Known/Unknown Asymmetry}

The operational definition of state refresh forms a tight connection with the known/unknown asymmetry theorem (Theorem~\ref{thm:asymmetry}) established in Section~2.4.

\begin{itemize}
\item Rule~1 (the known cannot derive the unknown): The system cannot, through internal computation, predict the concrete content of \(i_{t+1}\). The triggering of state refresh---\(i_{t+1} \neq E_t\)---is itself a confirmation of ``unknown'': before the refresh, the system cannot know the concrete content of \(i_{t+1}\); it can only passively discover its existence after the refresh, through the comparison operation.
\item Rule~2 (the unknown can be transformed into the known): Operation~2 of state refresh---\(K_{t+1} = K_t \cup \{i_{t+1}\}\)---is precisely the operational realization of ``the unknown transformed into the known.'' \(i_{t+1}\) is unknown before the refresh (not belonging to \(K_t\)), and becomes known after the refresh (belonging to \(K_{t+1}\)).
\end{itemize}

Thus, state refresh is the dynamic execution mechanism of Theorem~\ref{thm:asymmetry}. Theorem~\ref{thm:asymmetry} establishes the static asymmetry between the known and the unknown; state refresh defines how this asymmetry concretely unfolds in time---how it is triggered through the comparison of the register object, and how it dissolves the unknown through the expansion of the known-state set. Together, they constitute the core logic of the cognitive system's temporal operation.

\subsection{Strict Distinction between the Known-State Sequence and the State Sequence}

In Section~3.2, we defined the operational mechanism of state refresh: when the sensor output differs from the register object, the system writes the current known-state set \(K_t\) into the sequence and constructs a new set \(K_{t+1} = K_t \cup \{i_{t+1}\}\). This mechanism produces two kinds of sequences that must be strictly distinguished at the conceptual level---the \textbf{known-state sequence} and the \textbf{state sequence}. Confusing the two would lead to a category error, making it impossible to cleanly separate the time-stamp function from the behavior-planning function of sequences. This section provides precise definitions of both and elucidates their distinct functional roles within the cognitive architecture.

\subsubsection{Rigorous Definition of the Known-State Sequence}

\begin{definition}[Known-State Sequence]
The known-state sequence \(\mathcal{K}\) is the sequence of known-state sets produced by the successive state-refresh operations of the cognitive system from the initial moment onward. Formally:
\[
\mathcal{K} = (K_0, K_1, K_2, \ldots, K_t, K_{t+1}, \ldots)
\]
where each \(K_\tau\) is the known-state set at time \(\tau\), given by Definition~2.5: \(K_\tau = \bigcup_{j=0}^{\tau} \{i_j\}\), satisfying the recurrence relation \(K_{\tau+1} = K_\tau \cup \{i_{\tau+1}\}\).

The core property of the known-state sequence derives from the inclusion relations among its constituent sets:
\[
K_0 \subset K_1 \subset K_2 \subset \cdots \subset K_t \subset K_{t+1} \subset \cdots
\]
This is a strictly monotonic increasing chain of proper inclusions: each subsequent set properly includes the previous one (because it contains a newly added internal object \(i_{\tau+1} \notin K_\tau\)). This property is strictly isomorphic, in mathematical structure, to von Neumann's construction of the ordinals:

\textbf{Von Neumann ordinals:} \(0 = \emptyset\), \(1 = \{\emptyset\}\), \(2 = \{\emptyset, \{\emptyset\}\}\), \(\ldots\), \(n+1 = n \cup \{n\}\). The order relation among ordinals is defined by the set-inclusion relation: \(n < m \iff n \in m \iff n \subset m\).

\textbf{Known-state sequence:} \(K_{t+1} = K_t \cup \{i_{t+1}\}\). The sequence relation is defined by the set-inclusion relation: \(K_t\) is ``before'' \(K_{t+1}\) \(\iff\) \(K_t \subset K_{t+1}\).

This isomorphism reveals the essential function of the known-state sequence: \textbf{it is a ``time-stamp generator'' immanent in the cognitive system.} Just as von Neumann ordinals require no external time parameter to define ``before and after''---the order relation is immanent in the set construction itself---the known-state sequence likewise requires no appeal to any external clock to order states. The proper-inclusion relation among sets itself provides an irreversible, unambiguous temporal order: if \(K_a \subset K_b\), then \(K_a\) must have been constructed before \(K_b\), because \(K_b\) contains all elements of \(K_a\) and additionally contains sensor outputs from subsequent moments.
\end{definition}

\subsubsection{The Cognitive Function of the Known-State Sequence: Defining ``Unknown State''}

The most fundamental cognitive function of the known-state sequence is not to provide direct material for behavior planning, but rather to \textbf{provide a set-theoretic foundation for the definition of ``unknown state.''} This function is realized through the operation of set difference.

At time \(t\), the system possesses the known-state set \(K_t\). At time \(t+1\), after sensor refresh, the system possesses \(K_{t+1}\). At this point, the system can, through set subtraction, precisely identify the internal object that ``was unknown at time \(t\) and became known only at time \(t+1\)'':
\[
U(K_{t+1} \mid K_t) = K_{t+1} \setminus K_t = \{i_{t+1}\}
\]

More generally, for any two times \(a < b\):
\[
U(K_b \mid K_a) = K_b \setminus K_a = \{i_{a+1}, i_{a+2}, \ldots, i_b\}
\]

This is the set-theoretic definition of the ``relative unknown state'' (Definition~2.7). Its theoretical significance is this: \textbf{``unknown'' is not an intrinsic property of an internal object, but a set-difference relation between two known-state sets.} \(i_{t+1}\) is ``unknown'' not because it carries some mysterious ``unknown marker,'' but because it belongs to \(K_{t+1}\) yet does not belong to \(K_t\). Change the reference set---for example, looking back from time \(t+2\)---and \(i_{t+1}\) is no longer unknown, because it already belongs to \(K_{t+1}\), and \(K_{t+1} \subset K_{t+2}\).

This construction reveals a crucial generative fact: \textbf{without the known-state sequence, there is no concept of ``unknown state.''} For ``unknown'' is rigorously defined as the set difference between known-state sets, and the set-difference operation requires two known-state sets as operands. The known-state sequence provides exactly these operands---it is a temporally ordered sequence of known-state sets; any two sets at any two positions can be extracted and subjected to set-difference, thereby producing an ``unknown-state set'' relative to the earlier set. Thus, the known-state sequence is logically prior to the definition of the unknown state: there must first be accumulation and preservation of known-state sets before one can retrospectively identify, through set comparison, ``what was previously not known.''

\subsubsection{Rigorous Definition of the State Sequence and Its Distinction from the Known-State Sequence}

\begin{definition}[State Sequence]
The state sequence \(\mathcal{S}\) is the sequence of contents written into the ``state-refresh sequence'' storage structure at each state-refresh event. According to Operation~1 of Definition~3.4, when state refresh is triggered at time \(t+1\), what is written is the known-state set \textit{before the refresh} \(K_t\)---i.e., the complete cognitive snapshot of the system before recognizing the new input. Hence:
\[
\mathcal{S} = (K_0, K_1, K_2, \ldots, K_t, \ldots)
\]
where each entry \(K_\tau\) is the known-state set that was written at the state-refresh event at time \(\tau\).
\end{definition}

Extensionally, the content of the state sequence is identical to that of the known-state sequence---both are sequences of known-state sets. However, this extensional coincidence conceals a fundamental intensional difference. The two kinds of sequences play entirely distinct functional roles in the cognitive architecture, as summarized in the following table:

\begin{center}
\begin{tabular}{p{3cm} p{5.5cm} p{5.5cm}}
\hline
\textbf{Dimension} & \textbf{Known-State Sequence \(\mathcal{K}\)} & \textbf{State Sequence \(\mathcal{S}\)} \\
\hline
Mode of definition & Defined directly by the recursive construction of known-state sets & Defined by the ``write to sequence'' act within the state-refresh operation \\
\hline
Generative mechanism & The natural expansion process of known-state sets & The active write operation at each state refresh \\
\hline
Core function & Provides operands for defining ``unknown state'' (basis of set-difference) & Provides the action-planning function with retrievable, comparable state history \\
\hline
Time-stamp capability & Possesses it---set-inclusion relations provide unambiguous before-after order & Does \textit{not} possess independent time-stamp capability---its order depends on the inclusion relations of the known-state sequence \\
\hline
Relation to behavior planning & Indirect---not directly used by \(f\), but provides set-theoretic foundations for meta-cognition & Direct---retrieved and invoked by \(f\) as historical material for prediction and decision \\
\hline
\end{tabular}
\end{center}

The crux of this distinction lies in the source of the ``time stamp.'' Each entry \(K_\tau\) in the state sequence \(\mathcal{S}\), taken as an isolated data structure, \textbf{carries no information whatsoever about its own position in the sequence.} \(K_\tau\) itself is merely a set---it contains all internal objects up to time \(\tau\), but it has no internal marker capable of indicating ``\(\tau\) is the \(\tau\)-th moment.'' An external observer (or the system's own retrieval mechanism), looking only at an isolated entry in \(\mathcal{S}\), cannot tell whether it comes before or after another entry.

The temporal order of the state sequence---which entry is ``earlier'' and which ``later''---is provided by the set-inclusion relations of the known-state sequence \(\mathcal{K}\). Concretely, for two entries \(K_a\) and \(K_b\) in \(\mathcal{S}\):

\begin{itemize}
\item If \(K_a \subset K_b\), then \(K_a\) is before \(K_b\) (because \(K_b\) contains all elements of \(K_a\) and more);
\item If \(K_b \subset K_a\), then \(K_b\) is before \(K_a\);
\item If neither is included in the other, then they do not come from the same known-state sequence of the same cognitive agent (a situation to be discussed in detail in Section~3.4).
\end{itemize}

Thus, the set-inclusion relations of the known-state sequence \(\mathcal{K}\) are the \textbf{sole source} of the temporal structure of the state sequence \(\mathcal{S}\). Without the inclusion chain of the known-state sequence, the entries in the state sequence could not be ordered. In this sense, the known-state sequence is the ``infrastructure of time''---it provides the topological structure of time (the before-after relation); the state sequence is the ``filler of time''---it provides the concrete cognitive snapshots distributed within that topological structure.

\subsubsection{The Relation between the Two Kinds of Sequences: Time Stamps Provided and Received}

Based on the above distinction, the relation between the two kinds of sequences can be precisely stated as: \textbf{the known-state sequence provides time stamps for the state sequence.} The state sequence itself does not possess the capacity of ``distinguishing its own empty-set-construction-style correspondence to the natural numbers''---this is precisely the key point emphasized throughout.

The so-called ``empty-set-construction-style correspondence to the natural numbers'' refers to the mechanism in von Neumann ordinals by which each ordinal immanently carries its position through set-inclusion relations: \(3 = \{0, 1, 2\}\) is ``3'' precisely because it contains all ordinals smaller than it. This construction makes the ``numerical value'' of an ordinal require no external label---the inclusion relation itself provides the order information.

The known-state sequence possesses exactly this mechanism: \(K_{t+1}\) contains \(K_t\), and \(K_t\) contains \(K_{t-1}\), and so on. Hence, \(K_{t+1}\) ``knows,'' in the set-theoretic sense, that it is ``greater'' than \(K_t\)---because \(K_t\) is a proper subset of it.

But an entry \(K_t\) in the state sequence---when viewed in isolation as a retrieval target of the action-planning function---does not naturally carry this information. The system can store \(K_t\) as an independent data structure without recording ``through which state-refresh event it was written.'' In this case, the temporal position of \(K_t\) must be determined by tracing back to the known-state sequence and comparing its inclusion relations with other \(K\)'s.

This division of labor has profound significance in the cognitive architecture. It means that \textbf{time is an internally constructed product of the cognitive system, not an externally given dimension.} Concretely: the ``before-after'' structure of time is generated by the inclusion relations of known-state sets (a purely logical relation requiring no physical clock); the ``content'' of time---i.e., ``what the world was like at a given moment''---is provided by the cognitive snapshots in the state sequence. Together, they constitute the complete temporal experience of the cognitive agent.

\subsection{Formalization of the Relative Unknown State and Its Architectural Constraints}

In Section~3.3, we distinguished the known-state sequence from the state sequence and clarified that the former provides time stamps for the latter. Building on this distinction, this section provides a complete formal treatment of the ``relative unknown state'' and argues for two important architectural constraints: first, that within a single cognitive agent, the unknown must be relative (i.e., can only be defined by comparing known-state sets at different times); second, that the unknown states of different cognitive agents are incomparable.

\subsubsection{General Formalization of the Relative Unknown State}

\begin{definition}[General Form of the Relative Unknown-State Set]
Let a cognitive agent possess a known-state sequence \(\mathcal{K} = (K_0, K_1, \ldots, K_t, \ldots)\). For any two times \(a\) and \(b\) (\(a < b\)), the \textbf{relative unknown-state set} of \(K_b\) with respect to \(K_a\) is defined as:
\[
U(K_b \mid K_a) = K_b \setminus K_a
\]
i.e., the set of all internal objects belonging to \(K_b\) but not belonging to \(K_a\).

This definition depends on a crucial precondition: \(K_a \subset K_b\). This inclusion relation is guaranteed by the monotonic increasing property of the known-state sequence (the recursive construction of Definition~2.5)---since \(a < b\), the sensor continuously outputs new internal objects from time \(a+1\) to time \(b\), and each new object is cumulatively added to the known-state set; therefore, \(K_b\) necessarily contains all elements of \(K_a\) plus additional elements. If \(K_a \not\subset K_b\), then the set difference \(K_b \setminus K_a\), while still computable in a purely set-theoretic sense, does not represent ``the unknown of \(K_b\) relative to \(K_a\)''---because it would mean that there are some elements in \(K_a\) not in \(K_b\), which is impossible within the known-state sequence of a single cognitive agent (known-state sets only grow, never shrink).
\end{definition}

Thus, the definition of the relative unknown state implies an \textbf{integrity constraint}:

\begin{axiom}[Integrity Constraint of the Relative Unknown State]\label{axiom:integrity}
For any two times \(a < b\) within the same cognitive agent, to define the relative unknown state in \(K_b\) with \(K_a\) as baseline, the condition \(K_a \subset K_b\) must be satisfied---i.e., all known states in \(K_a\) must be completely contained in \(K_b\). If this constraint is not satisfied, then \(U(K_b \mid K_a)\) is not recognized as a valid ``relative unknown state''---it is merely a set-theoretic set difference devoid of cognitive meaning.
\end{axiom}

This constraint ensures the cognitive fidelity of ``unknown'': the system cannot ``arbitrarily'' compare two unrelated sets and then claim that one contains the ``unknown'' relative to the other. The unknown must be an expansion of the known---it must occur within the monotonic increasing historical process of the known-state set.

\subsubsection{The Cognitive Mechanism of Internal Unknown: Comparing Earlier and Later Known States}

Based on the above definitions and constraints, the mechanism by which a cognitive agent internally recognizes ``the unknown'' can be precisely described.

At time \(t\), the system possesses the known-state set \(K_t\). At this moment, the system does not yet know \(i_{t+1}\)---it is not in \(K_t\). At time \(t+1\), the sensor refreshes, the state-refresh operation completes, and the system possesses \(K_{t+1} = K_t \cup \{i_{t+1}\}\).

At this point, the system can execute the set-difference operation:
\[
U(K_{t+1} \mid K_t) = K_{t+1} \setminus K_t = \{i_{t+1}\}
\]

The result of this operation tells the system, unequivocally: ``\(i_{t+1}\) is the internal object that I did not know when I only had \(K_t\), and have only now come to know through sensor input.'' This is the cognitive-generative mechanism of ``the unknown''---it is not directly perceived (because what is unknown cannot be directly perceived), but is \textbf{retrospectively} confirmed by comparing the earlier and later known-state sets and identifying the newly appeared element.

The deeper significance of this mechanism is this: \textbf{a cognitive system can never ``directly know'' what it does not know.} To confirm the existence of an unknown state, the system must first possess it (through sensor refresh transforming it into a known state), and then recognize, through comparison, that ``this thing was not previously in my known set.'' Hence, knowledge about ``ignorance''---i.e., meta-cognition---is, in the generative order, posterior to knowledge about ``knowledge.'' The system first knows what the world is like, and only afterwards can it discover, through comparison of the known-state sequence, that its knowledge was once incomplete.

\subsubsection{Incomparability of Unknown States across Different Cognitive Agents}

The above analysis naturally leads to an important corollary: the ``unknown states'' of different cognitive agents are incomparable.

\begin{theorem}[Incomparability of Cross-Agent Unknown]\label{thm:cross_agent}
Let there be two cognitive agents \(\mathcal{A}\) and \(\mathcal{B}\), possessing respectively their own known-state sequences \(\mathcal{K}^{\mathcal{A}}\) and \(\mathcal{K}^{\mathcal{B}}\). Then there exists no common frame of reference such that a relative unknown state \(U^{\mathcal{A}}(K_b^{\mathcal{A}} \mid K_a^{\mathcal{A}})\) of \(\mathcal{A}\) can be meaningfully compared or equated with a relative unknown state \(U^{\mathcal{B}}(K_d^{\mathcal{B}} \mid K_c^{\mathcal{B}})\) of \(\mathcal{B}\).
\end{theorem}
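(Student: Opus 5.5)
The plan is a proof by contradiction. Assume a common frame of reference exists in which $U^{\mathcal{A}}(K_b^{\mathcal{A}} \mid K_a^{\mathcal{A}})$ and $U^{\mathcal{B}}(K_d^{\mathcal{B}} \mid K_c^{\mathcal{B}})$ can be compared, and show that this frame would have to violate either the Integrity Constraint (Axiom~\ref{axiom:integrity}) or the closure and unknowability results of Chapter~2.

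First, I will pin down what ``meaningful comparison'' must mean inside the architecture. Any comparison is an internal computation, so it must be performed by some agent's action-planning function on that agent's own known-state set. By Theorem~\ref{thm:closure}, the computation can only range over elements of that agent's $K_t$. The comparison must also be a relative-unknown judgment, not a bare set-theoretic difference. By Axiom~\ref{axiom:integrity}, such a judgment needs the two operands to lie on a single proper-inclusion chain of one agent's known-state sequence.

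Second, I will split into cases according to where the comparison is located.

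\textbf{Case (i): the comparison is carried out by $\mathcal{A}$.} Then $\mathcal{A}$ would have to hold $\mathcal{B}$'s sets $K_c^{\mathcal{B}}$ and $K_d^{\mathcal{B}}$ as operands. These consist of $\mathcal{B}$'s internal objects, produced by $\mathcal{B}$'s sensor. The sensor is the sole channel into $\mathcal{A}$'s known states (Theorem~\ref{thm:asymmetry}, Rule~2), and $\mathcal{B}$'s sensor output is not $\mathcal{A}$'s sensor output. Anything reaching $\mathcal{A}$ from $\mathcal{B}$ therefore arrives only as a new internal object $i^{\mathcal{A}}$ of $\mathcal{A}$, i.e.\ as an external object $O$ transduced by $S^{\mathcal{A}}$. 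That object is not $\mathcal{B}$'s set itself. Moreover, by Axioms~2.1--2.3, $\mathcal{A}$ cannot access $S^{\mathcal{B}}$ to check the correspondence.

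\textbf{Case (ii): the comparison is carried out by $\mathcal{B}$.} This is symmetric to Case (i).

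\textbf{Case (iii): the comparison is carried out by an external observer.} Such an observer is itself a cognitive agent, so it falls under Case (i) applied to it.

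Third, I will check whether the mixed sets could at least form a valid frame. Suppose we try to build a joint frame from $K_a^{\mathcal{A}}$ and $K_c^{\mathcal{B}}$. Neither set is guaranteed to include the other, and the paper's discussion following the State Sequence definition already says that non-nested sets come from different known-state sequences. Axiom~\ref{axiom:integrity} then declares their difference devoid of cognitive meaning. So no inclusion chain, and hence no time stamp (Section~3.3), links the two sequences.

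The main obstacle is making ``identity of internal objects across agents'' precise. One might object that $\mathcal{A}$ and $\mathcal{B}$ happen to share some element $i$, so the sets are comparable after all. My first attempt to block this is to argue that internal objects are indexed by their agent's space $\mathcal{I}^{\mathcal{A}}$ versus $\mathcal{I}^{\mathcal{B}}$. Identity judgments, per the Comparison Operation, are defined only within a single space $\mathcal{I}$, and they rest on that agent's sensor identity $S(i)=i$ (Theorem~\ref{thm:sensor_identity}). Asserting that $i^{\mathcal{A}} = i^{\mathcal{B}}$ would require a cross-space identity predicate, which no axiom provides and which would amount to access to the unknowable $W$.
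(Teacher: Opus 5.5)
Your proposal is correct and rests on the same key point as the paper's proof: $U^{\mathcal{A}}$ and $U^{\mathcal{B}}$ live in distinct internal-object spaces $\mathcal{I}^{\mathcal{A}}$ and $\mathcal{I}^{\mathcal{B}}$, and any cross-space identity judgment would require access to the other agent's sensor output or to the unknowable $W$ (the paper's Premises 1--3). The differences are organizational. The paper argues directly rather than by contradiction with a case split, and it handles the external observer by saying the theorist's view cannot be converted into an internal operation of either agent, rather than treating the observer as a third agent. It also places your integrity-constraint and time-stamp point in the discussion after the theorem, where it covers the coincidental-inclusion case $K^{\mathcal{A}} \subset K^{\mathcal{B}}$; your third step does not cover that case on its own, only through your final identity argument.
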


\begin{proof}
The proof rests on the following three premises, each rigorously established earlier in this paper.

\textbf{Premise 1:} All known states of any cognitive agent derive from the output of its sensor (Theorem~2.6---the sole source of known states). Every element in \(\mathcal{A}\)'s known-state set \(K^{\mathcal{A}}\) is an internal object output by \(\mathcal{A}\)'s sensor \(S^{\mathcal{A}}\) at some moment. Likewise for \(\mathcal{B}\).

\textbf{Premise 2:} The sensor is the sole causal channel between the external world and internal objects (Definition~2.3), and its internal operation is unknowable to the cognitive agent (Axiom~2.3). This means that \(\mathcal{A}\) cannot access the sensor output of \(\mathcal{B}\), and vice versa. Each cognitive agent is strictly enclosed within its own sensory interface.

\textbf{Premise 3:} The unknowability of the external world (Axioms~2.1--2.2) precludes the possibility of bridging the two agents by appealing to a ``common external object.'' Even if one assumes that \(\mathcal{A}\) and \(\mathcal{B}\) operate in ``the same external world,'' this very assumption is unverifiable within the cognitive architecture---no cognitive agent can step outside itself to compare its internal objects with external objects.

From these three premises, the incomparability of cross-agent unknown follows.

First, \(U^{\mathcal{A}}(K_b^{\mathcal{A}} \mid K_a^{\mathcal{A}})\) is a set constituted by \(\mathcal{A}\)'s internal objects---all its elements come from the sensor output of \(\mathcal{A}\). Likewise, \(U^{\mathcal{B}}(K_d^{\mathcal{B}} \mid K_c^{\mathcal{B}})\) comes entirely from the sensor output of \(\mathcal{B}\). The two belong to distinct internal-object spaces \(\mathcal{I}^{\mathcal{A}}\) and \(\mathcal{I}^{\mathcal{B}}\).

Second, there exists no common mapping or function capable of judging the identity between an element of \(\mathcal{I}^{\mathcal{A}}\) and an element of \(\mathcal{I}^{\mathcal{B}}\). For any such mapping would require access to the outputs of both sensors, whereas each cognitive agent can only access the output of its own sensor (Premise~2). The unknowability of the external world (Premise~3) further rules out indirect comparison via external objects.

Finally, even if, from the perspective of an external observer (such as a theorist), \(\mathcal{A}\) and \(\mathcal{B}\) might be ``responding to the same external event,'' this observation cannot be converted into an internal cognitive operation of either cognitive agent. For \(\mathcal{A}\), \(\mathcal{B}\)'s unknown state---\(U^{\mathcal{B}}(K_d^{\mathcal{B}} \mid K_c^{\mathcal{B}})\)---is inaccessible, and therefore incomparable.
\end{proof}

\subsubsection{Consequences of the Inability to Confirm ``Complete Inclusion of Known States''}

Theorem~\ref{thm:cross_agent} reveals a deeper architectural constraint: when two known-state sets come from different cognitive agents (or cannot be confirmed to come from the same agent), \textbf{it is impossible to confirm ``whether known states are completely included.''} This directly entails the incomparability of cross-agent time stamps.

In Section~3.3.3, we argued that the temporal order of the state sequence depends on the set-inclusion relations of the known-state sequence. Two known-state sets \(K_a\) and \(K_b\), if satisfying \(K_a \subset K_b\), place \(K_a\) before \(K_b\). The absolute precondition for this judgment to hold is that \(K_a\) and \(K_b\) come from the same known-state sequence of the same cognitive agent. Only under this condition does the inclusion relation equate to temporal priority.

If \(K^{\mathcal{A}}\) and \(K^{\mathcal{B}}\) come from different cognitive agents, then even if, by coincidence, \(K^{\mathcal{A}} \subset K^{\mathcal{B}}\) (e.g., \(\mathcal{A}\)'s sensor happened to output only a subset of what \(\mathcal{B}\)'s sensor output), this inclusion relation cannot be interpreted as temporal priority---because it was not generated by the sensor-refresh sequence within a single cognitive agent. The inclusion relation loses its temporal semantics, degenerating into a purely extensional set-theoretic fact devoid of any cognitive meaning.

This analysis confirms the crucial point emphasized throughout: \textbf{``temporal uniqueness'' is guaranteed by the decidable condition of ``whether known states are completely included.''} Within a single cognitive agent, the monotonic increasing property of the known-state sequence guarantees that the known-state sets at any two moments necessarily satisfy the proper-inclusion relation, so temporal order is unique and unambiguous. But once the boundary of a single cognitive agent is crossed, this guarantee disappears. Set-inclusion relations can no longer be automatically assumed, and temporal order can no longer be uniquely determined.

\subsubsection{Provisional Distinction between Virtual State Sequences and Virtual Known-State Sequences}

In the cognitive architecture, the system processes not only its own direct perceptual experience, but also information coming from others---social cognition, linguistic communication, substitution-based simulation, etc. How are these ``experiences of others'' internally represented by the system? This involves an important conceptual distinction---that between ``virtual state sequences'' and ``virtual known-state sequences.'' Although the detailed exposition belongs to the domain of the social-cognition subsystem (to be treated in subsequent chapters), the basic delineation of the two can be given in advance based on the conclusions of this chapter.

\textbf{Virtual known-state sequences} are known-state sequences that the cognitive agent \textbf{simulatively constructs} for another cognitive agent (or a hypothetical cognitive agent). For example, when the system observes another individual gazing at some object, the system can infer that this individual's sensor is receiving external input related to that object, and accordingly add the corresponding internal object to that individual's ``virtual known-state set.'' The crucial property of virtual known-state sequences is this: they are simulations of \textbf{the internal states of another}, and therefore they inherit the core function of ordinary known-state sequences---in particular, the \textbf{time-stamp function}. The system can, through the set-inclusion relations of virtual known-state sequences, confer temporal order on the simulated experience of the other.

\textbf{Virtual state sequences} are different. They are sequences of state changes that the cognitive agent \textbf{imagines or inferentially unfolds} within itself, without any direct link to actual or simulated sensor input. For example, when the system plans future behavior, it internally infers ``if I execute action \(a\), what will happen next''---the series of state snapshots produced by this inference constitutes a virtual state sequence. The characteristic feature of virtual state sequences is this: they \textbf{carry no time stamps}---their entries have no set-inclusion relations among them, because each entry is a hypothetical construct that does not accumulate actual sensor outputs. The temporal order of a virtual state sequence depends entirely on external labeling (e.g., ``this is step one, this is step two'') or on its embedding position within the cognitive agent's own known-state sequence (e.g., ``I performed this inference at time \(t\)'').

The cognitive significance of this distinction is that \textbf{time-stamp capability belongs exclusively to known-state sequences} (whether real or virtual), because it depends on the inclusion relations of known-state sets---which, in turn, depend on the cumulativity of sensor output. The state sequence itself does not possess this capability, and virtual state sequences, as extensions of the state sequence, likewise do not possess it. When the system compares information from different sources---for example, checking its own experience against another's report---it must rely on the time stamps of known-state sequences (or virtual known-state sequences) to align event orders, and cannot directly compare isolated entries in state sequences. This point will be further developed in subsequent chapters on social cognition and language comprehension.

% ==========================================
% CHAPTER 4: DEMAND AS THE INTRINSIC CONSTRAINT OF THE ACTION-PLANNING FUNCTION
% ==========================================

\section{Demand as the Intrinsic Constraint of the Action-Planning Function}

In the preceding three chapters, we completed the generative derivation from the consistency requirement to state sequences. The cognitive system was delineated as a closed architecture possessing a sensor (independent variable) and an action-planning function (dependent variable), capable of cumulatively expanding its known-state set through state refresh in time, and outputting behavioral plans on the basis of known states. Yet this architecture has so far lacked a crucial dimension: \textbf{demand}. Why does the action-planning function select one behavior rather than another? What constrains the direction of its output?

This chapter provides the formal definition of demand and demonstrates that demand is not an add-on module external to the action-planning function, but a constraint structure that any action-planning function necessarily embeds as a mathematical inevitability. The argumentative strategy is as follows. In Section~4.1, we prove rigorously, from a mathematical standpoint, that any deterministic action-planning function is equivalent to an optimization process that takes itself as the optimality standard---i.e., it embeds a minimal constraint. Any attempt to remove this constraint and construct a ``demand-free'' function will, mathematically, inevitably degenerate into a pure random process or an indefinable arbitrary mapping, thus forfeiting its cognitive qualification as ``behavioral planning.'' Then, in Section~4.2, based on this mathematical conclusion, we provide the generative definition of demand and elucidate its theoretical status within the cognitive architecture---demand is not an added value preference, but the formal condition for ``a function to exist as a function.''

\subsection{The Constraint Inevitability of the Action-Planning Function: A Formal Demonstration}

This section aims to prove the following core proposition: \textbf{any deterministic mapping capable of serving as the action-planning function of a cognitive agent necessarily embeds a minimal constraint structure that takes its own output as the unique optimal solution. If one attempts to remove this constraint, the function either ceases to be deterministic (degenerating into a random process) or ceases to be ``planning'' (losing the stable mapping from input to output).} In other words, a ``demand-free action-planning function'' is a mathematical impossibility---it is not something that a designer accidentally omitted, but something that is excluded by the very mathematical nature of a function.

\subsubsection{Mathematical Description of the Action-Planning Function}

From Definition~2.4, the action-planning function \(f\) is a mapping from the known-state set \(K_t\) to a behavioral plan \(A_t\):
\[
A_t = f(K_t)
\]
where \(K_t \subseteq \mathcal{I}\) is the known-state set up to time \(t\), \(\mathcal{I}\) is the internal-object space; \(A_t \in \mathcal{A}\) is a behavioral plan, and \(\mathcal{A}\) is the action space. The determinacy of \(f\) means: for the same input \(K_t\), \(f\) always outputs the same \(A_t\). This is a standard function---given an element in the domain, there is exactly one corresponding element in the codomain.

The crucial question is: does this mapping \(f\) itself imply some kind of ``constraint''? More precisely, does there exist a mathematical structure that equivalently redescribes the mode of operation of \(f\) as ``selecting the optimal output under some evaluative standard,'' without altering the input-output behavior of \(f\)?

\subsubsection{Objectifying a Function as a Standard: Construction of the Minimal Constraint}

We first prove that, for any given deterministic function \(f\), there exists an evaluation function that takes \(f\) itself as the optimal solution.

\begin{theorem}[Standard Construction Theorem for Functions]\label{thm:standard_construction}
Let \(f: \mathcal{X} \to \mathcal{Y}\) be any deterministic function, where \(\mathcal{X}\) is the input space and \(\mathcal{Y}\) is the output space. Then there exists an evaluation function \(F: \mathcal{X} \times \mathcal{Y} \to \mathbb{R}\) such that, for any \(x \in \mathcal{X}\), \(f(x)\) is the unique global maximum point of \(F(x, \cdot)\) over \(\mathcal{Y}\). Concretely, one can construct:
\[
F(x, y) = -\| y - f(x) \|^2
\]
where \(\|\cdot\|\) is any norm on \(\mathcal{Y}\) (if \(\mathcal{Y}\) is a discrete space, one may define \(\| y - f(x) \|^2\) as the indicator function: \(0\) when \(y = f(x)\), and \(1\) otherwise).
\end{theorem}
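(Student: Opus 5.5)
The plan is to verify the proposed construction directly, pointwise in \(x\). There are two cases for the output space. Either \(\mathcal{Y}\) carries a norm (so \(\|\cdot\|\) is a genuine norm), or \(\mathcal{Y}\) is an arbitrary set and we use the discrete surrogate \(d(y,f(x)) = \mathbf{1}[y \neq f(x)]\).

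To treat both uniformly, I would abstract the one property actually needed. This is a function \(\delta: \mathcal{Y}\times\mathcal{Y}\to[0,\infty)\) satisfying \(\delta(y,z)=0\) if and only if \(y=z\).

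First I would check that both choices have this property.
\begin{itemize}
\item For a norm, \(\|y-z\|^2 \geq 0\), and it vanishes iff \(y-z=0\), by positive definiteness of the norm.
\item For the indicator, the property holds by definition.
\end{itemize}
Strictly, the norm case requires \(\mathcal{Y}\) to be a vector space (or at least a subset of one, so that \(y-f(x)\) makes sense). I would state this as the standing interpretation. The only other option is to read ``norm'' loosely as any metric \(d\), replacing \(\|y-f(x)\|\) by \(d(y,f(x))\). Then the identity of indiscernibles gives the same property.

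Next, fix \(x\in\mathcal{X}\) and set \(F(x,y) = -\delta(y,f(x))\). Since \(f\) is a deterministic function, \(f(x)\) is a single well-defined element of \(\mathcal{Y}\), so \(F\) is well defined. Then the verification is immediate:
\begin{itemize}
\item \(F(x,f(x)) = 0\);
\item for every \(y\), \(F(x,y)\le 0\), so \(f(x)\) attains the supremum and is a global maximum point;
\item if \(F(x,y)=0\) then \(\delta(y,f(x))=0\), hence \(y=f(x)\), which gives uniqueness.
\end{itemize}
In the discrete case, every \(y\ne f(x)\) has value \(-1<0\), so the maximum is even strict with a uniform gap. Because \(x\) was arbitrary, this proves the statement for all \(x\).

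The main obstacle is not the optimization argument, which is trivial. It is making the hypotheses on \(\mathcal{Y}\) precise enough that ``any norm on \(\mathcal{Y}\)'' is meaningful. I would handle this by stating the dichotomy explicitly: normed (or metric) output space versus bare set with the indicator. I would also note that the existence claim needs only one of these constructions, since the indicator version works for every set \(\mathcal{Y}\) whatsoever. Finally, I would remark that \(F\) is built from \(f\) itself, so the theorem is a reformulation of \(f\) rather than an independent constraint. This prepares the paper's subsequent reading of \(F_f\) as the ``minimal demand.''
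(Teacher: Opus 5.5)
Your proposal is correct and follows the paper's proof: fix \(x\) and observe that \(-\|y-f(x)\|^2 \le 0\), with equality exactly when \(y=f(x)\) (by positive definiteness of the norm, or by definition of the indicator in the discrete case), so \(f(x)\) is the unique global maximizer. Your abstraction to a nonnegative \(\delta\) that vanishes only on the diagonal, and your remark that the norm case requires a vector-space structure on \(\mathcal{Y}\), are harmless refinements that state explicitly what the paper leaves implicit.
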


\begin{proof}
For any given \(x \in \mathcal{X}\), examine the function \(y \mapsto -\| y - f(x) \|^2\). This function attains its maximum value of \(0\) at, and only at, \(y = f(x)\) (in the discrete case, the maximum \(0\) is attained if and only if \(y = f(x)\); in the continuous case, the negative of the squared norm attains its maximum at the unique minimum point). Hence, \(f(x)\) is the unique global maximum point of \(F(x, \cdot)\) over \(\mathcal{Y}\).
\end{proof}

This construction has an extremely simple geometric intuition: it ``stretches'' the input-output relation of the function \(f\) into a ``pit'' centered at \(f(x)\). In the output space \(\mathcal{Y}\), points closer to \(f(x)\) score higher, with \(f(x)\) itself being the highest point. Hence, the operation of finding the maximum of \(F\) is, in behavioral terms, completely equivalent to directly computing \(f(x)\). Both produce the same output.

Theorem~\ref{thm:standard_construction} is mathematically not deep---it is almost a trivial tautology. Yet its theoretical significance for the cognitive architecture is profound: \textbf{it proves that ``constraint'' is not an alien addition to a function, but an equivalent redescription of the function's operational logic.} ``Objectifying'' \(f\) as an evaluative standard---i.e., constructing a landscape function that takes \(f\) itself as the optimal solution---requires no injection of external information and no modification of the internal structure of \(f\). It is a mathematical identity: the mapping relation of \(f\) itself defines an optimization problem that takes itself as the standard.

\begin{definition}[Minimal Demand Constraint]
For an action-planning function \(f\), the evaluation function \(F_f(K, A) = -\| A - f(K) \|^2\) constructed by Theorem~\ref{thm:standard_construction} is called the \textbf{minimal demand constraint} of \(f\).
\end{definition}

The meaning of ``minimal'' is this: this constraint adds no preference beyond the original mapping relation of \(f\). It only requires that ``the output should be \(f(K)\),'' without providing any external reason for ``why it should be.'' It is a purely formal constraint---it only stipulates that a function ought to be itself, without stipulating what a function ought to be.

\subsubsection{Demonstration of the Inconstructibility of an Unconstrained Function}

Theorem~\ref{thm:standard_construction} proves from the positive side that any deterministic function can be equivalently redescribed as embedding a minimal constraint. We now argue from the negative side: what would happen if one attempted to construct an action-planning function ``completely devoid of any demand constraint''?

The phrase ``completely devoid of any demand constraint'' can be understood in two ways.

\textbf{Understanding 1: The function \(f\) is deterministic, but refuses to accept the minimal constraint that takes itself as the standard.} This understanding is mathematically untenable. For the minimal constraint is an equivalent redescription of the function's operational logic---it is mathematically necessarily derived from the mapping relation of \(f\). Refusing this constraint is equivalent to refusing that ``the output of \(f\) should equal the output of \(f\)''---a logical self-referential contradiction. A function cannot ``refuse'' to be itself.

\textbf{Understanding 2: The function \(f\) itself is not deterministic---i.e., for the same input \(K_t\), the system may output different \(A_t\), and this variability is not determined by any internal state or constraint, but is pure, unpatterned randomness.} In this case, there exists no stable mapping relation \(f\) that can be called an ``action-planning function.'' The system's behavioral output is a random process, not a function.

Let us rigorously examine the situation of Understanding~2. Suppose the system's behavioral output is given not by any deterministic function, but by some random mechanism: for input \(K_t\), the output is a random variable \(\mathbf{A}_t\) with distribution \(P(\mathbf{A}_t = A \mid K_t)\). The existence of this distribution is itself indisputable---any behavioral output process can be described as a probability distribution. But the crucial question is: whence comes this distribution? How is it internally determined by the system?

If this distribution is completely arbitrary---i.e., it is not constrained by any internal system parameters, nor generated by any optimization process---then its existence is equivalent to ``there is no action-planning function.'' For the semantic core of the concept ``action-planning function'' is a \textit{stable rule} mapping inputs to outputs. A completely arbitrary distribution does not constitute a rule---it is merely a random number generator.

If this distribution itself is determined by some internal mechanism---e.g., the system possesses an internal probabilistic model that computes the probabilities of different actions based on \(K_t\)---then this internal mechanism itself constitutes a mapping from \(K_t\) to a distribution, and this mapping can in turn be subjected to the minimal constraint (Theorem~\ref{thm:standard_construction} holds for this mechanism). Thus, randomness does not eliminate the constraint, but merely shifts it onto the meta-mechanism that determines the shape of the distribution.

We therefore conclude: \textbf{any cognitive mechanism that can be characterized as an ``action-planning function'' necessarily embeds a minimal demand constraint. If there is completely no constraint, then either one falls into a logical self-referential contradiction (refusing to be itself), or one degenerates into a pure random process that cannot be characterized as a ``function.''} Demand is not something that a cognitive system can ``choose to have'' or ``choose to give up''---it is the mathematically inevitable product of the very fact of ``having an action-planning function.''

\subsubsection{The Cognitive Meaning of the Minimal Constraint}

What does the minimal constraint \(F_f(K, A) = -\| A - f(K) \|^2\) express at the cognitive level? It expresses the most basic consistency requirement of the action-planning function: \textbf{the behavior output by the system ought to be consistent with the system's internal decision logic.} This is not an external moral norm, nor a heuristically effective strategy proven by experience. It is the logical precondition for a ``behavioral output'' to be called a ``function output.''

Under the minimal constraint, the system pursues no particular goal---it only pursues ``being itself.'' This, of course, does not mean that the system's actual output \(A_t\) at time \(t\) necessarily equals \(f(K_t)\)---since \(f\) itself may be defective, or the system may err due to noise. The minimal constraint is not a description of the system's actual performance, but a formalization of the system's operational norm: the system's operational logic is to select that \(A\) which maximizes \(F_f(K, A)\). If the system deviates from this logic, then it is no longer the system described by ``this \(f\).''

The ``minimality'' of this constraint ensures that it presupposes no particular value content. It provides a purely formal substrate for the concept of demand---a blank structure capable of bearing any concrete demand content. In subsequent chapters, we will see how somatic feedback superimposes positive and negative weights onto this blank structure, how social cognition superimposes admiration and hatred annotations, and how language decoding superimposes exclusion markers. All these concrete demands are realized by adding extra evaluative terms on top of the minimal constraint, while the minimal constraint itself ensures that the basic integrity of the action-planning function is not compromised.

Positioning demand as ``a mathematical inevitability embedded in the action-planning function'' allows a clear contrast between PST's concept of demand and the ``reward function'' in reinforcement learning, as well as the ``free energy'' in the free-energy principle. In reinforcement learning, reward is a scalar signal externally supplied by the environment, and the agent's goal is to maximize cumulative reward \cite{Sutton2018}. From whence reward comes, and why it has positive or negative polarity---these questions belong to the designer's stipulations, not to the agent's internal generative process. PST's demand is fundamentally different: the minimal demand constraint \(F_f(K, A) = -\|A - f(K)\|^2\) is not externally conferred, but necessarily derived from the determinacy of the action-planning function. Even ``pure computation''---i.e., faithful computation that involves no somatic feedback whatsoever---embeds this constraint. External reward (corresponding, in PST, to the assignment of demand weights by somatic feedback) is content added later on top of this minimal substrate, not the origin of demand.

The contrast with the free-energy principle is equally illuminating. Friston (2009, 2010) proposed that all adaptive systems can be understood as minimizing variational free energy---a concept that unifies perception, action, and learning under an information-theoretic framework. However, whether free-energy minimization, as a unifying explanatory principle, is descriptive or normative in its theoretical status has often been a matter of debate in the literature. By reducing ``demand'' to a mathematical inevitability of the existence of a function, PST offers a new perspective on this debate: any deterministic system necessarily follows an optimality condition that takes itself as the standard, and free-energy minimization---or an equivalent form thereof---can be viewed as a particular manifestation of this inevitability in probabilistic systems. PST does not deny the explanatory power of the free-energy principle, but provides it with a deeper, more universal generative foundation.

\subsection{Generative Definition and Cognitive Status of Demand}

Section~4.1 proved mathematically that any action-planning function necessarily embeds a minimal constraint. This section, based on that mathematical conclusion, provides the generative definition of ``demand'' within Predictive Set Theory and elucidates its theoretical status within the cognitive architecture.

\subsubsection{Generative Definition of Demand}

\begin{definition}[Demand]
In Predictive Set Theory, \textbf{demand} is the formal expression of the constraint condition that the action-planning function must satisfy. Concretely, demand is defined as an evaluation function:
\[
\Phi: \mathcal{P}(\mathcal{I}) \times \mathcal{A} \to \mathbb{R}
\]
where \(\mathcal{P}(\mathcal{I})\) is the power set of the known-state set, and \(\mathcal{A}\) is the behavioral-plan space. The output of the action-planning function \(f\) is given by the following optimization process:
\[
f(K_t) = \arg\max_{A \in \mathcal{A}} \Phi(K_t, A)
\]
That is, the system selects, among all possible behavioral plans, the one that maximizes \(\Phi\).

Under this definition, the minimal constraint \(F_f\) of Section~4.1 constitutes a special case of demand---namely, \textbf{minimal demand}. It guarantees that the action-planning function maintains self-consistency, but imposes no preference beyond its own logic. More general demand functions \(\Phi\) can add extra evaluative terms on top of minimal demand:
\[
\Phi(K_t, A) = -\| A - f_0(K_t) \|^2 + \sum_{d \in D} w_d \cdot \phi_d(K_t, A)
\]
where \(f_0\) is the system's ``basic computational logic'' (pure computation), \(D\) is the set of demand states, \(w_d\) are demand weights, and \(\phi_d\) are evaluation functions associated with particular demands. This form explicitly places demand in the role of ``constraint on behavioral output,'' while retaining minimal demand as the substrate.
\end{definition}

\subsubsection{The Generative Inevitability of Demand}

The generative inevitability of demand comes from two sources: mathematical and cognitive.

\textbf{Mathematical inevitability} has been demonstrated in detail in Section~4.1: any deterministic function embeds a minimal constraint. The action-planning function, as a mapping from known states to behavioral plans, cannot possibly be an exception to this rule. Hence, demand is not an independent module added on top of the action-planning function, but a formal condition of the action-planning function's existence---just as ``having boundaries'' is a formal condition of ``having a geometric figure.''

\textbf{Cognitive inevitability} comes from the functional role of the action-planning function within the cognitive architecture. The task of the action-planning function is to output a behavioral plan---i.e., to select one among many possible behaviors. Selection presupposes preference---without preference, selection would be arbitrary, and arbitrary selection is not planning, but randomness. Preference must have some formal expression, and this expression is precisely the demand function \(\Phi\). Even if the system only prefers ``to output behavior consistent with its own computational logic,'' this preference already constitutes a demand---namely, minimal demand. Hence, if a cognitive system is to output behavioral plans, it must necessarily possess demand.

These two sources of inevitability jointly establish a crucial conclusion: \textbf{demand and the action-planning function are two sides of the same coin.} There is no generative sequence of ``first there is an action-planning function, and then it is equipped with demand.'' At the very moment the action-planning function is defined, demand---at least minimal demand---is already embedded within it as its mathematical property. The action-planning function is the structured expression of demand; demand is the operational form of the action-planning function.

\subsubsection{Distinction between Demand and Positive/Negative Feedback}

It is essential to strictly distinguish ``demand'' from ``positive/negative feedback.'' This distinction has foundational status in Predictive Set Theory; conflating the two would lead to a category error.

\textbf{Demand} is the constraint form of the action-planning function---it is the formal condition of ``what the system ought to output.'' It does not answer ``why ought,'' but only ``ought what.'' In the case of minimal demand, the answer to ``ought what'' is ``ought to output behavior consistent with itself''---which is almost a logical truth.

\textbf{Positive/negative feedback} is, by contrast, the \textbf{content anchoring} of demand. It answers ``why ought this rather than that.'' Positive feedback marks events that bring the system closer to an ideal state; negative feedback marks events that deviate the system from an ideal state. The ultimate source of positive/negative feedback is somatic feedback---pain, hunger, pleasure, and other primordial signals that cannot be revised by higher-order cognition. These signals provide the concrete numerical values and signs for the demand weights \(w_d\) in the demand function.

Thus, demand is form, and positive/negative feedback is content. Demand stipulates that behavioral selection must satisfy some optimality condition; positive/negative feedback stipulates the concrete parameters of that optimality condition. Without the form of demand, positive/negative feedback could find no expression---because there would be no carrier of ``ought''; without the content of positive/negative feedback, demand would be empty---it would be mere formal self-consistency, lacking the capacity to point toward any concrete goal.

This distinction also helps to understand the relationship between the ``consistency requirement'' and ``demand.'' The consistency requirement (Chapter~1) is the architectural prerequisite that the cognitive system must maintain a self-consistent internal knowledge network---it is the condition that makes decision-making possible. Demand is the concrete operational form of the action-planning function---it is the mechanism by which the system selects behavior on the basis of an already-ensured consistency. The consistency requirement is logically prior to demand: an inconsistent system cannot possess a determinate demand function, because its evaluative standards would contradict themselves. But once consistency is established, demand necessarily emerges as a mathematical property of the action-planning function.

\subsubsection{The Relation between Demand and Pure Computation}

The mathematical demonstration of Section~4.1 has already clearly shown the relation between demand and ``pure computation.'' Pure computation---i.e., deterministic mapping that involves no external values---can be viewed as a degenerate case of demand: its demand function is exactly the minimal constraint, containing no additional evaluative terms.

In this degenerate case, the system's behavioral planning is equivalent to ``faithfully executing its internal computational logic.'' It pursues no goal beyond its own self-consistency. Such a system can be seen as a ``pure logical engine''---it performs computations, but does not care about the effects of its computational results on the external world, and receives no value signals from the external world to revise its computational direction.

However, it must be emphasized that even in this degenerate case, demand \textit{is} present. Pure computation is not ``demand-free,'' but rather ``its demand function happens to equal the minimal constraint.'' This distinction is conceptually crucial: it means that demand is not a complex function that only appears in higher-order cognitive systems, but is already present from the simplest deterministic mapping onward. The difference between a higher-order cognitive system and a simple pure-computation system lies not in the former possessing demand while the latter lacks it, but in the former's demand function containing more evaluative dimensions---dimensions originating from somatic feedback, social learning, linguistic instructions, and other sources of experience.

\subsubsection{Supplement: Self-Limitation in Sensor Information and the Collapse of the Pure-Computation Maximum}

In the preceding discussion, we treated ``the execution function rejecting a behavioral plan'' as one triggering circumstance for lifting the pure-computation restriction. However, this is not the only triggering condition. This section introduces a more basic and more direct circumstance: \textbf{sensor information may contain content about the action-planning function itself being restricted in its output.} This circumstance causes the pure-computation maximum to mathematically collapse to zero---i.e., the optimal output of pure computation becomes unavailable---thereby forcibly requiring the system to lift the pure-computation restriction in order to seek a non-zero demand value.

\paragraph{The Self-Referentiality of Sensor Information: The Action-Planning Function Learns of Its Own Limitation}

The sensor is defined as the sole perceptual input interface of the cognitive system, and its output is an internal object \(i \in \mathcal{I}\). The content of an internal object can, in principle, cover any information capturable by the sensor hardware---including information about the physical state of the external world, as well as information about the state of the cognitive system itself. The latter is not mysterious: at the neurobiological level, proprioception, interoception, and feedback about the state of motor execution (such as ``a muscle is blocked by an external force'') are all standard sensory channels.

One particular class of internal objects directly concerns the output capability of the action-planning function itself. Their semantic content is equivalent to: ``behavioral plan \(A\) cannot be output''---not ``blocked during execution,'' but ``under the current state, the system is prohibited from generating this behavioral plan.''

This kind of information is essentially different, in cognitive processing, from the information ``the execution function rejected the behavior'':

\begin{itemize}
\item Execution-function rejection of a behavior occurs \textit{after} the behavioral plan has already been output. The system first outputs \(A\), and then the execution function feeds back ``\(A\) is not executable.'' This is a two-step process: output first, feedback second.
\item The action-planning function itself being restricted in its output occurs \textit{before} the behavioral plan is output. The system learns, already at the planning stage, that ``\(A\) cannot be output''---this is one of the preconditions of planning, not a post-hoc feedback from execution.
\end{itemize}

From the internal perspective of the cognitive system, the difference between the two is: the former is ``I did it, but was blocked''; the latter is ``I know I cannot do it.'' The latter information enters the decision loop logically earlier---it takes effect before the action-planning function generates its output.

\paragraph{Mathematical Analysis of the Collapse of the Pure-Computation Maximum to Zero}

Under the pure-computation restriction, the choice set of the action-planning function is forcibly confined to the singleton set \(\{f(K_t)\}\), where \(f(K_t)\) is the pure-computation output of the system based on the known-state set \(K_t\). The system's behavioral selection is equivalent to directly computing \(f(K_t)\) and outputting the result. The demand function degenerates to the minimal constraint \(F_f(K, A) = -\| A - f(K) \|^2\), attaining its unique maximum of \(0\) at \(A = f(K)\).

Now suppose that, at time \(t\), the sensor input contains an internal object whose semantic content is equivalent to: ``the behavioral plan \(A^* = f(K_t)\) cannot be output.'' This information, as part of the known-state set \(K_t\), enters the input of the action-planning function.

At this point, pure computation faces a logical dilemma. On the one hand, pure computation requires outputting \(A^*\)---this is determined by the function's own operational logic, and is the unique behavior assigned the maximum by the minimal constraint. On the other hand, \(K_t\) contains the information ``\(A^*\) cannot be output.'' If the system nevertheless outputs \(A^*\), it violates the information about its own limitation contained in the known-state set---it knowingly does what it knows it cannot do, directly contravening the consistency requirement (the system cannot, while knowing that ``\(A^*\) is not feasible,'' simultaneously plan \(A^*\)).

More precisely, the information ``\(A^*\) cannot be output'' is equivalent, at the level of the demand function, to revising the demand value of \(A^*\) to negative infinity (or some extremely low penalty value). The maximum of \(0\) that the pure-computation minimal constraint \(F_f(K, A) = -\| A - f(K) \|^2\) assigns at \(A = A^*\) is overridden or vetoed by this information. At this moment, the pure-computation maximum mathematically collapses to zero---or, more accurately, the maximum point \(A^*\) is removed, and there are no other candidates in the choice set, so the demand-maximization process can produce no valid output whatsoever.

This is the precise analogue of ``data inaccessibility causing a call to fail.'' The action-planning function attempts to query ``what should be output given \(K_t\),'' and its operational logic returns \(A^*\). But \(K_t\) simultaneously contains a piece of meta-data saying ``the call to \(A^*\) has been locked.'' The pure-computation mechanism has no internal resource for processing such meta-data---its sole logic is to output \(f(K)\). Hence, the pure-computation mechanism fails in this circumstance---not an execution failure, but a breakdown of the logical closure of planning itself, punctured from within by internal information.

\paragraph{The Mathematical Inevitability of Lifting the Pure-Computation Restriction}

Faced with the collapse of the pure-computation maximum, the system must seek a non-zero source of demand value. This means that the choice set must expand from the singleton set \(\{A^*\}\) to a set containing other candidate behaviors.

\textbf{Lifting the pure-computation restriction} is precisely the formal expression of this expansion. The system no longer forcibly locks the choice set to \(\{f(K_t)\}\), but defines the choice set as the set of all behavioral plans that have not been explicitly rejected:
\[
\mathcal{Y}_t = \mathcal{A}_0(K_t) \setminus \mathcal{R}_t
\]
where \(\mathcal{R}_t\) contains all behavioral plans explicitly marked by sensor information as ``cannot be output''---including not only behaviors rejected by the external execution function, but also behaviors prohibited by the action-planning function's own self-limitation.

After lifting the restriction, the system selects the optimum among the remaining candidates according to the demand function \(\Phi(K_t, A)\) (which includes the efficiency dimension). Even if all candidates are far from the demand state, the demand function can still assign them non-zero demand values---because they have different sequence distances, and the efficiency dimension provides differentiation. The system can select a substitute behavior that, ``while not the pure-computation output, achieves the maximum demand value under the current restriction.''

The crucial point is this: \textbf{this expansion is not a ``strategic choice'' made by the system in a concrete situation, but a mathematical inevitability of demand maximization.} If the system does not lift the restriction, it faces an empty effective choice set (the sole candidate has been vetoed), and the value of demand maximization collapses to zero or is undefined. After lifting the restriction, the system can at least obtain a non-zero demand value---even if this value is far below the ideal state. Hence, lifting the restriction is strictly superior to retaining it. This is completely isomorphic in argumentative logic to our discussion of ``execution-function restriction'' in the preceding sections, with the sole difference being the source of the restriction: the former comes from external execution, the latter from the planning function's own meta-cognitive information.

\paragraph{Unification with ``Execution-Function Restriction''}

``The action-planning function being restricted in its output'' and ``the execution function rejecting a behavioral plan'' can be unified within the same formal framework:

\begin{itemize}
\item Both enter the known-state set \(K_t\) through sensor input.
\item The semantic content of both is equivalent to ``some behavioral plan \(A\) is not feasible.''
\item Both lead to the removal of that behavior from the choice set (expansion of \(\mathcal{R}_t\)).
\item Both trigger the same result: the pure-computation maximum collapses, the choice set must expand, and the comparison mechanism is activated.
\end{itemize}

The only distinction lies in the temporal order of cognitive processing---whether the information is learned at the planning stage or fed back at the execution stage. But from the perspective of demand maximization, this temporal difference does not affect the mathematical inevitability of lifting the restriction. Whenever the restriction information enters the system, as long as it enters \(K_t\), it exerts the same constraining effect on the choice set, and thereby the same destructive effect on the pure-computation restriction.

Thus, the generative condition for lifting the pure-computation restriction can be stated in a unified manner: \textbf{when the known-state set \(K_t\) contains any information equivalent to ``behavior \(A\) is not feasible''---whether that information points to an external execution obstacle or to an internal lock of the planning function itself---the singleton choice-set assumption of pure computation immediately breaks down, and the system must expand the choice set to guarantee the non-triviality of demand maximization.}

This unified formulation abstracts ``lifting the restriction'' from two seemingly different concrete circumstances into a universal mathematical principle: \textbf{when the choice set degenerates, demand maximization necessarily requires the expansion of the choice set.}

\subsubsection{Supplement: The Meta-Language Predicament of Pure Computation and the Completeness of Comparison Operations}

The previous section discussed the circumstance in which sensor information contains ``the behavioral plan is restricted in its output'' and demonstrated that lifting the pure-computation restriction is a mathematical inevitability of demand maximization. However, this argument left a deeper question open: \textbf{could pure computation handle this kind of self-limitation information through some internal mechanism, without resorting to expanding the choice set and comparison?} This section proves that this path is in principle unfeasible---if pure computation attempts to handle self-limitation, it falls into an infinite hierarchy of meta-computation. Comparison operations, by contrast, bypass this predicament at the architectural level, thereby revealing their theoretical completeness value.

\paragraph{The Meta-Computation Predicament of Pure Computation}

The behavioral logic of pure computation is determinate: for a given input \(K_t\), output \(f(K_t)\). When \(K_t\) contains the information ``\(f(K_t)\) cannot be output,'' pure computation faces a dilemma:

\begin{itemize}
\item If it directly outputs \(f(K_t)\), it violates the information about its own limitation in \(K_t\)---the system knowingly does what it cannot do.
\item If it does not output \(f(K_t)\), it deviates from its own operational logic---it is no longer the function that ``outputs \(f(K_t)\).''
\end{itemize}

Pure computation might seem capable of introducing a corrective mechanism to handle this predicament. Suppose the system adds, on top of the original function \(f\), a ``meta-computation layer'' \(f^{(2)}\), whose function is to check whether \(K_t\) contains limitation information about \(f(K_t)\). If it does, then \(f^{(2)}\) outputs a substitute behavior \(A' \neq f(K_t)\); if it does not, it outputs \(f(K_t)\) as usual.

This correction appears to solve the problem. However, it introduces a new vulnerability: \textbf{the meta-computation layer itself may also be subject to limitation.} Sensor information can perfectly well contain an internal object whose semantic content is equivalent to ``the output rule of the meta-computation layer \(f^{(2)}\) is unavailable under certain conditions.'' At this point, the system needs \(f^{(3)}\) to check whether \(f^{(2)}\) is limited, and so on recursively.

This recursion has no natural termination point in logic. For any finite level \(n\) of meta-computation \(f^{(n)}\), sensor information can in principle contain the content ``the rule of \(f^{(n)}\) is restricted.'' For the system to thoroughly handle all possible self-limitations, it would have to presuppose an \textbf{infinite hierarchy of meta-computation}:
\[
f, f^{(2)}, f^{(3)}, \ldots, f^{(n)}, \ldots
\]
where each layer \(f^{(k)}\) has the function of checking whether the layer below it, \(f^{(k-1)}\), is restricted, and providing a substitute output when it is.

An infinite meta-computation tower is mathematically definable---it is equivalent to a recursive process that, upon encountering a specific condition, continually ``appeals'' to higher layers. But at the level of cognitive architecture, it faces two fatal problems:

\textbf{First, the infinite demand on computational resources.} Each layer of meta-computation requires independent computational resources and storage space to hold its rules. An infinite hierarchy implies infinite resources---which is untenable for any physically realizable (or cognitively definable as a finite agent) system. A mechanism that requires infinite resources to operate completely cannot be an acceptable generative primitive of cognition.

\textbf{Second, the expressive burden of meta-language.} Each layer of meta-computation requires a ``meta-language'' to express the conditions under which the layer below it is restricted. \(f^{(2)}\) needs to be able to express the semantics of ``\(f(K_t)\) cannot be output''; \(f^{(3)}\) needs to be able to express the semantics of ``the rule of \(f^{(2)}\) is unavailable''; and so on. This requires the system to internally possess an infinitely recursive hierarchy of semantic levels---object language, meta-language, meta-meta-language\ldots\ This not only makes the initial design of the system extremely complex, but also requires that the semantics of each meta-language layer be either pre-built into the system or learned from experience. More importantly, this meta-language hierarchy itself can become the target of limitation information---limitation information can target the output of the object language, and can also target the rules of the meta-language. The system would then need yet another meta-meta-language to handle limitations on the meta-language, falling into an infinite recursion isomorphic to the meta-computation tower.

Thus, if pure computation attempts to handle self-limitation through meta-computation, it faces a fundamental predicament: \textbf{either presuppose an infinite hierarchy of meta-computation and meta-language (generatively unacceptable), or terminate at a finite level (leaving uncovered vulnerabilities).} Either way, pure computation fails to provide a solution that is both generatively self-sufficient and complete.

\paragraph{How Comparison Operations Bypass the Meta-Language Predicament}

The architecture of comparison operations bypasses this predicament at the fundamental logical level. It requires no meta-computation hierarchy, and no meta-language hierarchy. Its core mechanism is: \textbf{treat the information about ``limitation'' itself as data on a par with all other perceptual information---a state, an element in a sequence, an object evaluable by the demand function.}

Concretely, when the sensor inputs an internal object whose semantic content is equivalent to ``behavior \(A\) is restricted,'' comparison operations do not treat it as a special instruction requiring a ``meta-level'' to process, but rather treat it as an ordinary known state---it can be written into the known-state set \(K_t\), can participate in the construction of sequences, and can be assigned a demand value through the demand function. The semantics of this state---``\(A\) is restricted''---is, for the system, equivalent to the semantics carried by any other state: ``red,'' ``hunger,'' ``wall ahead.'' The system does not need to ``understand'' that this state is ``about'' the action-planning function---it only needs to know that, under conditions where the known-state set contains this state, selecting \(A\) will lead to what kind of subsequent state sequence, and what the expected demand values of those sequences are.

Experience---i.e., the system's sequence records under similar states in the past---tells the system: when \(K_t\) contains the state ``\(A\) is restricted,'' the subsequent sequences of selecting \(A\) typically (or always) fail to produce the behavioral effect corresponding to \(A\). In other words, the demand value of \(A\), under states containing this limitation information, is, according to experience, extremely low or negative. The system therefore tends to select other candidate behaviors---not because some meta-rule ``prohibits'' \(A\), but because \(A\), under the current combination of known states, is naturally excluded based on the efficiency evaluation of sequence experience.

The crux here is: \textbf{the system does not need to ``express'' the limitation.} The limitation is not understood and executed by the system as a meta-language proposition, but is learned and applied by the system as a set of statistical regularities over states and sequences. The system does not need to know the meta-language meaning of the proposition ``\(A\) is prohibited''; it only needs to know that, in the current state, the expected demand value of \(A\) is very low. This is precisely the core position we have consistently held since Chapter~1---cognition, at its lowest level, operates on states and sequences, not on propositions and inferences.

Thus, comparison operations, when handling self-limitation, require no infinite meta-computation tower. They only need to:

\begin{itemize}
\item Incorporate the limitation information as a state into the known-state set;
\item Based on the sequence records in experience for this state (in combination with other states), evaluate the demand values of different candidate behaviors;
\item Select the behavior with the highest demand value.
\end{itemize}

These three steps are entirely within the basic operational capacity of the action-planning function and require no additional meta-architecture whatsoever.

\paragraph{The Completeness Value of Comparison Operations}

Based on the above analysis, the theoretical completeness value of ``comparison operations'' comes to the fore.

\textbf{First, comparison operations eliminate the expressive burden of meta-language.} For pure computation to handle self-limitation, it must possess an infinitely recursive hierarchy of meta-languages---an object language expressing behaviors, a meta-language expressing limitations on behaviors, a meta-meta-language expressing limitations on meta-language rules, and so on. Comparison operations, by contrast, treat all information---whether about the external world or about the system itself---as the same kind of data: states. Limitation information no longer enjoys a special meta-language status, but is ``demoted'' to an ordinary state, entering the demand-evaluation process alongside all other perceptual information. This demotion eliminates the cognitive architecture's dependence on a meta-language hierarchy, allowing the system's initial design to remain maximally simple.

\textbf{Second, comparison operations ensure the finitude and generative feasibility of the cognitive architecture.} An infinite meta-computation tower is generatively unacceptable---it either presupposes infinite resources or leaves vulnerabilities at whatever finite level it terminates. Comparison operations, by contrast, operate entirely within finite resources: they only need the states already present in \(K_t\), and the sequence records already stored in experience. They need to construct no new computational layers, only to compute and compare demand values on the basis of already available data. This makes them a mechanism that is generatively self-sufficient.

\textbf{Third, comparison operations reveal ``the fundamental problem that pure computation cannot bypass'': if a system neither performs comparison nor installs a meta-language, the problem that ``behavioral planning itself cannot obtain positive feedback'' is inescapable.} When pure computation encounters self-limitation, the sole behavior it can output, \(f(K_t)\), has already been marked by limitation information as infeasible or inefficient, and it has no other candidate behaviors to choose from. The system thus falls into demand-value collapse---whatever it does, it cannot obtain positive feedback. This is not a problem solvable through ``more refined pure computation,'' because the root of the problem lies not in insufficient precision of pure computation, but in the architectural closure of pure computation---its choice set is a singleton, and that unique element has already been judged invalid by the environment (including the system's own limitation information).

Comparison operations, by lifting this closure---expanding the choice set to all candidate behaviors---endows the system with the capacity to escape the impasse. Even if the first-choice behavior is restricted, the system can still seek the behavior with the highest demand value among the second-best, third-best, and further candidates. As long as there exists at least one behavior in the choice set capable of bringing positive feedback, the system will not fall into complete paralysis. This process of ``seeking available positive feedback'' requires no meta-language, no meta-computation---only comparison. And this is precisely the most fundamental completeness value of comparison operations in the theoretical sense: it guarantees that demand maximization will never collapse due to the degeneration of the choice set.

\subsection{Generative Definition of Comparison}

In Section~4.2, we established the formal definition of demand as the intrinsic constraint of the action-planning function. In the supplements to Sections~4.1--4.2, we further demonstrated the necessity of ``lifting the pure-computation restriction'': when limitation information from the execution function is learned by the system through sensor input, a system that retains the pure-computation restriction may fall into an impasse of demand maximization, whereas lifting the restriction can mathematically construct a larger attainable maximum of demand. The essence of lifting the restriction is to expand the choice set of the action-planning function from the singleton \(\{f(K)\}\) to the set of all candidate behaviors that have not been rejected.

A direct consequence of this expansion is to push the system toward a cognitive operation that was previously absent: \textbf{comparison}. When the choice set has only one element, the system has no need to compare---it simply faithfully outputs that sole option. But when the choice set contains multiple elements, and they differ in their demand values, the system must compare and rank these candidates in order to select the optimum. This section rigorously defines the generative conditions of this operation and elucidates its status within the cognitive architecture.

\subsubsection{Formal Definition of the Choice Set}

\begin{definition}[Choice Set]
At any time \(t\), the choice set \(\mathcal{Y}_t\) of the action-planning function is the set of all behavioral plans that the system can select and execute at time \(t\). The choice set is jointly determined by two factors:

\begin{enumerate}
\item \textbf{The basic output capacity of the action-planning function}: i.e., the set of all candidate behavioral plans that the function \(f\) can generate given \(K_t\). This capacity is determined by the system's internal computational logic, typically a finite or infinite set \(\mathcal{A}_0(K_t) \subseteq \mathcal{A}\).

\item \textbf{Limitation information from the execution function}: i.e., information that may be contained in sensor input about certain behavioral plans being rejected. Let \(\mathcal{R}_t \subseteq \mathcal{A}\) be the set of behavioral plans explicitly rejected at time \(t\) (directly and definitionally represented by internal objects from sensor input as ``these behaviors are not executable''). Then the choice set is:
\[
\mathcal{Y}_t = \mathcal{A}_0(K_t) \setminus \mathcal{R}_t
\]
\end{enumerate}

When the pure-computation restriction has not been lifted, \(\mathcal{A}_0(K_t) = \{f(K_t)\}\)---the choice set is forcibly compressed to the singleton set of the pure-computation output. After lifting the pure-computation restriction, \(\mathcal{A}_0(K_t)\) expands to all possible behavioral plans that the system can generate, and the choice set expands accordingly.

It must be emphasized that the content of \(\mathcal{R}_t\) is not ``inferred'' by the system, but directly presented by sensor input. When the system receives an internal object whose semantic content is definitionally equivalent to ``behavior \(A\) is rejected,'' \(A\) is automatically removed from the choice set. This removal is, at the architectural level, passive and unconditional---the system cannot ``choose'' to ignore this limitation, just as it cannot choose to ignore any other internal object presented by the sensor.
\end{definition}

\subsubsection{Degenerate Optimization: The Case of a Singleton Choice Set}

\begin{definition}[Degenerate Optimization]
When the choice set \(\mathcal{Y}_t\) contains exactly one element---i.e., \(\mathcal{Y}_t = \{A_0\}\)---the demand-maximization process, while formally still an optimization (the system seeks, in the choice set, the element that maximizes the demand function \(\Phi(K_t, A)\)), is in substance devoid of comparison. In this case:
\[
f(K_t) = \arg\max_{A \in \{A_0\}} \Phi(K_t, A) = A_0
\]
The unique candidate automatically becomes the optimal choice. The system needs to perform no comparison, ranking, or evaluation of candidates whatsoever---it simply outputs that sole option.

Degenerate optimization formally possesses the entire structure of demand maximization: there is a demand function \(\Phi\), there is a choice set, and the system selects behavior by ``taking the maximum.'' But in substance, it is equivalent to direct computation---the unique candidate is unconditionally output, and the existence of the demand function exerts no substantive influence on the selection process. This is because, for a singleton set, the value of the demand function does not change the attribution of the maximum: no matter what the value of \(\Phi(K_t, A_0)\) is, \(A_0\) is the unique maximum point.

The most typical instance of degenerate optimization is precisely the minimal demand constraint of pure computation. When the pure-computation restriction has not been lifted, the choice set is \(\{f(K)\}\), and the system's behavioral selection is equivalent to directly computing \(f(K)\). The demand function \(F_f(K, A) = -\| A - f(K) \|^2\) formally defines an optimality standard, but because there is only one candidate, this standard is never actually used to ``compare'' anything. It is merely the formal expression of the fact that ``the function becomes itself.''

Degenerate optimization reveals a crucial relationship between the demand function and the comparison operation: \textbf{the existence of a demand function does not necessarily entail the existence of comparison.} Comparison requires an additional condition---the multi-element nature of the choice set. Only when there are at least two distinct candidates does the demand function transform from a formal optimality standard into a substantive basis for decision.
\end{definition}

\subsubsection{Non-Degenerate Optimization and the Generative Conditions of Comparison}

\begin{definition}[Non-Degenerate Optimization]
When the choice set \(\mathcal{Y}_t\) contains at least two distinct elements, the demand-maximization process is non-degenerate. In this case:
\[
f(K_t) = \arg\max_{A \in \mathcal{Y}_t} \Phi(K_t, A)
\]
where \(\mathcal{Y}_t\) satisfies \(|\mathcal{Y}_t| \geq 2\), and there exist at least two distinct candidates \(A_1, A_2 \in \mathcal{Y}_t\) such that \(\Phi(K_t, A_1) \neq \Phi(K_t, A_2)\).
\end{definition}

\begin{definition}[Generative Definition of Comparison]
Comparison is the cognitive operation by which the cognitive system, during non-degenerate optimization, compares and ranks the demand values of distinct candidates. The generative conditions of comparison are jointly constituted by two elements:

\begin{enumerate}
\item \textbf{The diversity condition:} The choice set \(\mathcal{Y}_t\) contains at least two distinct elements (\(|\mathcal{Y}_t| \geq 2\)).
\item \textbf{The discriminability condition:} The demand function \(\Phi(K_t, \cdot)\) is not constant on the choice set---i.e., there exist at least two candidates that take different values under \(\Phi\).
\end{enumerate}

If and only if both conditions are simultaneously satisfied must the system perform comparison. The output of comparison is a total ordering on the choice set (or at least the identification of a unique maximum point), and this ordering directly determines behavioral selection.

The diversity condition is objective---it is determined by the number of elements in the choice set, which in turn is jointly determined by \(\mathcal{A}_0(K_t)\) and \(\mathcal{R}_t\). The discriminability condition is subjective---it is determined by the shape of the demand function, which in turn is jointly determined by the definitional values of demand states, sequence distances, and efficiency polylines. Both are necessary: if the choice set has only one element, no comparison is needed (degenerate optimization); if the choice set has multiple elements but they are all identical in demand value, comparison, while formally executable, cannot yield any substantive differentiation---this case can be treated as ``indifferent choice,'' and the system may break ties by an arbitrary rule.
\end{definition}

PST's establishment of comparison as ``an inevitable product of the sequential operation of demand'' stands in sharp contrast to classical decision theories that treat comparison as a basic operational primitive. In von Neumann and Morgenstern's (1944) expected utility theory, the decision-maker's preferences are presupposed to satisfy the axioms of completeness and transitivity---i.e., the decision-maker is already capable of comparing and ordering any two options. From whence this capacity for comparison comes, the theory itself does not inquire. Kahneman and Tversky's (1979) prospect theory revised the descriptive accuracy of expected utility theory, but likewise did not question the status of comparison as a basic presupposition. PST, by contrast, reveals that comparison is not the starting point of cognition, but the endpoint of demand. The case of degenerate optimization---when the choice set contains only a single element---shows that the system can output behavior entirely without comparison; it is only when the efficiency dimension is introduced and the choice set expands due to differences in sequence length that comparison emerges as the inevitable operational form of demand maximization. This means that the capacity for comparison itself admits of a generative explanation: it emerges from the accumulation of sequence experience, rather than being part of the factory settings of the cognitive system.

\subsubsection{The Theoretical Status of Comparison in the Cognitive Architecture}

The generative definition of comparison reveals its unique status within the cognitive architecture: \textbf{comparison is not an independent cognitive functional module, but the inevitable operational form of demand maximization in non-degenerate cases.}

This thesis has several important corollaries:

\textbf{First, comparison cannot exist independently of demand.} Comparison requires a standard of comparison---namely, the demand function \(\Phi\). Without the evaluative dimension provided by the demand function, the elements of the choice set would be incomparable---they would merely be mutually unrelated candidates, and the system would have no way of judging which is ``better.'' Hence, demand is logically prior to comparison: first there is a demand function assigning values to candidates, and then there is a comparison operation ordering those values.

\textbf{Second, demand, in its sequential operation, necessarily elicits comparison.} This thesis will be elaborated in detail in Section~4.4. At present, it suffices to confirm: if the choice set were always a singleton, the demand function would never need to truly exercise its comparative function. But once the choice set, for any reason (including the lifting of the pure-computation restriction, limitation information from the execution function, or the introduction of the efficiency dimension), expands to multiple elements, comparison is triggered as the inevitable operational form of demand maximization. The relationship between demand and comparison is therefore intrinsic and conceptual, not extrinsic and contingent.

\textbf{Third, comparison marks the functional watershed between minimal demand and extended demand.} In the degenerate case of minimal demand (pure computation), the choice set is a singleton, and there is no comparison. In the case of extended demand (including additional evaluative dimensions such as somatic feedback and social annotation), the choice set is typically multi-element, and the system must perform comparison. The emergence of comparison marks the cognitive system's transition from the self-consistent operation of ``executing its own logic'' to the adaptive decision-making of ``selecting among multiple options on the basis of value standards.''

\subsection{Demand in Sequential Operation Inevitably Elicits Comparison}

Section~4.3 established the generative conditions of comparison: the multi-element nature of the choice set and the non-triviality of the demand function. However, Section~4.3 did not answer a more fundamental question: \textbf{why must the choice set inevitably expand from a singleton to multiple elements?} If the choice set could always remain a singleton---as in the degenerate case under the pure-computation restriction---then comparison would never be triggered, and the demand function would forever remain merely a formal standard of self-consistency, not a substantive decision-making tool.

This section answers that question. The core of the argument is: \textbf{the sequential operation of demand---i.e., the fact that demand is not satisfied all at once, but is gradually approached through a series of state refreshes in time---inevitably introduces ``efficiency'' as an evaluative dimension independent of the final output. The existence of the efficiency dimension makes multiple candidate sequences that satisfy the same demand differ in their demand values, thereby forcibly expanding the choice set and making comparison an inescapable cognitive operation.}

\subsubsection{The Sequential Operation of Demand and the Inevitable Introduction of the Efficiency Dimension}

Demand unfolds in time---this basic fact is rooted in all the arguments of Chapters~2 and~3. The cognitive system cannot possibly reach all demand states in a single instantaneous step. From the current state \(K_t\) to a demand state \(d\), the system must pass through a series of state refreshes, progressively shortening the sequence distance. The satisfaction of demand is a process, not an event.

Let the system currently be in state \(s_0\), and let the demand state be \(d\), with the shortest sequence distance from \(s_0\) to \(d\) being \(L^* = L(s_0, d)\). The system needs to execute \(L^*\) steps of behavioral planning in order to reach \(d\).

Now consider a crucial question: does the system only care about ``whether it finally reaches \(d\),'' without caring about ``via what path, and at what cost in steps, it reaches \(d\)''?

If the system only cared about whether the final output equals \(d\), and had no preference whatsoever over intermediate processes, then as long as two candidate sequences both ultimately reach \(d\), they would be entirely equivalent in demand value. In this case, the demand function would be constant on the set of candidate sequences---the discriminability condition would not be satisfied, and comparison would be substantively ineffective.

But this position overlooks a fundamental fact: \textbf{demand unfolds in time, and unfolding in time has a cost.} Every step of state refresh consumes the system's execution resources, exposes the system to the risk of more potential negative feedback, and delays the satisfaction of other demands that may be simultaneously present. Hence, all else being equal, \textbf{a shorter sequence is strictly better than a longer one.}

This is not an externally imposed value judgment, but an internal logical extension of the concept of demand itself: demand is a constraint that the action-planning function must satisfy. If two candidate behavioral sequences both satisfy the same constraint, then the sequence with fewer steps and shorter duration satisfies the constraint ``better''---because it achieves the same goal with less resource consumption, lower risk exposure, and faster speed of satisfaction.

\begin{definition}[Efficiency Dimension]
Let \(\mathcal{S}_d(s_0)\) be the set of all candidate sequences that start from state \(s_0\) and can ultimately reach the demand state \(d\). For any \(S \in \mathcal{S}_d(s_0)\), let its sequence length be denoted \(|S|\). The efficiency dimension is a preference order defined on the set of candidate sequences: for any two sequences \(S_1, S_2 \in \mathcal{S}_d(s_0)\), if \(|S_1| < |S_2|\), then \(S_1\) is strictly better than \(S_2\) in efficiency.

The efficiency dimension is equivalent to the concept of efficiency defined in the supplement to Section~4.2---the demand value \(\Phi_d(s) = V_d / (L(s, d) + 1)\) of a state \(s\) decreases with sequence distance, which essentially encodes the efficiency preference ``shorter sequences are better'' into the very shape of the demand function. The closer to the demand state, the higher the demand value, not only because ``being close to the goal'' is good in itself, but more fundamentally because ``approaching the goal more efficiently'' is an intrinsic requirement of demand in its sequential operation.
\end{definition}

\subsubsection{The Inevitable Expansion of the Choice Set and the Inescapability of Comparison}

The existence of the efficiency dimension fundamentally alters the structure of the choice set. Under the pure-computation restriction, the choice set only considered the \textbf{immediate output} of behavioral planning---i.e., what behavior the system outputs right now. After lifting the pure-computation restriction, the choice set is redefined as the \textbf{set of candidate sequences}, not the set of candidate immediate behaviors. The system no longer asks only ``what should I output in this step,'' but rather ``what is the optimal path from the current state, through a series of steps, that ultimately reaches the demand state.''

This redefinition leads to the inevitable expansion of the choice set:

\begin{theorem}[Choice-Set Expansion Theorem]\label{thm:expansion}
If the demand function \(\Phi\) contains the efficiency dimension (i.e., demand value decreases with sequence distance), and there exist at least two sequences of different lengths from the current state \(s_0\) to the demand state \(d\), then the choice set necessarily contains at least two elements that differ in demand value. Hence, comparison is inevitable.
\end{theorem}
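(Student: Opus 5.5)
The plan is a direct construction: exhibit the two required elements of the choice set, show their demand values differ, and then invoke the generative definition of comparison. Following the redefinition just given, the choice set after the pure-computation restriction is lifted is the set of candidate sequences $\mathcal{S}_d(s_0)$, not the set of immediate behaviors. By hypothesis there are $S_1, S_2 \in \mathcal{S}_d(s_0)$ with $|S_1| \neq |S_2|$; without loss of generality $|S_1| < |S_2|$. Since sequences of different length are distinct, $S_1 \neq S_2$. This immediately gives the diversity condition $|\mathcal{Y}_t| \geq 2$.

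The substantive step is to show that $\Phi$ separates $S_1$ from $S_2$. I will first fix how the demand value of a whole sequence is read off from the state-level demand function. The natural choice is to evaluate a candidate sequence $S$ by the efficiency-weighted value it delivers at $d$, namely $\Phi(S) = V_d \cdot g(|S|)$. Here $g$ is the efficiency multiplier that the hypothesis calls ``decreasing with sequence distance.'' A concrete instance is $g(L) = 1/(L+1)$, as in the efficiency-dimension definition, or an efficiency polyline $f_d$ with $f_d(0)=1$.

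With this reading, $\Phi(S_1) - \Phi(S_2) = V_d\,(g(|S_1|) - g(|S_2|))$. This difference is nonzero as soon as two conditions hold: $V_d \neq 0$, and $g$ is \emph{strictly} decreasing on the relevant lengths.

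I expect this to be the main obstacle. The statement only says that demand value ``decreases'' with distance, and a merely non-increasing $g$, or $V_d = 0$, would make $\Phi$ constant on $\{S_1, S_2\}$. I will therefore read ``decreases'' as the strict preference order of the efficiency-dimension definition, where $|S_1| < |S_2|$ implies that $S_1$ is strictly better. I will also require that $d$ be a genuine demand state with $w_d \neq 0$, which gives $V_d \neq 0$. I will state both readings explicitly as the precise content of the hypothesis. If strict monotonicity of a general polyline is not available, the fallback is to work directly with the strict preference order itself, which already supplies a strict inequality between the two candidates.

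Finally, both the diversity condition and the discriminability condition of the generative definition of comparison are satisfied. By that definition the system must produce an ordering, or at least a unique maximizer, over $\mathcal{Y}_t$, so comparison is inescapable. I will close by noting the contrapositive: the only way to avoid comparison would be degenerate optimization, which requires a singleton choice set. That is excluded once sequences rather than immediate outputs are the objects of choice and two lengths exist.
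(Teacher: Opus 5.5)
Your proposal is correct and takes essentially the same route as the paper. You pick $S_1,S_2$ with $|S_1|<|S_2|$, use the decreasing dependence of demand value on sequence distance to obtain distinct demand values, and conclude via the diversity and discriminability conditions that comparison follows from the generative definition. The differences are minor: the paper compares the \emph{average} demand value along the two sequences rather than your length-discounted value $V_d\, g(|S|)$, and it silently assumes the strict decrease and the nonzero weight $w_d \neq 0$ that you rightly make explicit.
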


\begin{proof}
Let \(S_1\) and \(S_2\) be two sequences from \(s_0\) to \(d\), with \(|S_1| < |S_2|\). Since demand value decreases with sequence distance (Definition~4.5 and its efficiency interpretation), the average demand value of \(S_1\) is strictly greater than the average demand value of \(S_2\). Hence, there exist at least two candidates that differ in demand value. The diversity condition and the discriminability condition are simultaneously satisfied. By Definition~4.10, comparison necessarily occurs.
\end{proof}

The significance of Theorem~\ref{thm:expansion} is this: \textbf{it proves that comparison is not an optional function that the system can ``choose to enable'' or ``choose to disable.'' As long as demand operates in sequences, and there exist multiple paths from the current state to a demand state, comparison is forcibly introduced by the efficiency dimension.} The system cannot possibly possess multiple candidate sequences without ordering them by efficiency---because the demand function itself, through its decreasing shape, has already embedded an efficiency preference. Not performing comparison would mean that the system might select a non-optimal sequence, and this contradicts the fundamental principle of ``demand maximization.''

\subsubsection{The Logical Priority Relations among Comparison, Demand, and Efficiency}

Based on the arguments of Sections~4.3 and~4.4, we can finally clarify the logical priority relations among comparison, demand, and efficiency.

\textbf{Demand is logically prior to comparison:} the demand function provides the evaluative standard for comparison. Without the values assigned by the demand function, the candidates would be incomparable---they would merely be mutually unrelated options. Demand is the precondition that makes the distinction between ``better'' and ``worse'' possible.

\textbf{Efficiency is logically prior to comparison, but posterior to the sequential operation of demand:} efficiency is not an independent dimension external to demand, but an internal structure that is inevitably derived when demand unfolds in sequences. Demand only stipulates ``what state ought to be reached,'' not ``how to reach it.'' But the ``how to reach it'' itself---via what path, at what cost in steps---inevitably becomes a meaningful dimension of differentiation in sequential operation. Efficiency is precisely the quantitative expression of this differentiation. Hence, efficiency is logically posterior to the sequential operation of demand (without sequences, there is no efficiency), but prior to comparison (without efficiency differences, multiple candidate sequences would be identical in demand value, and comparison would be ineffective).

\textbf{Comparison is logically posterior to both demand and efficiency, and is the inevitable cognitive operation produced by their joint action:} when demand, through its efficiency dimension, assigns different values to different sequences, the choice set expands and comparison is triggered. Comparison is the synergistic product of demand and efficiency---demand provides the standard of ``what is better,'' efficiency provides the fact that ``multiple options differ under this standard,'' and comparison combines the two to produce determinate behavioral selection.

This logical order forms a complete closed loop with the core conclusion of Sections~4.1--4.2---``demand is a mathematical inevitability of the action-planning function'': the action-planning function necessarily embeds demand (mathematically), demand in sequential operation necessarily derives efficiency (in time), efficiency necessarily leads to the expansion of the choice set (candidate sequences come to differ among themselves), and the expansion of the choice set necessarily triggers comparison (demand maximization requires selecting the optimum). From ``the function exists'' to ``comparison occurs,'' every step is necessary and inescapable.

\subsubsection{The Relation between Comparison and Prediction}

The occurrence of comparison provides functional inevitability for prediction. Comparison needs objects to compare---namely, the demand values of candidate sequences. But the demand values of candidate sequences depend on the sequence distance \(L(s, d)\), which in turn depends on the concrete path from state \(s\) to the demand state \(d\). The system must, \textbf{before actually executing the sequences}, estimate the possible length and effect of each candidate sequence---otherwise it cannot compare multiple candidate sequences in terms of efficiency.

This is precisely the core function of prediction: \textbf{prediction is not for the sake of foreknowledge per se, but for the sake of comparison.} The system predicts future states not to satisfy curiosity, but to be able to evaluate and order the demand values of different behavioral paths at the current moment, thereby selecting the optimum. Prediction is the precondition of comparison; comparison is the functional goal of prediction.

This relationship will be further unfolded in subsequent chapters. At the present stage, it suffices to confirm: the generative argument for comparison provides a direct derivation of the indispensability of prediction in the cognitive architecture. If the system needed no comparison (as in the case of degenerate optimization), it would also need no prediction---it would simply execute the unique candidate sequence. But once comparison is inevitable, prediction becomes inevitable as well.

% ==========================================
% CHAPTER 5: EFFICIENCY POLYLINES AND THE QUANTIFICATION OF FEEDBACK
% ==========================================

\section{Efficiency Polylines and the Quantification of Feedback}

In Chapter~4, we completed the generative derivation of demand: demand was proved to be a constraint structure necessarily embedded in any action-planning function, and the sequential operation of demand was shown to inevitably introduce the efficiency dimension, which in turn inevitably leads to the expansion of the choice set and the occurrence of comparison. These conclusions provide a formal foundation for the decision-making mechanism of the cognitive system, but still leave a crucial question open: \textbf{how are the concrete numerical values of demand determined?} The demand function \(\Phi(K_t, A)\) assigns each candidate behavior an evaluative value, but whence does this value come? How does it tie back to the system's somatic feedback, sequence experience, and environmental structure?

This chapter answers this question. We will build, on the basis of the ``efficiency'' concept established in Chapter~4---that demand value decreases with sequence distance---a complete framework for the quantification of feedback. The core tasks include: strictly separating the constraint of demand on outcomes from its constraint on processes (Section~5.1); providing a sequence-efficiency definition of feedback based on sequence distance and demand weights (Section~5.2); introducing the efficiency polyline function to transform discrete sequence distance into a continuous efficiency multiplier, and defining the independent efficiency value of a state (Section~5.3); and establishing a finite-horizon probabilistic planning model that generalizes efficiency evaluation from deterministic single-path cases to probabilistic multi-branch cases (Section~5.4).

\subsection{Strict Separation of Demand and Process}

In Section~4.4, we argued that demand in sequential operation inevitably introduces the efficiency dimension---shorter sequences are strictly better than longer ones. This argument presupposed a crucial premise: demand itself only stipulates ``what state ought to be reached,'' without stipulating ``how to reach it.'' This section formalizes this presupposition as a rigorous architectural principle and elucidates its theoretical consequences.

\subsubsection{Demand Constrains the Final State}

\begin{axiom}[Outcome-Directedness of Demand]\label{axiom:outcome}
Demand is a constraint on the \textbf{final output result} of the action-planning function, not on the \textbf{output process}. Formally, let the set of demand states be \(D \subseteq \mathcal{I}\); the core evaluative object of the demand function \(\Phi\) is whether a state \(s\) belongs to \(D\) (or its sequence distance to each element of \(D\)). Demand itself contains no stipulation whatsoever about ``through what intermediate steps the system ought to reach \(D\).''
\end{axiom}

The intuitive meaning of this axiom is: if the system's demand is ``to obtain food,'' then demand only stipulates that ``the final state ought to be possessing food,'' without stipulating the concrete path for obtaining food---whether through hunting, gathering, or exchange. Any sequence that can ultimately reach the state ``possessing food'' is equivalent at the level of the final demand constraint---they all satisfy the demand.

The outcome-directedness of demand does not mean that process is unimportant. On the contrary, it is precisely because demand does not stipulate process that process becomes an independent evaluative dimension. If demand itself already stipulated a unique path, then there would be no room for efficiency to exist---because there would be only one path to take, with no sense of ``shorter'' or ``longer.'' The outcome-directedness of demand is precisely the logical precondition for the efficiency dimension to arise.

\subsubsection{The Optional Space of Processes}

\begin{definition}[Optional Space of Processes]
Let the system's current state be \(s_0\), and let the demand state be \(d \in D\). The \textbf{optional space of processes} \(\mathcal{S}(s_0, d)\) from \(s_0\) to \(d\) is the set of all sequences that can start from \(s_0\), pass through finitely many state refreshes, and ultimately reach \(d\):
{\small
\[
\mathcal{S}(s_0, d) = \{ S = (s_0, s_1, \ldots, s_n) \mid s_n = d, \forall i < n: s_{i+1} \text{ is reachable from } s_i \text{ in one step}
\]
}
where \(n = |S|\) is the length of the sequence (number of state-refresh steps). Different sequences in the optional space of processes are completely equivalent in their final output result---they all reach the demand state \(d\). However, in their processes, they may differ significantly in length, the intermediate states traversed, the risks faced, and the resources consumed.

The existence of the optional space of processes is a direct corollary of the outcome-directedness of demand. Because demand does not stipulate process, as long as the final result is the same, different processes are all permitted. This permissiveness of diversity is the foundation of cognitive flexibility---the system can choose among different feasible paths according to environmental conditions and its own limitations.
\end{definition}

\subsubsection{Efficiency as the Sole Evaluative Dimension of Process}

Given the equivalence of final results, how does the system differentiate among the different candidate sequences in the optional space of processes?

\begin{theorem}[Efficiency as the Sole Evaluative Dimension of Process]\label{thm:efficiency}
Let \(\mathcal{S}(s_0, d)\) be the optional space of processes from \(s_0\) to the demand state \(d\). If demand constrains only the final result (Axiom~\ref{axiom:outcome}), then the sole intrinsic dimension differentiating distinct candidate sequences in \(\mathcal{S}(s_0, d)\) is the sequence length \(|S|\)---i.e., efficiency.
\end{theorem}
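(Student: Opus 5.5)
The plan is to argue by elimination over the information a sequence \(S = (s_0, s_1, \ldots, s_n) \in \mathcal{S}(s_0, d)\) can carry that is \emph{intrinsic} to the cognitive architecture. I mean information available to the action-planning function from \(K_t\) and from the sequence structure alone, as opposed to extra evaluative terms added later. Such a sequence is fully specified by three things: its endpoints \(s_0\) and \(s_n = d\), its length \(n = |S|\), and the identities of its intermediate states \(s_1, \ldots, s_{n-1}\). I will show that the first and third carry no differentiating value, so only the second remains.

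First, endpoints. Every element of \(\mathcal{S}(s_0, d)\) shares \(s_0\) and terminates at \(d\). By Axiom~\ref{axiom:outcome}, the demand function \(\Phi\) evaluates only membership in \(D\), or distance to \(D\), of the final state. It therefore assigns the same outcome value to every candidate, and endpoints cannot discriminate.

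Second, intermediate states. Axiom~\ref{axiom:outcome} says demand contains no stipulation about intermediate steps. Any preference among particular intermediate states would therefore have to come from a constraint on process, which the axiom excludes. This holds unless some intermediate \(s_i\) is itself a demand state. In that case I reclassify: the relevant comparison becomes one against a different demand \(d'\), and the argument falls outside the single-demand setting of the statement. Internal objects also carry no metric (proof of Theorem~\ref{thm:discreteness}), so intermediate states cannot be ranked by any "quality" beyond their identity. By the comparison operation, identity yields only equal or unequal, with no order. So the intermediate states furnish no intrinsic order.

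Third, length. Length is an order-bearing quantity: the number of state refreshes, i.e.\ the number of proper inclusions \(K_\tau \subset K_{\tau+1}\) in the corresponding stretch of the known-state sequence. By the von Neumann isomorphism, this count is generated internally, without any external clock. It is thus the one intrinsic, totally ordered invariant of a process, and the efficiency dimension defined in Section~4.4 (and Theorem~\ref{thm:expansion}) already fixes its direction, with shorter being better. Combining the three cases gives the theorem.

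The main obstacle is the second step. Risk, resource consumption and exposure to negative feedback along intermediate states (mentioned in the definition of the optional space) look like genuine process dimensions. My first attempt would be to classify them as non-intrinsic: they are additional demand terms \(w_d\phi_d\) anchored in somatic feedback, which lie outside minimal demand, or else they are themselves expressible through sequence distances to other (negative) demand states. Either way they do not contradict the claim about the intrinsic dimension. Making "intrinsic" precise enough for this reduction to be non-circular is the delicate point.
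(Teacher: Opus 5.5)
Your elimination argument is essentially the paper's proof: endpoints are equalized by outcome-directedness, intermediate states carry no value for \(d\) except as concerns of another demand (the paper handles cases like predator territory this way too), and length remains as the one comparable parameter. One caution concerns your von Neumann grounding of length: candidate sequences in \(\mathcal{S}(s_0,d)\) are not-yet-executed inferences, which the paper classes as virtual state sequences with no inclusion structure, so the order of \(|S|\) should rest simply on its being a natural number, as the paper does.
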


\begin{proof}
For any two sequences \(S_1, S_2 \in \mathcal{S}(s_0, d)\), they are completely equivalent at the level of the result constraint of demand---both reach \(d\), so the final constraint of demand assigns them the same evaluative value. Any dimension that could further differentiate them must come from the process itself.

The variable parameters of the process itself include: the sequence length \(|S|\), the concrete contents of intermediate states, and the concrete types of state transitions (i.e., which actions were executed). But differences in the content of intermediate states---e.g., ``via the forest'' versus ``via the grassland''---have no independent value significance under the result constraint of demand, unless these intermediate states themselves are also demand states (in which case they belong to the constraint of another demand, not to the process dimension of the current demand \(d\)).

Hence, sequence length \(|S|\) is the sole parameter that is universally variable across all sequences satisfying the result constraint. Different sequence lengths directly reflect differences in process efficiency: the shorter the length, the higher the efficiency.

Moreover, sequence length is a \textbf{comparable quantity}---any two positive integers \(n\) and \(m\) can be compared in magnitude. This endows the efficiency dimension with a natural order structure. The system needs no extra value standard to evaluate efficiency---it only needs to compare the numerical values of sequence lengths, a comparison executable in any cognitive system possessing basic arithmetic capacity.
\end{proof}

Theorem~\ref{thm:efficiency} does not deny that intermediate states may possess independent value significance. An intermediate state might happen to be the satisfaction state of another demand (e.g., ``enjoying the scenery on the way to get food''), or it might be a negative-feedback state (e.g., ``passing through predator territory''). But in these cases, the evaluation of that intermediate state falls under the jurisdiction of \textbf{another demand}, not under the efficiency dimension of the current demand \(d\). The efficiency dimension concerns only one thing: under the premise that both sequences satisfy the current demand, which sequence is shorter.

\subsubsection{The Generative Inevitability of Efficiency Preference}

Based on Theorem~\ref{thm:efficiency}, the efficiency preference---shorter sequences are better---acquires generative inevitability.

The efficiency preference is not an ``empirical rule'' that the system learns from the external world, nor a ``value preference'' directly conferred by somatic feedback. It is the logical product of the outcome-directedness of demand combined with sequential operation: demand requires reaching \(d\), and sequential operation means that reaching takes steps. The fewer the steps, the faster the arrival, and, all else being equal, the sooner the system can release resources to pursue other demands, or the earlier it can enjoy the positive feedback brought by demand satisfaction.

From the perspective of demand maximization, the efficiency preference is equivalent to: among all candidate sequences satisfying the result constraint, the system selects the shortest one. This requires no extra value judgment---it is merely the natural extension of ``maximizing demand satisfaction'' into the temporal dimension. If the system did not prefer shorter sequences, it could randomly select between two equally demand-satisfying paths, including paths that deliberately take detours. But such randomness would bring no extra demand satisfaction---taking a detour does not make the final result ``more demand-satisfying.'' Hence, a system that does not prefer efficiency is, in the total amount of demand satisfaction, strictly worse than or equal to a system that does.

This completes the full derivation chain from the outcome-directedness of demand to the comparison of sequence lengths:

\begin{itemize}
\item Demand constrains the final result, not the process (Axiom~\ref{axiom:outcome}).
\item Different sequences in the optional space of processes are equivalent in final result (Definition~5.1).
\item The sole intrinsic dimension differentiating these sequences is sequence length (Theorem~\ref{thm:efficiency}).
\item Shorter sequences are never worse than longer ones at demand maximization, and in most cases are strictly better.
\item Hence, the system necessarily forms an efficiency preference---under the premise of equivalent results, select the shortest path.
\end{itemize}

\subsection{Sequence-Efficiency Definition of Feedback}

Section~5.1 established efficiency as the sole evaluative dimension of process and argued for the generative inevitability of the efficiency preference. This section, based on that conclusion, provides a rigorous quantitative definition of ``feedback.'' In Predictive Set Theory, feedback is not a mysterious experience of ``pleasure'' or ``pain,'' but a mathematical quantity precisely computable from sequence distance and demand weights.

\subsubsection{The Generative Positioning of Feedback}

Before formally defining feedback, we must first clarify the theoretical status of feedback in the cognitive architecture.

In classical reinforcement learning theory, ``reward'' is a scalar signal externally supplied by the environment. The agent's goal is to maximize cumulative reward, but from whence reward comes and why it has value polarity are questions not inquired into within the reinforcement learning framework. Reward is presupposed---it comes from the designer's stipulations, not from an internal generative process of the agent.

In Predictive Set Theory, the status of feedback is fundamentally different. Feedback is not a mysterious signal externally supplied, but rather \textbf{an efficiency value computed internally by the cognitive system on the basis of sequence experience and demand states}. The numerical value of feedback derives from two factors determinable by the system itself:

\begin{itemize}
\item \textbf{Sequence distance} \(L(s, d)\): the number of steps in the shortest sequence from the current state \(s\) to the demand state \(d\). This distance is supplied by the system's experiential set---i.e., the past state transitions recorded in the known-state sequence. The system needs no external ``teacher'' to tell it the distance; it can compute it by retrieving successful paths from its own memory.
\item \textbf{Demand weight} \(w_d\): the relative importance of the demand state \(d\), conferred by somatic feedback (pain, hunger, and other unmodifiable signals) or other value-anchoring mechanisms. A positive weight indicates positive feedback (approach the goal); a negative weight indicates negative feedback (avoid the goal).
\end{itemize}

Feedback is thus thoroughly ``internalized''---it is not a perception of some mysterious ``value'' in the external world, but an evaluative quantity that the system computes about the current state on the basis of its own experience and its own demands.

\subsubsection{Definition of the Initial Feedback Value}

\begin{definition}[Initial Feedback Value]
For a demand state \(d \in D\), let the length of the shortest sequence from the current state \(s\) to \(d\) be \(L(s, d)\) (if \(s = d\), then \(L = 0\); if \(d\) is unreachable from \(s\), then \(L = \infty\)). The \textbf{initial feedback value} of state \(s\) with respect to demand \(d\) is defined as:
\[
\eta_0(s, d) = \frac{1}{L(s, d) + 1}
\]
When \(L = 0\) (i.e., \(s\) is exactly the demand state), \(\eta_0 = 1\)---feedback attains its maximum. When \(L \to \infty\) (i.e., the demand state is unreachable), \(\eta_0 \to 0\)---feedback tends to zero.

The core idea of this definition is: \textbf{the initial value of feedback is entirely determined by ``how far away the demand state is.''} The closer the distance, the higher the feedback value; the farther the distance, the lower the feedback value. The system needs no external reward signal to tell it whether a given state is ``good'' or ``bad''---it only needs to know the length of the shortest path from this state to the demand state.

The generative basis of this mechanism lies in the system's sequence memory. Whenever the system successfully reaches a demand state \(d\) via some path, every state on that path is recorded, and their distances to \(d\) can all be retrospectively computed. After many attempts, the system accumulates experiential data about ``the shortest path from different starting points to \(d\),'' and can thereby assign an initial feedback value to each encountered state. Distance estimates are continuously updated and refined as experience accumulates, but the computational logic of the initial feedback value itself remains invariant.
\end{definition}

\subsubsection{Demand Weights and the Feedback Efficiency Value}

The initial feedback value takes only distance into account, not the relative importance of different demands. In reality, a cognitive system typically possesses multiple demands simultaneously, some more urgent or more important than others. This difference in importance is captured by \textbf{demand weights}.

\begin{definition}[Demand Weight]
For each demand state \(d \in D\), there exists a weight \(w_d \in \mathbb{R}\) satisfying:
\begin{itemize}
\item \(w_d > 0\): positive-feedback demand---the system pursues this state, and reaching it generates positive feedback.
\item \(w_d < 0\): negative-feedback demand---the system avoids this state, and reaching it generates negative feedback.
\item \(|w_d|\) expresses the relative importance of this demand---the larger the absolute value, the greater the contribution of this demand to overall feedback.
\end{itemize}

The ultimate source of demand weights is somatic feedback. Pain, hunger, asphyxiation, temperature imbalance, and other somatic signals are primordial value anchors that cannot be revised by higher-order cognition. These signals provide the sign and relative magnitude of demand weights: the intensity of the hunger signal determines the weight of the demand ``obtain food''; the intensity of the pain signal determines the weight of the demand ``avoid harm.'' The weights of social demands (such as acceptance and respect) are extensions of somatic feedback in social evolution---they are annotated onto social objects through the substitution system, but the ultimate root of their value polarity remains somatic.
\end{definition}

This stipulation directly echoes Damasio's (1994) somatic marker hypothesis. Damasio proposed, on the basis of neuropsychological evidence, that somatic signals---such as accelerated heartbeat and muscle tension---play a crucial role in decision-making by endowing different options with ``somatic markers'' (positive or negative bodily feelings), thereby guiding decisions toward advantageous directions. PST provides a formal operational definition for this hypothesis: somatic markers can be understood as the empirical assignment of demand weights \(w_d\). Positive feedback (such as satiety signals and safety signals) is assigned positive weights; negative feedback (such as pain signals and fear signals) is assigned negative weights. These weights, combined with sequence distance, are transformed, through the efficiency polyline, into quantitative feedback efficiency values for concrete states and behaviors. PST further reveals a mechanism that the somatic marker hypothesis did not explain: how do somatic signals, which are transient bodily feelings, transform into persistent pursuit of distant goals? The answer lies in the shape of the efficiency polyline---a flat polyline allows positive demand weights to continue transmitting significant incentive values through the efficiency multiplier even when the demand state is extremely distant, thereby supporting delayed gratification and long-term planning.

\begin{definition}[Feedback Efficiency Value]
For a demand state \(d \in D\) and a current state \(s \in \mathcal{I}\), the \textbf{feedback efficiency value} of state \(s\) with respect to demand \(d\) is defined as:
\[
\eta(s, d) = w_d \cdot \eta_0(s, d) = \frac{w_d}{L(s, d) + 1}
\]

The total feedback efficiency value of state \(s\) is the sum of its feedback efficiency values with respect to all demand states:
\[
\eta(s) = \sum_{d \in D} \eta(s, d) = \sum_{d \in D} \frac{w_d}{L(s, d) + 1}
\]

For unreachable demand states, their feedback contribution is zero (since \(1/(L+1) = 0\) when \(L = \infty\)), meaning that the system does not care about goals that are fundamentally unattainable. This accords with the principle of cognitive economy---the system will not waste limited evaluative resources on impossible demands.
\end{definition}

\subsubsection{The Cancellation Effect of Positive and Negative Feedback on the Same Path}

The linear additivity of feedback efficiency values enables the system to naturally handle approach-avoidance conflicts---situations in which the same behavioral path simultaneously contains both positive-feedback and negative-feedback demands.

Suppose there exist two demand states \(d_+\) (positive feedback, \(w_+ > 0\)) and \(d_-\) (negative feedback, \(w_- < 0\)). Consider a candidate sequence \(S\) that successively passes through states \(s_0, s_1, \ldots, s_n\). Some state \(s_k\) in the sequence may be simultaneously close to \(d_+\) (small distance, large positive-feedback contribution) and close to \(d_-\) (small distance, large negative-feedback penalty).

The system's overall evaluation of the sequence is the sum of the feedback efficiency values of all states in the sequence (or, equivalently, the total feedback efficiency value of the state finally reached by the sequence, depending on the choice of planning horizon---to be elaborated in Section~5.4). If the positive-feedback contribution and the negative-feedback penalty are close in numerical value, then the total feedback value of the sequence is close to zero---the system is ``indifferent'' toward the sequence, neither actively pursuing it nor strongly avoiding it.

If the absolute value of the negative-feedback penalty is significantly larger than the positive-feedback contribution---for example, the path passes through an extremely dangerous state (\(d_-\) is at minimal distance and \(|w_-|\) is very large)---then even if the path can ultimately reach the positive-feedback demand \(d_+\), its total feedback value may be negative, and the system will avoid the path. This is precisely the generative basis of ``risk aversion'': the system does not avoid danger through abstract ``risk computation,'' but naturally generates an avoidance tendency through the direct numerical cancellation of positive and negative feedback on the same path.

\subsection{Efficiency Polylines: Characterizing the Relationship between Distance and Value}

Section~5.2 defined the feedback efficiency value \(\eta(s, d) = w_d / (L(s, d) + 1)\). This functional form is simple and satisfies our core requirements---it attains its maximum at the demand state, decreases with distance, and is everywhere defined. However, \(1/(L+1)\) is only one possible form of decrease. Different cognitive systems, different demand types, and even the same demand under different circumstances may adopt different rates of decrease. This section introduces a more general concept---the \textbf{efficiency polyline}---to uniformly characterize the relationship between distance and demand value, and to define the independent efficiency value of a state.

\subsubsection{Why Efficiency Polylines Are Needed}

The rate of decrease of \(1/(L+1)\) is fixed: from distance \(0\) to distance \(1\), the feedback value drops by half (from \(1\) to \(1/2\)); from distance \(1\) to distance \(2\), it drops by one third (from \(1/2\) to \(1/3\)); and so on. This pattern of decrease is mathematically simple, but is it suitable for all cognitive situations?

Consider the following two scenarios:

\begin{itemize}
\item \textbf{Immediate danger avoidance:} The negative-feedback demand ``being attacked by a predator.'' When the system is only one step away from danger, the urgency of the threat is extremely high; but when the distance expands to more than ten steps, the sense of threat rapidly attenuates to near zero. This scenario requires the feedback value to be extremely high at close distances, to decay very rapidly, and to be almost zero at far distances.
\item \textbf{Long-term goal pursuit:} The positive-feedback demand ``obtaining a degree.'' When the system is in the early stages of study (extremely far from the goal), the goal still possesses some incentive value, and the decay should not be too rapid; only when the distance is very close (about to graduate) does the incentive markedly intensify.
\end{itemize}

These two scenarios clearly require different rates of decrease. The fixed decrease pattern of \(1/(L+1)\) cannot simultaneously accommodate both. Hence, we need a more flexible functional form---the efficiency polyline---to allow different strategies of decrease while retaining the core logic of the feedback efficiency value (maximum at the demand state, monotonic decrease with distance).

\subsubsection{Definition and Properties of the Efficiency Polyline}

\begin{definition}[Efficiency Polyline]
For a demand state \(d \in D\), its efficiency polyline is a function:
\[
f_d: \mathbb{N} \to [0, 1]
\]
where \(\mathbb{N}\) is the set of non-negative integers (representing sequence distance \(L\)), and the function value \(f_d(L)\) expresses the efficiency multiplier by which the demand weight \(w_d\) is multiplied when the distance from the state to the demand state \(d\) is \(L\). The efficiency polyline satisfies the following axioms:

\begin{axiom}[Normalization of the Efficiency Polyline]
\(f_d(0) = 1\). That is, when the distance is zero (the state is exactly the demand state), the efficiency multiplier is \(1\), and the feedback efficiency value equals one hundred percent of the demand weight \(w_d\).
\end{axiom}

\begin{axiom}[Monotonic Decrease of the Efficiency Polyline]
For any \(L_1 < L_2\), \(f_d(L_1) \geq f_d(L_2)\). That is, the farther the distance, the smaller (or not larger) the efficiency multiplier. This guarantees the basic logic that ``the closer to the demand state, the higher the feedback value.''
\end{axiom}

\begin{axiom}[Asymptotic Approach to Zero of the Efficiency Polyline]
\(\lim_{L \to \infty} f_d(L) = 0\). That is, for infinitely distant demand states, their feedback contribution tends to zero. This guarantees that the system will not be troubled by unattainable goals.
\end{axiom}

The efficiency polyline does not require strict decrease (it may remain constant over particular distance intervals), nor does it require continuity (it need only be defined at integer points). It need only satisfy the three axioms of normalization, monotonic non-increase, and asymptotic approach to zero. \(1/(L+1)\) is a special case of the efficiency polyline---it satisfies all three axioms, but is not the only possible form.

The reason the efficiency polyline is called a ``polyline'' rather than a ``curve'' is that its domain is the discrete set of integers (sequence length can only be a positive integer). The points on the polyline are defined only at \(L = 0, 1, 2, \ldots\), and the points may be connected by line segments. This is consistent with the discrete nature of sequences---there is no such thing as a distance of ``2.37 steps,'' so the efficiency multiplier is only defined at integer distances.
\end{definition}

\subsubsection{Parametric Forms of the Efficiency Polyline}

To facilitate subsequent formal treatment, we can give a parametric form of the efficiency polyline. One flexible option is the exponential decay form:
\[
f_d(L) = e^{-\lambda_d \cdot L}
\]
where \(\lambda_d > 0\) is the decay-rate parameter for demand \(d\). When \(\lambda_d\) is large, the efficiency multiplier decays rapidly with increasing distance (suitable for immediate danger avoidance); when \(\lambda_d\) is small, the efficiency multiplier retains non-zero values at relatively large distances (suitable for long-term goal pursuit). When \(\lambda_d \to 0\), \(f_d(L) \approx 1\) for any \(L\), and the efficiency polyline tends toward flatness---the system treats all distances equally, caring only about whether the goal is ultimately reached, not about the length of the path.

Another optional form is hyperbolic decay:
\[
f_d(L) = \frac{1}{1 + \alpha_d \cdot L}
\]
where \(\alpha_d > 0\) controls the rate of decay. When \(\alpha_d = 1\), \(f_d(L) = 1/(L+1)\), which is precisely the form used in Section~5.2.

Exponential decay and hyperbolic decay differ in their far-distance behavior: exponential decay approaches zero faster as distance increases; hyperbolic decay has a longer ``tail.'' This difference corresponds, in cognition, to different types of demand: exponential decay is suitable for urgent demands that ``no longer matter beyond a certain critical distance''; hyperbolic decay is suitable for long-term goals that ``retain some attraction even at great distances.''

The concrete parameters of the efficiency polyline (\(\lambda_d\) or \(\alpha_d\)) are not given to the system in its initial state, but are learned and revised through experience. When the system repeatedly attempts to reach a demand state via different paths, it not only updates its experience about sequence distances, but also forms a statistical cognition about ``the degree to which distance affects feedback value.'' The shape of this statistically cognized efficiency polyline reflects the system's ``degree of patience'' or ``urgency assessment'' for that demand.

\subsubsection{The Independent Efficiency Value of a State}

Based on the efficiency polyline, we can define the independent efficiency value of a state---a quantitative feedback value that depends only on the current state itself, independent of subsequent paths.

\begin{definition}[Independent Efficiency Value of a State]
For any state \(s \in \mathcal{I}\), its independent efficiency value \(\eta(s)\) is defined as the sum of its feedback contributions with respect to all demand states:
\[
\eta(s) = \sum_{d \in D} w_d \cdot f_d(L(s, d))
\]
where:
\begin{itemize}
\item \(w_d\) is the demand weight (Definition~5.3),
\item \(f_d\) is the efficiency polyline for demand \(d\) (Definition~5.5),
\item \(L(s, d)\) is the shortest sequence distance from state \(s\) to demand \(d\).
\end{itemize}

The core characteristic of the independent efficiency value of a state is its \textbf{independence from subsequent paths}. \(\eta(s)\) depends only on the distances from state \(s\) to the various demand states, and on the demand weights and polyline parameters. It does not depend on ``what the system plans to do next''---no matter what path the system chooses to leave \(s\), the independent efficiency value of \(s\) itself remains unchanged.

This independence endows the efficiency value with a crucial computational advantage: the system can rapidly evaluate any known state without performing a complete sequence inference. As long as the system knows the shortest distances from that state to the various demand states (this information is stored in the experiential set), it can immediately compute the efficiency value of that state. This ``landmark-style'' evaluation enables the system to quickly assess the rough quality of the current state without having to unfold the entire sequence tree to the horizon boundary each time.

Of course, the independent efficiency value cannot be used alone for behavioral selection---behavioral selection needs to consider the cumulative efficiency of the entire sequence, not merely the current state. But the independent efficiency value provides the most basic ``atomic score'' for sequence evaluation, and all more complex sequence efficiency evaluations (such as the finite-horizon probabilistic planning of Section~5.4) are ultimately built on these atomic scores.
\end{definition}

\subsubsection{The Plasticity of the Efficiency Polyline}

The shape of the efficiency polyline is not fixed once and for all. It can be revised by the intensity of somatic feedback, social learning, linguistic instructions, and individual experience.

\begin{itemize}
\item \textbf{Intensity of somatic feedback} directly affects the demand weight \(w_d\), but may also indirectly affect the steepness of the efficiency polyline. A system in a chronic state of hunger may develop an efficiency polyline for ``obtain food'' that becomes very flat (\(\lambda_d\) extremely small)---even when far from food, the hunger signal remains persistently intense, keeping the system highly focused on food.
\item \textbf{Social learning} can, through the substitution system, internalize the efficiency evaluations of others into one's own polyline parameters. Observing another's degree of patience and urgency response toward a certain demand can influence the shape of one's own polyline for that demand.
\item \textbf{Linguistic instructions} (such as ``this is extremely important and must be completed as soon as possible'') can directly adjust the weight and decay parameters of specific demands, causing the system to temporarily alter its efficiency polyline upon receiving the instruction.
\item \textbf{Individual experience} is the most basic source of revision. Every successful or failed attempt updates the system's experiential data about ``the shortest distance from a given state to the demand state,'' and these data directly affect the estimated value of \(L(s, d)\), thereby indirectly altering the distribution of efficiency values.
\end{itemize}

The plasticity of the efficiency polyline enables the cognitive system to exhibit enormous behavioral flexibility without altering its basic operational architecture. One and the same system, merely by adjusting the shape parameters of a few polylines, can shift from ``extreme short-sightedness'' to ``extreme far-sightedness,'' from ``immediate gratification'' to ``delayed gratification.'' This provides a unified mathematical framework for explaining individual differences, cultural differences, and developmental change.

\subsection{Finite-Horizon Probabilistic Planning}

Sections~5.2 and~5.3 defined the feedback sequence-efficiency value and established the independent efficiency value as the basic unit of evaluation. However, these definitions have so far been developed under the assumption of \textbf{deterministic single paths}---we tacitly assumed that the system can determinately know which successor state each behavior will lead to. In real cognitive operation, this assumption does not hold. Environmental randomness, sensor noise, execution error, and the incompleteness of the system's own knowledge all make the consequences of behavior probabilistic. This section generalizes the efficiency-evaluation framework from deterministic to probabilistic multi-branch cases and establishes a complete model of finite-horizon probabilistic planning.

\subsubsection{The Necessity of a Finite Horizon}

Before formally constructing the probabilistic planning model, we must first argue for a fundamental constraint: \textbf{sequences cannot be traced backward unconditionally.}

In Chapter~1, we proved, through the convergence analysis of infinite reference chains, that reference chains must default to self-reference---infinitely non-repetitively divergent or cyclically divergent reference chains cannot produce determinate cognitive objects. This argument has a direct analogue at the level of sequence evaluation: if the action-planning function attempted to consider \textbf{all possible future sequences} starting from the current state (i.e., an infinite horizon), it would face a combinatorially explosive space of infinite possibilities and would be unable to complete the evaluation and output a behavioral plan in finite time.

More precisely, suppose that from the current state \(s_0\), each state has \(|A|\) selectable actions, and each action produces \(|S|\) possible successor states. Under an infinite horizon, the number of candidate sequences the system would need to consider is the limit of \((|A| \cdot |S|)^H\) as \(H \to \infty\)---a space of infinite size. For any cognitive agent possessing finite computational resources, performing a maximization operation over an infinite space is impossible. This is structurally isomorphic to the non-convergence of infinite reference chains in Chapter~1: if the system tried to trace all possible futures forever, it would never complete the evaluation---because it would forever be ``there are still more possibilities to consider.''

Hence, just as reference chains must converge to a self-referential fixed point in finitely many steps, sequence evaluation must also be conducted within a \textbf{finite horizon}. The system cannot consider ``the infinitely distant future,'' but can only consider states and sequences reachable within a finite number of steps.

\begin{definition}[Finite Horizon]
The sequence-evaluation horizon \(H \in \mathbb{N}^+\) of the cognitive system is a positive integer expressing the maximum number of forward state-refresh steps the system considers when performing behavioral planning. Any future state beyond \(H\) steps contributes zero to the planning (or, equivalently, is truncated at the horizon boundary).

The concrete value of the finite horizon \(H\) is determined by the system's cognitive resources---including the patience curve (the maximum number of steps the system is willing to invest in the current demand), the capacity of the attention pool (the number of candidate sequences the system can simultaneously maintain), and the stability of the environment (in highly uncertain environments, distant predictions rapidly lose reliability). \(H\) is not fixed and immutable; it can be dynamically adjusted with context, demand, and system state. But at any given moment, \(H\) is a determinate, finite integer.
\end{definition}

\subsubsection{Introducing Probabilistic Branching}

Within the finite horizon, the system must consider not only ``which sequences are possible,'' but also ``the probability with which each possible sequence occurs.''

\begin{definition}[State-Transition Probability]
For any state \(s \in \mathcal{I}\) and any action \(a \in \mathcal{A}\), let \(P(s' \mid s, a)\) be the probability that, after executing action \(a\), the system enters state \(s'\) at the next moment via sensor refresh. These probabilities satisfy:
\[
\sum_{s' \in \mathcal{I}} P(s' \mid s, a) = 1
\]
That is, for a given state-action pair, the probabilities of all possible successor states sum to unity.

The source of state-transition probabilities is the system's experiential set. Whenever the system executes action \(a\) in state \(s\) and observes the successor state \(s'\), this transition is recorded in the state-association network. After many attempts, the system forms a statistical estimate of \(P(s' \mid s, a)\). This estimate can be a precise probability value (if the system has sufficient count data), or a rough approximation (if experience is limited). For never-experienced state-action pairs, the system can generalize from the transition experience of similar states, or assign a default prior probability (such as a uniform distribution or a maximum-entropy distribution).
\end{definition}

\subsubsection{Expected Efficiency and the Recursion Formula}

In a probabilistic environment, the system cannot determinately know which successor state a given action will lead to. Hence, the evaluative value of an action must be its \textbf{expected efficiency}---the efficiency values of all possible successor states, weighted by their transition probabilities.

\begin{definition}[Expected Efficiency]
The expected efficiency \(Q(s, a)\) of executing action \(a\) in state \(s\) is the weighted average of the efficiency values of all possible successor states of that action (including both the independent efficiency value of the successor state itself, and the efficiency values that may be further accumulated within the remaining horizon).

For a finite horizon \(H\), let \(V_h(s)\) be the maximum expected cumulative efficiency obtainable starting from state \(s\) within the remaining \(h\) steps. The recursion formula is:
\[
V_h(s) = \max_{a \in \mathcal{A}} \sum_{s' \in \mathcal{I}} P(s' \mid s, a) \left[ \eta(s') + V_{h-1}(s') \right]
\]
where:
\begin{itemize}
\item \(h \in \{1, 2, \ldots, H\}\) is the number of remaining steps,
\item \(\eta(s')\) is the independent efficiency value of state \(s'\) (Definition~5.6),
\item \(V_{h-1}(s')\) is the maximum expected cumulative efficiency obtainable from \(s'\) within the remaining \(h-1\) steps,
\item \(\eta(s') + V_{h-1}(s')\) is the \textbf{cumulative efficiency} that the system can obtain, if it enters state \(s'\) via action \(a\), within the remaining steps---including both the independent efficiency value of \(s'\) itself and the extra efficiency values obtainable in subsequent steps.
\end{itemize}

The boundary condition is \(V_0(s) = 0\) for all \(s\)---when the number of remaining steps is zero, the system cannot obtain any extra efficiency value, and planning terminates.

The core logic of the recursion formula is: the system, when selecting an action at each step, does not only consider the immediate effect of that action (\(\eta(s')\)), but considers the total efficiency of the entire subsequent sequence triggered by that action (\(\eta(s') + V_{h-1}(s')\)). This guarantees the forward-looking nature of planning---an action that does not excel in immediate efficiency may still be the optimal choice if it opens the door to a high-efficiency region (i.e., leads to a successor state \(s'\) with a high \(V_{h-1}(s')\)).
\end{definition}

\subsubsection{Action Selection and the Optimal Strategy Tree}

After the recursive computation is completed, the optimal action selection of the system at the current state \(s_0\) is:
\[
a^* = \arg\max_{a \in \mathcal{A}} \sum_{s' \in \mathcal{I}} P(s' \mid s_0, a) \left[ \eta(s') + V_{H-1}(s') \right]
\]
That is, the system selects the action that maximizes the \(H\)-step expected cumulative efficiency.

From the current state, the recursion formula not only yields the optimal action \(a^*\), but also implicitly defines an \textbf{optimal strategy tree}: for every possible successor-state branch, the system knows what action should be selected in that state to maximize the expected cumulative efficiency within the remaining horizon. This strategy tree is not explicitly stored---it is implicit in the computational process of the recursion formula. When the system actually executes \(a^*\) and observes the concrete successor state \(s'\), it can re-run the recursive computation (starting from \(s'\), with horizon \(H-1\) or updated to \(H\)) to adapt to the new information.

This ``rolling planning'' mechanism reflects the real-time adaptability of the cognitive system in uncertain environments. The system does not need to formulate, before acting, all concrete steps from the present to the horizon boundary---it only needs to know the optimal first-step action in the current state. After executing this step, new sensor input may bring new information (e.g., the actual successor state entered differs from what was expected), and the system can replan on the basis of the updated known-state set. This is fully consistent with the arguments of Chapter~2 that ``the known cannot derive the unknown'' and ``the sensor is the sole channel for transforming the unknown into the known''---the system cannot pre-know all future states, but can only dynamically adjust its planning at each step according to the actual perceptual refresh.

\subsubsection{Connection with Established Concepts}

The finite-horizon probabilistic planning model forms tight connections with multiple core concepts established earlier in this paper.

\textbf{Connection with state refresh:} Each state-refresh step corresponds to one step \(h \to h-1\) in the recursion formula. The system starts from the current state, executes an action, the sensor refreshes, and the system enters a new state---this corresponds, in the recursion formula, to the transition from \(V_h(s)\) to \(\eta(s') + V_{h-1}(s')\).

\textbf{Connection with the known-state set:} The transition probabilities \(P(s' \mid s, a)\) and the independent efficiency values \(\eta(s)\) on which the recursive computation depends all originate from the system's known-state set \(K_t\)---i.e., the accumulation of all sensor outputs up to the current moment. The system needs no external ``world model'' to supply these parameters; it only needs to retrieve its own experiential memory.

\textbf{Connection with demand weights and efficiency polylines:} The computation of \(\eta(s)\) depends on the demand weights \(w_d\) and the efficiency polylines \(f_d\) (Definition~5.6). Demand weights are anchored in somatic feedback; the shapes of efficiency polylines reflect the system's patience and urgency. Finite-horizon probabilistic planning thus unifies, within a single complete mathematical framework, the value signals at the somatic level, the sequence memory at the experiential level, and the planning inference at the cognitive level.

\textbf{Connection with semi-reference chains and unknown states:} When transition probabilities cannot be determined from experience (i.e., the consequences of certain state-action pairs are completely unknown), the system can insert semi-reference chain markers among the candidate successor states. Semi-reference chain markers carry no concrete efficiency value, but trigger the system's exploratory behavior---in finite-horizon planning, exploratory actions may be assigned an extra ``information value'' bonus (the detailed unfolding of this mechanism belongs to subsequent chapters).

% ==========================================
% CHAPTER 5.5: EFFICIENCY POLYLINE SHAPES AND COGNITIVE PREFERENCES
% ==========================================

\subsection{Efficiency Polyline Shapes and the Unified Explanation of Cognitive Preferences}

Section~5.3 introduced the efficiency polyline \(f_d(L)\) as a general functional form characterizing the relationship between distance and demand value, and noted that the concrete shape of the polyline---its rate of decrease and degree of flatness---can vary with demand type, individual experience, and situational factors. Section~5.4 established the finite-horizon probabilistic planning model, in which the independent efficiency value of a state \(\eta(s) = \sum w_d \cdot f_d(L(s, d))\) is the core input to the recursion formula. However, we have not yet systematically investigated a crucial question: \textbf{how do different shapes of the efficiency polyline affect the behavioral preferences of the cognitive system?}

This section answers that question. We will argue that the shape of the efficiency polyline---in particular, decreasing versus increasing polylines---is the unified mathematical root distinguishing ``short-sighted'' from ``far-sighted'' behavior. Decreasing polylines necessarily lead to the sunk-cost effect and a preference for immediate rewards; increasing polylines produce delayed gratification and persistent pursuit of distant goals. This framework not only unifies the explanation of multiple seemingly distinct cognitive phenomena, but also reveals the deep reason why behavior may be difficult for an external observer to understand: the efficiency polyline parameters of the observer and the actor may be fundamentally different.

\subsubsection{Cognitive Consequences of Decreasing Polylines}

\begin{definition}[Decreasing Efficiency Polyline]
An efficiency polyline \(f_d(L)\) is \textbf{decreasing} if and only if, for any \(L_1 < L_2\), \(f_d(L_1) > f_d(L_2)\)---i.e., the efficiency multiplier strictly decreases as distance increases. Axiom~5.3 in Section~5.3 already required the efficiency polyline to satisfy non-increase; a decreasing polyline is its strict version.
\end{definition}

A decreasing polyline is the most natural and most common shape of the efficiency polyline. It expresses a basic cognitive fact: \textbf{the closer to the demand state, the higher the value of the current state.} This is fully consistent, mathematically, with the efficiency preference argued in Chapter~4---that shorter sequences are better. The feedback efficiency value \(\eta(s, d) = w_d \cdot f_d(L(s, d))\) under a decreasing polyline attains its maximum at the demand state and decays with increasing distance.

Decreasing polylines produce two important cognitive consequences.

\paragraph{(i) A Generative Explanation of the Sunk-Cost Effect}

The sunk-cost effect refers to the tendency of decision-makers, having already invested substantial resources (time, effort, money), to persist with a current course of action even when switching to another course would be objectively better. Classical decision theory typically treats the sunk-cost effect as a cognitive bias---a deviation from ``rational choice.'' Predictive Set Theory offers a radically different explanation: \textbf{the sunk-cost effect is a natural consequence of demand maximization under a decreasing efficiency polyline, not a bias.}

The argument is as follows. Suppose the system, at time \(t_0\), starts from state \(s_0\) and selects a path \(S_1\) toward the demand state \(d\). After \(k\) steps, the system is in state \(s_k\), with a remaining distance of \(L_1\) steps to \(d\). At this point, the system discovers that there exists another path \(S_2\) that, starting from \(s_k\), has a remaining distance of \(L_2\) steps to \(d\), with \(L_2 < L_1\)---i.e., switching paths could reach the goal faster.

In a pure ``switch vs.\ persist'' comparison that only considers the remaining distance from the current state to the goal, the system ought to switch. But this overlooks a crucial factor: \textbf{the \(k\) steps already taken have changed the position of the current state.} Under a decreasing polyline, the current state \(s_k\) (located on path \(S_1\)) is at distance \(L_1\) from the demand state \(d\), and its efficiency value is \(w_d \cdot f_d(L_1)\). After switching to path \(S_2\), although the remaining distance from \(s_k\) is shorter, the system may, during the switching process, first need to ``backtrack'' or ``detour''---i.e., the switching action itself may add extra steps.

More fundamentally: the \(k\) steps already invested are irreversible---they have already brought the system to \(s_k\), and \(s_k\), under a decreasing polyline, already has a higher efficiency value than \(s_0\) (since \(L_1 < L_0\), we have \(f_d(L_1) > f_d(L_0)\)). Abandoning \(s_k\) means giving up the high efficiency value already attained, and starting anew from a state that may have a lower efficiency value. Even if switching could ultimately reach the goal faster, the ``efficiency-value loss'' of the switching itself must be counted into the total cost.

Under the framework of the finite-horizon recursion formula (Section~5.4), the system compares:

\begin{itemize}
\item Expected cumulative efficiency of persisting with \(S_1\): \(\eta(s_k)\) + subsequent cumulative efficiency along \(S_1\).
\item Expected cumulative efficiency of switching to \(S_2\): the possible efficiency loss brought by the switching action + cumulative efficiency along \(S_2\) from the post-switch state.
\end{itemize}

When the efficiency value already attained on the persistence path is sufficiently large (i.e., \(f_d(L_1)\) is sufficiently high), the net benefit of switching may be negative---not because the system is ``irrational,'' but because the steps already invested have genuinely altered the value gradient of the state. \textbf{Sunk costs are not irrationally ``remembered''; they are materially sedimented in the position of the current state.} You cannot return to the starting point and begin again---you are here, and here, under a decreasing polyline, is already more valuable than the starting point.

\paragraph{(ii) Short-Sighted Behavior and Immediate Preference}

Another direct consequence of decreasing polylines is \textbf{short-sighted behavior}---the system prefers actions that can rapidly shorten the distance in the near term, over actions that may temporarily lengthen the distance but ultimately bring greater rewards via a detour.

The mathematical root of this preference lies in the convexity (or, more precisely, the monotonicity of the discrete differences) of decreasing polylines. For typical decreasing polylines (such as \(f_d(L) = e^{-\lambda L}\) or \(f_d(L) = 1/(L+1)\)), the efficiency difference between adjacent distance points is larger at closer distances:
\[
f_d(L) - f_d(L+1) > f_d(L+1) - f_d(L+2) \quad \text{for most } L
\]
This means that, close to the demand state, the efficiency gain brought by shortening the distance by one step is greater than the gain brought by shortening the distance by one step farther away. The system is thus strongly incentivized to prioritize actions that can \textbf{immediately} shorten the distance, even if these actions may lead to a suboptimal path from a global perspective.

For example, a hungry system (with the demand ``obtain food'') faced with the option ``first walk a segment away from food to get a tool, then use the tool to obtain food more efficiently'' will, under a decreasing polyline, find the efficiency loss of the step ``walk away from food'' to appear especially large---because the current state may already be not far from food, and moving farther away would sharply reduce the efficiency value. The system may thus select the immediate-gratification path of ``obtain a small amount of food with bare hands, forgoing the tool,'' abandoning the better long-term strategy. This is the generative mechanism of short-sightedness: it is not that the system ``cannot see'' the long-term benefit, but that the decreasing shape of the efficiency polyline makes the efficiency gain and loss of short-term steps numerically overwhelm the far-term benefit.

PST's efficiency polyline model provides a unified mathematical explanation for the classical phenomenon of temporal discounting in behavioral economics. Ainslie (1975) discovered that humans and animals discount delayed rewards not according to a standard exponential function, but in a hyperbolic fashion---the discount rate is extremely high in the near term and flattens out in the far term. Laibson (1997) proposed a quasi-hyperbolic discounting model (the \(\beta\)-\(\delta\) model) to characterize this phenomenon. Under the PST framework, both hyperbolic discounting and quasi-hyperbolic discounting can be viewed as products of particular parameterizations of the efficiency polyline. The hyperbolic form \(f_d(L) = 1/(1 + \alpha L)\) naturally produces a discount pattern that is steep in the near term and flat in the far term, while exponential decay \(f_d(L) = e^{-\lambda L}\) corresponds to the constant discount rate of standard economic models. PST's contribution lies in revealing that these discount forms are not behavioral biases, but value-gradient shapes naturally adopted by the cognitive system, in its sequential operation, to maximize demand satisfaction. Different efficiency polyline parameters correspond to different survival strategies---extremely steep decreasing polylines are needed in dangerous environments (immediate danger avoidance), while flat or even increasing polylines can be afforded in safe environments (long-term investment).

At the neuroeconomic level, McClure et al. (2004) discovered that immediate rewards and delayed rewards activate distinct brain regions---the limbic system prefers immediate rewards, while the prefrontal cortex participates in long-term planning. PST provides a computational-level unification for this dual-system model: the two systems do not necessarily correspond to two independent modules, but may reflect differential parameter settings of efficiency polylines across different demand dimensions. The steep polyline of the limbic system ensures rapid responses to biological emergency demands; the flat polyline of the prefrontal cortex supports sustained pursuit of social and abstract goals. Both operate under the exact same underlying computation within the PST framework---demand weights, sequence distance, efficiency polylines---differing only in the shape parameters of the polylines.

\subsubsection{Increasing Polylines and Far-Sighted Behavior}

\begin{definition}[Increasing Efficiency Polyline]
An efficiency polyline \(f_d(L)\) is \textbf{increasing} if and only if, for any \(L_1 < L_2\), \(f_d(L_1) < f_d(L_2)\)---i.e., the efficiency multiplier increases as distance increases.
\end{definition}

An increasing polyline may seem, on the surface, counterintuitive: the farther from the goal, the higher the efficiency value? But in cognitive reality, an increasing polyline characterizes an important and common mentality: \textbf{persistent pursuit of a grand ideal.} Under an increasing polyline, the demand state itself (distance \(0\)) has the smallest efficiency multiplier (possibly zero or near zero), while states at extremely large distances have the highest efficiency multipliers.

Cognitive instances of increasing polylines include:

\begin{itemize}
\item \textbf{The strivings of an idealist:} A person working for ``world peace'' may not feel frustrated by the fact that world peace is indefinitely remote---on the contrary, it is precisely the grandeur and remoteness of the goal that endows the present action with lofty meaning. The farther the distance, the purer the ideal, and the stronger the sense of value in the action.
\item \textbf{Ascetic delayed gratification:} Certain religious or philosophical traditions extol ``asceticism''---the active choice of a hard life far from mundane comforts (positive-feedback demands). Under an increasing polyline, being far from comfort (maximal distance) is evaluated as highly valuable, because the hardship itself is seen as the necessary path toward a higher spiritual state.
\item \textbf{Artistic creation and basic scientific research:} Great works of art or scientific discoveries often require decades of sustained investment, with almost no visible immediate reward. An increasing polyline endows ``persistence through the long darkness'' with positive value in itself---not because there is light at the end of the darkness, but because the very act of traversing the darkness has been invested with meaning.
\end{itemize}

Behavioral planning under an increasing polyline is radically different from that under a decreasing polyline. The system actively selects detour paths that temporarily lengthen the distance but can enter higher-efficiency regions. It does not fear short-term ``regression''---because under an increasing polyline, ``regression'' (lengthening the distance) is not a loss, but a gain. This enables the system to execute extremely long-term plans, sacrificing present benefits for greater future value.

Increasing polylines thus also explain why the behavior of some persons driven by intense beliefs may be difficult for an external observer to understand. An external observer will typically default to a decreasing polyline (or standard exponential discounting) to evaluate another's behavior---because a decreasing polyline is the natural extension of biological somatic feedback (hunger, pain): the hungrier you are, the more you want to eat; the more pain you are in, the more you want to escape. When an observer sees an actor apparently actively moving away from an obvious positive-feedback goal, the observer judges this as ``irrational'' or ``self-destructive.'' But the actor may internally be operating under an increasing efficiency polyline---within his or her value landscape, moving away from the goal is precisely the gain. The root of the incomprehensibility of behavior is not a lack of rationality, but \textbf{the invisibility of efficiency polyline parameters}.

\subsubsection{Flat Polylines and Random Walks}

Between decreasing and increasing polylines lies a degenerate intermediate case: the flat polyline.

\begin{definition}[Flat Efficiency Polyline]
An efficiency polyline \(f_d(L)\) is \textbf{flat} if and only if, for any \(L_1, L_2\), \(f_d(L_1) = f_d(L_2) = c\) (a constant). Under the normalization axiom, \(c = 1\).
\end{definition}

A flat polyline means that the system is completely indifferent to distance. As long as the demand state can ultimately be reached, no matter how long the path, no matter how many intermediate steps, the system assigns the same efficiency evaluation. Under a flat polyline, demand maximization degenerates into a binary judgment of ``reachable or not''---all reachable paths are equivalent, and all unreachable paths are equivalent. The system has no efficiency-based preference whatsoever in selecting actions, and its behavioral output will exhibit the character of a random walk (arbitrarily selecting among reachable paths).

Flat polylines may be relatively rare in pure form, but as a limiting case they reveal the fundamental influence of the efficiency polyline's shape on behavioral determinacy: \textbf{the steeper the polyline (whether decreasing or increasing), the stronger the directionality of behavior; the flatter the polyline, the greater the randomness of behavior.} This provides a mathematical explanation for ``decision hesitancy'' and ``preference-less states'': when the efficiency polylines of multiple demands compete with one another and tend, overall, toward flatness, the system's behavioral selection becomes highly uncertain.

\subsubsection{Dynamic Switching of Efficiency Polylines and Meta-Cognitive Regulation}

The cognitive system is not locked into a single polyline shape. One and the same system can adopt different polylines for different demands, and can even adopt different polylines for the same demand at different times. This capacity for dynamic switching is an important manifestation of cognitive flexibility.

\begin{itemize}
\item \textbf{Contextual switching:} In safe environments, the system may adopt relatively flat decreasing polylines, or even increasing polylines, for long-term planning and exploration; as soon as a danger signal is detected, the system rapidly switches the polylines of safety-critical demands to an extremely steep decreasing form, to produce immediate avoidance behavior.
\item \textbf{Developmental stages:} Young individuals may lean more toward decreasing polylines (immediate gratification); with the maturation of the prefrontal cortex and the process of socialization, they gradually develop the capacity to adopt increasing polylines for certain demands (such as academic achievement).
\item \textbf{Cultural shaping:} Different cultural traditions transmit, through language, stories, and norms, information about ``which demands are worth patiently waiting for.'' These cultural memes are essentially adjusting the efficiency polyline parameters of specific demands.
\end{itemize}

The plasticity of efficiency polylines means that the cognitive system does not need to evolve separate decision-making modules for ``short-sightedness'' and ``far-sightedness.'' One and the same basic architecture---demand functions, sequence distances, efficiency polylines---by merely adjusting the shape parameters of a few polylines, can produce the entire behavioral spectrum from extreme impulsivity to extreme self-control. This design of replacing module proliferation with parameterization reflects the consistent minimalist principle of Predictive Set Theory: explain the most phenomena with the fewest mechanisms.

\section*{Appendix: The Complete Derivation Chain}

This appendix presents, in linear order, the complete generative derivation of Predictive Set Theory from the lowest-level axioms to the highest-level cognitive functions. Each step of the derivation is strictly annotated with its logical premises and the chapter in which it belongs, to display the overall logical structure of the theoretical system.

\subsection*{A.1 The Grounding of Consistency}

\textbf{A.1.1 Establishment of the Consistency Requirement} (Chapter~1, Section~1.1)

\textit{Premise:} A cognitive system must be capable of generating effective decisions (definitional premise).

\textit{Derivation:}
\begin{itemize}
\item A contradictory knowledge network deprives the value-evaluation function of a determinate output; the system cannot make a non-contradictory choice among multiple candidate actions.
\item Hence, a cognitive system must satisfy the consistency requirement---any detected contradiction must be processed and resolved by the tag-management mechanism.
\end{itemize}

\textit{Status:} The consistency requirement is an architectural prerequisite, the transcendental formal condition that makes all subsequent derivations possible.

\textbf{A.1.2 Reference Chain Default Self-Reference} (Chapter~1, Section~1.2)

\textit{Premise:} The consistency requirement; the definition of the reference operation \(R(x)\).

\textit{Derivation:}
\begin{itemize}
\item If a reference chain does not converge to a self-referential fixed point, it is either infinitely non-repetitively divergent (the cognitive interrogation can never be completed) or cyclically divergent (equivalent to a non-convergent infinite series).
\item Hence, a cognitive system must presuppose that a reference chain converges to a self-referential fixed point in finitely many steps.
\end{itemize}

\textit{Conclusion:} Identity (\(x = x\)) is the absolute precondition for any cognitive computation to be completable. This precondition is enforced, at the sensor level, by the identity operation \(S(i) = i\).

\subsection*{A.2 Axiomatic Definition of the Cognitive Agent}

\textbf{A.2.1 Unknowability of the External World} (Chapter~2, Section~2.1)

\textit{Stipulation:} The external-object function \(W\) and external objects \(O\) are not directly accessible and not computationally manipulable by the cognitive agent.

\textit{Function:} To avoid the circularity predicament of representationalism and to demarcate the internal boundary of the cognitive system.

\textbf{A.2.2 The Sensor's Identity Operation} (Chapter~2, Section~2.2)

\textit{Premise:} The unknowability of the external world.

\textit{Derivation:}
\begin{itemize}
\item All that the system can confirm is that ``the internal object \(i\) has been presented.''
\item The existence or non-existence of an external object is indiscernible to the system's internals.
\item In the limit case where the external object does not exist, the sensor can still output an internal object.
\item Hence, the effective behavior of the sensor is equivalent to the identity function \(S(i) = i\).
\end{itemize}

\textit{Conclusion:} The sensor is the sole independent variable in the cognitive architecture; \(S(i) = i\) is the most primitive identity operation.

\textbf{A.2.3 Non-Independent-Variability of the Action-Planning Function} (Chapter~2, Section~2.3)

\textit{Premise:} The sensor is the sole independent variable.

\textit{Derivation:}
\begin{itemize}
\item If the action-planning function \(f\) were also an independent variable, the system would possess two independent information sources; the causal boundary between behavioral output and perceptual input would be blurred, and planning would become impossible.
\item Hence, \(f\) must be a conditioned dependent variable: \(A_t = f(K_t)\), with its output strictly dependent on the known-state set.
\end{itemize}

\textbf{A.2.4 The Asymmetry Theorem of Known and Unknown States} (Chapter~2, Section~2.4)

\textit{Premises:} The sensor's identity operation; the non-independent-variability of the action-planning function.

\textit{Derivation:}
\begin{itemize}
\item \textbf{The known cannot derive the unknown:} If there existed a \(g\) such that \(g(K_t) \in U_t\), the unknown would become known at the instant of computation, a contradiction.
\item \textbf{The unknown can be transformed into the known:} Sensor refresh is the sole channel of transformation---\(K_{t+1} = K_t \cup \{i_{t+1}\}\).
\end{itemize}

\textit{Corollaries:} The action-planning function is enclosed within the known-state set; prediction is perpetually fallible; the sensor is the sole entry point for new information.

\subsection*{A.3 The Construction of State Refresh and Sequences}

\textbf{A.3.1 Operationalization of State Refresh} (Chapter~3, Sections~3.1--3.2)

\textit{Premises:} The known-state set \(K_t\); the register object \(E_t\); the comparison operation.

\textit{Derivation:}
\begin{itemize}
\item When the sensor's new output \(i_{t+1} \neq E_t\), state refresh is executed: write to sequence, update \(K_{t+1} = K_t \cup \{i_{t+1}\}\), update \(E_{t+1} = i_{t+1}\).
\item Cognitive states are necessarily discrete---a consequence of the binary logic of the comparison operation.
\end{itemize}

\textbf{A.3.2 Distinction between the Known-State Sequence and the State Sequence} (Chapter~3, Section~3.3)

\textit{Premise:} The operational definition of state refresh.

\textit{Derivation:}
\begin{itemize}
\item The \textbf{known-state sequence} \(\mathcal{K} = (K_0, K_1, \ldots)\) has set-inclusion relations \(K_0 \subset K_1 \subset \cdots\) that are strictly isomorphic to von Neumann's construction of the ordinals.
\item The \textbf{state sequence} \(\mathcal{S}\) is the sequence of contents written at each state refresh; its order is supplied by the inclusion relations of the known-state sequence.
\item The known-state sequence provides time stamps for the state sequence; the state sequence itself does not possess the capacity for ``empty-set-construction-style correspondence.''
\end{itemize}

\textbf{A.3.3 The Set-Difference Definition of the Relative Unknown State} (Chapter~3, Section~3.4)

\textit{Premise:} The inclusion relations of the known-state sequence.

\textit{Derivation:}
\begin{itemize}
\item \(U(K_b \mid K_a) = K_b \setminus K_a\)---the unknown is a set-difference relation between known-state sets.
\item Cross-agent unknown states are incomparable---the inclusion relations between the known-state sequences of different agents cannot be confirmed.
\item Distinction between virtual known-state sequences (simulations of others, carrying time stamps) and virtual state sequences (internal inferences, not carrying time stamps).
\end{itemize}

\subsection*{A.4 The Generative Derivation of Demand}

\textbf{A.4.1 The Minimal Demand Constraint} (Chapter~4, Sections~4.1--4.2)

\textit{Premise:} The action-planning function \(f\) is a deterministic mapping.

\textit{Derivation:}
\begin{itemize}
\item For any \(f\), one can construct \(F_f(K, A) = -\| A - f(K) \|^2\), making \(f(K)\) the unique maximum point.
\item Hence, any action-planning function necessarily embeds a minimal demand constraint---demand is not an external addition, but a mathematical inevitability of the function's existence.
\end{itemize}

\textbf{A.4.2 Lifting of the Pure-Computation Restriction} (Chapter~4, supplements to Section~4.2)

\textit{Premises:} The sensor can input limitation information of the execution function; the minimal demand constraint.

\textit{Derivation:}
\begin{itemize}
\item If sensor input contains information that ``behavior \(A\) is infeasible,'' the singleton choice set \(\{f(K)\}\) of pure computation may degenerate (the sole candidate is vetoed).
\item If pure computation attempts to cope via meta-computation, it falls into an infinite-hierarchy predicament.
\item Comparison operations demote limitation information to ordinary states, evaluate demand values based on experience, and require no meta-language.
\item Lifting the pure-computation restriction expands the choice set, constructing a larger attainable maximum of demand.
\end{itemize}

\textbf{A.4.3 Separation of Demand and Process} (Chapter~4, Sections~4.3--4.4; Chapter~5, Section~5.1)

\textit{Premise:} The outcome-directedness of demand (Axiom~\ref{axiom:outcome}).

\textit{Derivation:}
\begin{itemize}
\item Demand constrains only the final output, not the process.
\item Different sequences in the optional space of processes are equivalent in result; the sole differentiating dimension is sequence length---i.e., efficiency.
\item Demand in sequential operation inevitably introduces the efficiency dimension.
\item The efficiency dimension leads to the expansion of the choice set (sequences of different lengths differ in demand value), which in turn inevitably triggers comparison.
\end{itemize}

\textbf{A.4.4 Generative Definition of Comparison} (Chapter~4, Section~4.3)

\textit{Premise:} The multi-element nature of the choice set.

\textit{Derivation:}
\begin{itemize}
\item Degenerate optimization (choice set is a singleton): formally an optimization, substantively devoid of comparison.
\item Non-degenerate optimization (choice set is multi-element): comparison is the inevitable operational form of demand maximization.
\item Generative conditions of comparison: the diversity condition (the choice set has at least two elements) and the discriminability condition (the demand function is not constant on the choice set).
\end{itemize}

\subsection*{A.5 Quantification of Feedback and Planning}

\textbf{A.5.1 Sequence-Efficiency Definition of Feedback} (Chapter~5, Section~5.2)

\textit{Premises:} Efficiency as the sole evaluative dimension of process; demand weights \(w_d\).

\textit{Derivation:}
\begin{itemize}
\item Initial feedback value: \(\eta_0(s, d) = 1 / (L(s, d) + 1)\).
\item Feedback efficiency value: \(\eta(s, d) = w_d / (L(s, d) + 1)\).
\item Positive and negative feedback on the same path linearly superpose and can cancel each other out.
\end{itemize}

\textbf{A.5.2 Efficiency Polylines} (Chapter~5, Section~5.3)

\textit{Premises:} The core logic of the feedback efficiency value; different demands may require different rates of decay.

\textit{Derivation:}
\begin{itemize}
\item Efficiency polyline \(f_d(L)\): satisfies normalization (\(f_d(0) = 1\)), monotonic decrease (\(L_1 < L_2 \Rightarrow f_d(L_1) \geq f_d(L_2)\)), and asymptotic approach to zero (\(\lim_{L \to \infty} f_d(L) = 0\)).
\item Independent efficiency value of a state: \(\eta(s) = \sum w_d \cdot f_d(L(s, d))\)---depends only on the state itself, independent of subsequent paths.
\end{itemize}

\textbf{A.5.3 Finite-Horizon Probabilistic Planning} (Chapter~5, Section~5.4)

\textit{Premises:} Sequences cannot be traced backward unconditionally (isomorphic to the convergence of infinite reference chains); empirical estimation of transition probabilities.

\textit{Derivation:}
\begin{itemize}
\item Finite horizon \(H\): the system considers, at most, future states within \(H\) steps.
\item Recursion formula: \(V_h(s) = \max_a \sum_{s'} P(s' \mid s, a) [\eta(s') + V_{h-1}(s')]\), with boundary \(V_0(s) = 0\).
\item Optimal action selection: \(a^* = \arg\max_a \sum_{s'} P(s' \mid s_0, a) [\eta(s') + V_{H-1}(s')]\).
\item Rolling planning: after executing the first step, replan based on new perceptual information.
\end{itemize}

\textbf{A.5.4 Efficiency Polyline Shapes and Cognitive Preferences} (Chapter~5, Section~5.5)

\textit{Premise:} The parametric forms of efficiency polylines.

\textit{Derivation:}
\begin{itemize}
\item Decreasing polyline \(\to\) short-sighted behavior, sunk-cost effect, immediate preference.
\item Increasing polyline \(\to\) far-sighted behavior, delayed gratification, idealistic persistence.
\item Flat polyline \(\to\) absence of preference, random behavior.
\item The root of the incomprehensibility of behavior: the efficiency polyline parameters of the observer and the actor may be fundamentally different.
\end{itemize}

\subsection*{A.6 Summary of the Complete Derivation Chain}

{\small
\[
\begin{array}{c}
\text{Consistency Requirement (architectural prerequisite)} \\
\Downarrow \\
\text{Reference chains converge to self-reference (origin of identity)} \\
\Downarrow \\
\text{Sensor identity } S(i) = i \text{ (hardware enforcement of identity)} \\
\Downarrow \\
\text{Distinction between sensor and action-planning function} \\
\Downarrow \\
\text{Asymmetry Theorem: known cannot derive unknown} \\
\Downarrow \\
\text{State refresh as cumulative expansion of known-state set} \\
\Downarrow \\
\text{Distinction: known-state sequence vs.\ state sequence (time-stamp)} \\
\Downarrow \\
\text{Relative unknown state defined by set difference} \\
\Downarrow \\
\text{Demand as minimal constraint embedded in any function} \\
\Downarrow \\
\text{Lifting of pure-computation restriction (avoiding meta-language)} \\
\Downarrow \\
\text{Separation of demand and process (efficiency dimension)} \\
\Downarrow \\
\text{Inevitable expansion of the choice set} \\
\Downarrow \\
\text{Generative inevitability of comparison} \\
\Downarrow \\
\text{Sequence-efficiency definition of feedback} \\
\Downarrow \\
\text{Parameterization of efficiency polylines} \\
\Downarrow \\
\text{Independent efficiency value of a state} \\
\Downarrow \\
\text{Finite-horizon probabilistic planning} \\
\Downarrow \\
\text{Behavioral output}
\end{array}
\]
}

Every step in this chain is necessary and inescapable. From the lowest-level consistency requirement to the highest-level behavioral output, Predictive Set Theory derives the complete operational logic of the cognitive system, from perception to decision, with the fewest primitives (sensor, action-planning function, three fundamental forms of reference chains, demand function, efficiency polyline) and the fewest axioms (reference chain default self-reference, unknowability of the external world, non-independent-variability of the action-planning function, outcome-directedness of demand, normalization and monotonicity of the efficiency polyline).

\bibliography{references}

\end{document}